\documentclass[conference]{IEEEtran}
\IEEEoverridecommandlockouts

\usepackage{cite}
\usepackage{amsmath,amssymb,amsfonts}
\usepackage{algorithmic}
\usepackage{graphicx}
\usepackage{textcomp}
\usepackage{xcolor}
\usepackage{tikz}

\usepackage{filecontents}

\usepackage{amsmath,amssymb,amsfonts,amsthm}
\usepackage{algorithmic}
\usepackage{algorithm}
\usepackage{graphicx}
\usepackage{float}
\usepackage{textcomp}
\usepackage{xcolor}
\usepackage{bm}
\usepackage{centernot}

\usepackage[utf8]{inputenc} 
\usepackage[T1]{fontenc}    
\usepackage{hyperref}       
\usepackage{url}            
\usepackage{booktabs}       
\usepackage{amsfonts}       
\usepackage{nicefrac}       
\usepackage{microtype}      
\usepackage{xcolor}         
\usepackage{graphicx}
\usepackage[english]{babel}
\usepackage{bbm}
\usepackage{wrapfig}
\usepackage{enumitem}
\usepackage{bbm}
\usepackage{siunitx}
\usepackage{subcaption}

\theoremstyle{definition}
\newtheorem{definition}{Definition}[section]
\newtheorem{theorem}{Theorem}[section]
\newtheorem{lemma}{Lemma}[section]
\newtheorem{corollary}{Corollary}[section]
\newtheorem{remark}{Remark}[section]
\newtheorem{proposition}{Proposition}[section]
\newtheorem{formulation}{Formulation}[section]

\def\BibTeX{{\rm B\kern-.05em{\sc i\kern-.025em b}\kern-.08em
    T\kern-.1667em\lower.7ex\hbox{E}\kern-.125emX}}

\begin{document}

\title{Certification-Based Differentially Private Learning
}

\author{\IEEEauthorblockN{Mihnea Ghitu}
\IEEEauthorblockA{\textit{Department of Computing} \\
\textit{Imperial College London}\\
London, United Kingdom \\
mihnea.ghitu20@imperial.ac.uk}
\and
\IEEEauthorblockN{Matthew Wicker}
\IEEEauthorblockA{\textit{Department of Computing} \\
\textit{Imperial College London}\\
London, United Kingdom \\
m.wicker@imperial.ac.uk}
}

\maketitle


\begin{abstract}
 
Differential privacy (DP) in machine learning is typically achieved by adding noise to model parameters (\textit{private learning}) or to model outputs (\textit{private prediction}). Recent work uses formal methods, namely abstract interpretation, to provide tighter privacy guarantees, but only for private prediction in classification settings.  In this work, we investigate the use of formal methods as a general tool for tighter privacy analysis. First, we generalize the abstract gradient training (AGT) framework to private prediction in continuous, unbounded regression. Second, by reducing learning in parameterized models to a regression problem over the parameter space, we introduce Abstract Gradient Sampling (AGS), an algorithm that enables reachability-based analysis to provide guarantees for private learning. In both private prediction and private learning, we provide tightened privacy accounting for the AGT framework and a theoretical analysis demonstrating when our smooth sensitivity upper-bounds yield favourable privacy-utility trade-off. In practice, we validate that our regression bounds are tighter than global-sensitivity baselines on regression benchmarks, and, notably, yield the first finite privacy guarantees in settings where global prediction sensitivity is \textit{a priori} unbounded. We also find that under matched conditions, our private learning algorithm can outperform standard private learners.    
 
\end{abstract}
\begin{IEEEkeywords}
Differential Privacy, Formal Verification, Abstract Gradient Training, Private Learning, Private Prediction.
\end{IEEEkeywords}
\section{Introduction}
\looseness=-1
Differential privacy (DP) provides a principled framework for limiting the information revealed about individual data points through the outputs of a computation \cite{dwork2006calibrating, dwork2014algorithmic} and serves as a necessary prerequisite to responsible deployment of learning algorithms in many domains \cite{bommasani2021opportunities}. Privacy is largely achieved through \emph{private learning}, where privacy is enforced during model training by perturbing gradients \cite{abadi2016deep}, objectives \cite{zhang2012functional}, or parameters \cite{chaudhuri2011differentially} despite the tendency of private learning algorithms to substantially reduce model utility \cite{tramerdifferentially}. An alternative that allows performant, non-private models to be used with privacy guarantees is \emph{private prediction} \cite{dwork2018privacy}. In private prediction, privacy guarantees are enforced only at inference time by perturbing the model's outputs according to their sensitivity. This paradigm is particularly appealing in modern machine learning pipelines, where models are often pre-trained, fine-tuned, or deployed in settings where retraining under DP constraints is impractical or undesirable \cite{choquette2021capc}.

While private prediction leaves the model parameters untouched, the noise added to model outputs often erodes the utility of model predictions \cite{van2020trade}. Thus, a central challenge in private prediction is reducing the magnitude of output-privatizing noise by obtaining tighter sensitivity bounds for model outputs \cite{dwork2018privacy, wickercertification}. The use of naive global sensitivity bounds is typically overly conservative, leading to excessive noise and worse utility compared with private learning \cite{van2020trade}. Recent promising work has connected certification-based techniques to smooth sensitivity, yielding improved sensitivity bounds \cite{nissim2007smooth, sosnin2025abstract}. At its core, this offers a way to adapt the noise magnitude to the local geometry of the data and the model, and presents a compelling direction for not only tightening privacy analysis but also leveraging formal methods to inform algorithm design \cite{tsay2026relaxation}.

Unfortunately, the current works that connect formal methods to tightened privacy analysis do so only for private prediction and are restricted to analyzing classification tasks \cite{wickercertification, sosnin2025abstract}. To fully appreciate and unlock the potential of advances in formal methods for improved privacy analysis, we begin by extending prior works to provide tighter privacy guarantees in unbounded and/or continuous regression tasks. To achieve this, we develop a novel certification-based upper-bound on the notion of smooth sensitivity \cite{nissim2007smooth}. We then observe that learning algorithms in parametric models (i.e., algorithms that return parameter vectors) can themselves be viewed as unbounded, continuous regressors. We capitalize on this connection by developing \textit{Abstract Gradient Sampling (AGS)}. Prior work such as~\cite{lipdp} uses Lipschitz bounds to calibrate privacy noise. AGS instead places formal methods at the core of private learning by employing abstract interpretation to soundly bound the sensitivity of each update.

By generalizing prior work that establishes this connection, we conduct a more complete empirical study of where formal methods can improve state-of-the-art privacy guarantees. Theoretically, we provide conditions under which the noise distribution from certification-based algorithms dominates those based on global sensitivity. Notably, our algorithms are, to the best of our knowledge, the first that provide finite \emph{pure differential privacy} ($\varepsilon$-DP) private prediction guarantees when there is not a known, \textit{a priori} upper bound on the global sensitivity. Further, we demonstrate that while pure-DP algorithms are unable to use advanced composition, techniques from formal methods are able to provide composition-like bounds tightening without resorting to approximate DP. 

Empirically, we validate our approach across a wide range of benchmarks. We begin by studying the effect of our private prediction bounds across a toy linear regression setup and the California housing dataset. Across these benchmarks we find that our novel private regression technique, AGT-R, bests global sensitivity private prediction Private Aggregation of Teacher Ensembles (PATE) \cite{papernot2018scalable} by several orders of magnitude and additionally opens up a novel paradigm of tightening privacy analysis by improving formal methods techniques. To benchmark the effectiveness of the AGS algorithm, we study MNIST and the IMDB movie sentiment classification datasets. Similarly, we find that for small values of $\epsilon$ AGS significantly outperforms DP-SGD under matched algorithm conditions (same batch size, learning rate, etc). While the models for which our bounds apply are limited compared to the models analyzable by other private learning algorithms, we hope that the novel connection between  formal methods and privacy in conjunction with the initial results demonstrating dominance in a subset of cases will spur future works in this direction. We summarize our paper's contributions as follows:

\begin{itemize}[leftmargin=*, itemsep=0pt, topsep=0pt]
    \item We extend the Abstract Gradient Training (AGT) framework to regression settings by deriving a novel over-approximation of smooth sensitivity for continuous-valued outputs. 

    \item By connecting learning in parameterized models to unbounded, continuous regression, we present a novel algorithm termed Abstract Gradient Sampling (AGS) which is the first algorithm that uses formal methods as the basis for a differentially private learning algorithm.
    
    \item We theoretically show when the AGT upper bound on smooth sensitivity yields tighter DP compared to both the Laplace and Gaussian mechanisms in regression, and discuss where the AGS algorithm can enhance and improve differentially private stochastic gradient descent (DP-SGD). 


    \item We empirically validate our approach across a hierarchy of model and dataset complexities, from linear regression and deep neural networks to foundation models. In all settings, our method can yield tighter privacy guarantees than existing DP baselines. Our work furthers the emerging connection between formal methods and privacy to the mutual benefit of both sub-fields.
\end{itemize}







\section{Related Work}

\noindent\textbf{Private Prediction for Regression} Differential privacy guarantees are typically achieved by adding noise — scaled to an algorithm's sensitivity — to the algorithm's output \cite{dwork2006calibrating}. When the algorithm is a non-privately trained machine learning model, this can be done by perturbing the model's predictions, a setting termed private prediction \cite{dwork2018privacy}. Global sensitivity, the largest possible change in output between neighbouring datasets \cite{dwork2014algorithmic}, often yields loose bounds; tighter analyses can be obtained by reducing the global sensitivity of a given mechanism \cite{bassily2018model, papernot2016pate}, by exploiting properties of the specific dataset \cite{nissim2007smooth}, or by relaxing to approximate differential privacy \cite{dwork2006our, steinke2015between}. In this work, we focus on differentially private prediction in the regression setting. When the regressor's output is quantised, the PATE framework applies \cite{papernot2016pate, papernot2018scalable}; when it is continuous, one can fall back on standard mechanisms based on global sensitivity, shard-and-aggregate, or data-dependent analysis \cite{dwork2014algorithmic}. To the best of our knowledge, no existing method provides finite privacy guarantees for a non-privately trained regression model whose output is neither quantised nor has an a priori known global sensitivity bound.

\noindent\textbf{Private Learning Mechanisms} Although private prediction offers conceptual advantages over private learning \cite{dwork2018privacy}, its empirical performance lags behind state-of-the-art private learning mechanisms \cite{bassily2018model, van2020trade}. Early work in private learning, like early work in private prediction, added noise to the outputs (and objective functions) of ERM algorithms \cite{chaudhuri2011differentially}. Injecting noise during training instead yielded substantially better utility–privacy trade-offs, most notably via DP-SGD \cite{abadi2016deep}. Subsequent work has steadily tightened the analysis of private learning algorithms \cite{mironov2017renyi, wang2019subsampled, koskela2022individual}, with these advances made widely accessible through programming interfaces such as Opacus \cite{yousefpour2021opacus}. In contrast to these probabilistic refinements, we tighten the analysis of DP-SGD using formal methods.

\noindent\textbf{Combining Formal Methods and Privacy} Formally proving that a machine learning model meets a given specification was largely popularized in response to adversarial robustness \cite{katz2017reluplex, huang2017safety}. However, it has recently been extended to other important notions of trustworthiness \cite{wicker2020probabilistic, benussi2022individual, wicker2022robust, dang2025certifiably} including differential privacy \cite{sosnin2025abstract}. The current use of formal methods to tighten privacy analysis of machine learning models is restricted to private prediction in classification settings. By adopting and extending the general smooth sensitivity-based framework of  \cite{sosnin2025abstract} we push certification-based privacy analysis to private learners.

\section{Preliminaries} \label{sec:prelims}
\looseness=-1
\textbf{Notation. }We denote a general machine learning model as a function $f$ parameterized by $\theta \in \mathbb{R}^p$ which maps from an input space to an output space  $f^{\theta}: \mathcal{X} \rightarrow \mathcal{Y}$ (often with $\mathcal{X} = \mathbb{R}^m$ and $\mathcal{Y} = \mathbb{R}^n$). We consider supervised learning in the \textit{regression} setting with a labeled dataset $D = \{x^{(i)}, y^{(i)}\}_{i=1}^N$ and further assume that $\mathcal{Y}$ is a \textit{continuous} metric space with distance metric $dist(\cdot, \cdot)$. 

\subsection{Privacy}

Differential privacy (DP) is a widely employed formal privacy guarantee, which is deeply rooted in statistical databases \cite{dwork2006calibrating} and has been adopted as the standard for privacy guarantees in machine learning \cite{dwork2014algorithmic,abadi2016deep}. DP ensures that the output of an algorithm applied to two adjacent sets of data is statistically indistinguishable. We formally define DP as follows: 

\begin{definition}[$(\epsilon, \delta)$-DP \cite{dwork2014algorithmic}] \label{def:approx-dp} A randomized mechanism $\mathcal{M}$ is $(\epsilon, \delta)$-differentially private if, for all pairs of adjacent datasets $D, D' \in \mathcal{D}$ and any subset $S \subseteq \text{Supp}(\mathcal{M})$: 
\[
\mathbb{P}\left(\mathcal{M}(D) \in S\right) \leq e^{\epsilon} \mathbb{P}\left(M(D') \in S\right) + \delta.
\]
\end{definition}

$(\epsilon, \delta)$-DP is also known as \textit{approximate} DP, with \textit{pure} or $\epsilon$-DP arising when $\delta = 0$. Within machine learning, there are two distinct approaches to privacy in private prediction and private learning.

\subsubsection{Private Learning} 

A private learning algorithm is one that returns a learned model that itself is differentially private. In machine learning, this is achieved by applying Definition~\ref{def:approx-dp} taking $D$ to be the training dataset, $\mathcal{M}$ to be the learning algorithm, and $\text{Supp}(\mathcal{M})$ to be the space of all possible parameters. To ensure the final parameters, $\theta$, satisfy the definition, Differentially Private Stochastic Gradient Descent (DP-SGD) \cite{abadi2016deep} modifies standard SGD through two core operations at each training step: firstly, it uses a gradient clipping operation to bound the contribution of each training point and adds noise proportional to the clipping parameter to ensure the training algorithms output distribution satisfies Definition~\ref{def:approx-dp} \cite{abadi2016deep}.
Alternative approaches to private learning include DP-ERM \cite{chaudhuri2011differentially}, where output and objective perturbation at training time are shown to satisfy DP, and PATE \cite{papernot2016pate}, which partitions data into disjoint subsets (thus reducing the model's sensitivity through sharding) and employs a teacher-student framework, where the student model distills the teacher's noisy aggregate vote into a publicly trained model, converting private predictions into a private learning procedure. Moreover, substantial advancements have built on top of DP-SGD to enhance privacy analysis including the use of advanced composition theorems \cite{dwork2010boosting} and the moments accountant \cite{abadi2016deep,wang2019subsampled}, among others \cite{bun2016concentrated, pan2024differential}.

\subsubsection{Private Prediction} 
Private learning is not without drawbacks; chiefly, private learning can cause substantial utility degradation which often requires practitioners to retrain their algorithm at different privacy levels potentially compromising their privacy analysis and incurring considerable computational cost \cite{papernot2021hyperparameter, koskela2023practical}. Private prediction represents an alternative to achieving differential privacy in machine learning. The central idea is to first learn a model $f^\theta(x)$ without any privacy protection and to subsequently add noise $\eta$ to the results of each prediction such that Definition~\ref{def:approx-dp} is satisfied \cite{dwork2018privacy}.

To ensure that the released prediction  satisfies $(\epsilon, \delta)$-DP, one typically calibrates the added noise, $\eta$, to the prediction's \textit{sensitivity}.
The most common and general notion of sensitivity is the global $\ell_p$ sensitivity with respect to the cardinality of the symmetric difference $d(D, D') = |D \,\triangle\, D'|$: 
\begin{definition}[Global $\ell_p$ Sensitivity \cite{dwork2014algorithmic}]\label{def:gl1s}A function $f: \mathcal{D} \rightarrow \mathbb{R}^n$ has global $\ell_p$ sensitivity
\[
\Delta_p f = \max_{D,D' \in \mathcal{D}, d(D, D')= 1} \|f(D) - f(D')\|_p.
\]
\end{definition}
Throughout this work we shall refer to the $\ell_1$ sensitivity as $\Delta f$.
With access to the global sensitivity of a prediction, both pure and approximate DP can be satisfied by the following well-known mechanisms: 

\begin{definition}[Laplace $(\epsilon, 0)$-DP Mechanism \cite{dwork2014algorithmic}]
For a function $f: \mathcal{D} \rightarrow \mathbb{R}^n$, the mechanism $\mathcal{M}(D) = f(D) + \eta$ where $\eta \sim \text{Lap}(\Delta f / \epsilon)$ satisfies pure $(\epsilon, 0)$-differential privacy.
\label{def:laplace_mech}
\end{definition}

\begin{definition}[Gaussian Mechanism \cite{dwork2014algorithmic}]
For a function $f: \mathcal{D} \rightarrow \mathbb{R}^n$, the mechanism $\mathcal{M}(D) = f(D) + \eta$ where $\eta \sim \mathcal{N}(0, \sigma^2 \mathbb{I})$ and $\sigma \geq \frac{\Delta_2 f}{\epsilon} \sqrt{2 \ln(1.25/\delta)}$ satisfies $(\epsilon, \delta)$-differential privacy for $\delta \in (0, 1)$.
\label{def:gauss_mech}
\end{definition}

Any mechanism employing the global sensitivity (e.g., Definitions \ref{def:laplace_mech} and \ref{def:gauss_mech}) is data-independent by nature of $\Delta f$ being defined over all possible neighboring datasets $D$ and $D'$. In general, private prediction calibrated to the global sensitivity incurs utility cost exceeding that of private learning \cite{van2020trade}. Though one must be careful to ensure releases remain private, a tighter privacy analysis can be achieved by considering the local sensitivity:
\begin{definition}[Local $\ell_1$ Sensitivity \cite{dwork2014algorithmic}] \label{def:l1_sens}
The local $\ell_1$ sensitivity of a function $f:\mathcal{D} \rightarrow \mathbb{R}^n$ at point $x \in \mathcal{D}$ is defined as $\text{LS}(f, x) = \max_{y: d(x,y) = 1} \|f(x) - f(y)\|_1. $
\end{definition}
As the local sensitivity itself is data-dependent, one cannot directly calibrate noise to the local sensitivity.
%
However, to rectify this, Nissim et. al.  \cite{nissim2007smooth}, proposed a smooth upper bound on this sensitivity with respect to datasets a distance $k$ apart termed the \textit{smooth sensitivity}: 
\begin{definition}[$\beta$-Smooth Sensitivity \cite{nissim2007smooth}] \label{def:beta_ss}
The $\beta$-smooth sensitivity of a function $f:\mathcal{D} \rightarrow \mathbb{R}^n$ at a point $x \in \mathcal{D}$ can be defined as $
    \text{SS}^{\beta}(f, x) = \max_{k \in \mathbb{N}^+} e^{-\beta k} A^k(f,x)
$
where $A^k(f,x) := \max_{y: d(x,y) \leq k} \text{LS}(f, y).$ 
\end{definition}
The major advancement of the smooth sensitivity is the ability to derive associated mechanisms (similar to Definitions \ref{def:laplace_mech} and \ref{def:gauss_mech}) that obtain pure- and approximate-DP. For example, pure-DP can be achieved by adding Cauchy noise calibrated to the smooth sensitivity:
\begin{definition}[Cauchy Mechanism \cite{nissim2007smooth}]
For a function $f: \mathcal{D} \rightarrow \mathbb{R}^n$, the mechanism $\mathcal{M}(D) = f(D) + \eta$ where $\eta \sim \text{Cauchy}(6\,\text{SS}^{\beta}(f,x) / \epsilon)$ and $\forall \beta \leq (\epsilon / 6)$ satisfies $(\epsilon, 0)$-differential privacy.
\label{def:cauchy_mech}
\end{definition}
Similarly, to satisfy approximate-DP, one can use Laplace noise calibrated to the smooth sensitivity:
\begin{definition}[Laplace $(\epsilon, \delta)$-DP Mechanism \cite{nissim2007smooth}]\label{def:laplace_approx_dp_mech} For a function $f: \mathcal{D} \rightarrow \mathbb{R}^n$, the mechanism $\mathcal{M}(D) = f(D) + \eta$, where $\eta \sim \mathrm{Lap}(2\,\mathrm{SS}^\beta(f,x) / \epsilon)$ and $\forall \beta \leq \epsilon / (2 \ln(2/\delta))$ with $\delta \in (0,1)$ satisfies $(\epsilon, \delta)$-differential privacy.
\end{definition}
While Nissim et al. \cite{nissim2007smooth} provide a systematic treatment of admissible noise distributions that yield privacy guarantees based on smooth sensitivity, we focus our attention only on the mechanisms outlined in this section, leaving analysis of our techniques in combination with other mechanisms to future works.

\subsection{Valid Parameter Space Bounds} \label{sec:valid_bounds}

Leveraging formal methods to compute the above smooth sensitivity mechanisms, Sosnin et al.\cite{sosnin2025abstract, wickercertification} cast the local sensitivity of a learning algorithm as a reachability specification that can be verified using abstract interpretation. Their framework computes the parameter envelopes that bound the effect of data modifications during training, thus bypassing the need to optimize over the intractable space of all possible datasets. 
Let $\mathcal{M}(f, \theta_{\text{init}}, D)$ denote a gradient-based algorithm trained on dataset $D$ that returns the final parameters of a model $f$, starting from a fixed initialization $\theta_{\text{init}}$. The parameter envelopes computed by Sosnin et al. \cite{wickercertification, sosnin2025abstract} are termed \textit{valid parameter space bounds}:

\begin{definition}[Valid Parameter-Space Bound \cite{wickercertification}] \label{def:valid_parameter_bounds}
For a nominal dataset $D$ and a distance threshold $k$, a parameter envelope $T_k = [\theta^k_L, \theta^k_U]$ is a valid parameter-space bound if it contains all possible model parameters that result from training on any dataset $\tilde{D}$ formed by up to $k$ additions, removals or substitutions from $D$:
\[ 
\mathcal{M}(f, \theta_{\text{init}}, \tilde{D}) = \tilde{\theta} \in T_k ,\quad \forall \tilde{D} \text{ s.t. } d(D, \tilde{D}) \leq k. 
\]
\end{definition}
\looseness=-1
To compute these intervals using abstract interpretation, Sosnin et al., present \textit{Abstract Gradient Training (AGT)} which leverages bound propagation to construct the envelopes. The algorithm considers batchdx of size $b$ and a per-element gradient clipping threshold $\gamma$. We refer to quantities pertaining to training on the original, unmodified dataset $D$ as \textit{nominal} (i.e., $k=0$). The updates are computed element-wise: AGT aggregates the $b-k$ nominal points' gradients and accounts for the $k$ differing points by assuming they produce gradients equal to $\gamma$ in the most adversarial direction. In this way, it can isolate all reachable parameters to produce final, axis-aligned parameter envelopes $T_k = [\theta_L^k, \theta_U^k]$. 
%
Importantly, this allows the propagation of valid bounds from parameter to output space, acting as a proxy for computing local sensitivity in prediction space. 
\subsubsection*{Shortcomings of Prior Approaches} 
\looseness=-1
The AGT algorithm comes with some downsides. In particular, the bounds provided only apply to private prediction in classification settings. 
Beyond this restrictive setting, the utility of formal-methods-based privacy guarantees remains an open question. Moreover, we highlight that the privacy analysis in \cite{sosnin2025abstract} does not explore the compatibility of AGT and amplification theorems, thus limiting the scale of problems that can be studied with their mechanism. In what follows, we extend the AGT algorithm to regression, discuss the use of amplification and AGT, and provide the first algorithms for using bound propagation for private learning.

\section{Methodology}

\looseness=-1
In this section, we will first introduce the theoretical underpinnings of our novel method, AGT-R, and analyze when is has superiority over global sensitivity.
We will then introduce a novel private learning method, Abstract Gradient Sampling (AGS), which we will similarly prove can be superior to existing approaches.

\subsection{AGT-R} \label{sec:agtr}
We begin by recalling the mechanism presented in Sosnin et al.\cite{sosnin2025abstract, wickercertification} which first uses AGT to compute valid parameter-space bounds via bound propagation for distances (adjacency quantifiers) $k \in \mathcal{K} \subset  \mathbb{N}^+$. The AGT mechanism subsequently employs parameter envelopes $\{T_{k}\}_{k\in \mathcal{K}}$ associated with the adjacency quantifier specified by the subscript. 
Using this notation, we restate their proposed smooth sensitivity bound: 

\begin{theorem}[Sosnin et al. \cite{sosnin2025abstract, wickercertification}]\label{thm:sosnin_binary} Let  $\mathcal{M}(f,\theta_\text{init},D) = \theta$, $T_k = [\theta^k_L,\theta^k_U]$ satisfying Def. \ref{def:valid_parameter_bounds}, and let $f^{\theta}(x)$ denote the prediction of a binary classifier and $\mathbbm{1}(\cdot)$ represent the indicator function. Additionally let $f^{T_k}(x)$ be $1$ when after propagating the envelope through the model $f$, the lower bound of the output envelope satisfies $y_L^k \geq 0.5$, and $0$ otherwise. Then the following is an upper-bound on the $\beta$-smooth sensitivity: 
\begin{align*} 
\overline{\mathrm{SS}^\beta_c}(f_x, D) = \max_{k \in [N]} \left[ \mathbbm{1}\!\left(f^{T_k}(x) \!\neq \!f^{\mathcal{M}(f,\theta_{\mathrm{init}},D)}(x)\right) e^{-\beta k}\right]\!\!.
\end{align*}
\end{theorem}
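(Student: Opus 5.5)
Since the prediction $f^\theta(x)\in\{0,1\}$ is binary, every local sensitivity is either $0$ or $1$. It therefore suffices to show, for each $k\in\mathbb{N}^+$, that
\[
e^{-\beta k}A^k(f_x,D)\le \overline{\mathrm{SS}^\beta_c}(f_x,D).
\]
Taking the maximum over $k$ in Definition~\ref{def:beta_ss} then gives the claim. Here $f_x(D)=f^{\mathcal{M}(f,\theta_{\text{init}},D)}(x)$ denotes the prediction at the fixed query $x$ after training on $D$. The plan is to relate $A^k$ to the indicator at index $k+1$ (or $k$, depending on indexing), and then absorb the shift into the exponential weight.

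\textbf{Step 1 (reduction to a label flip).} $A^k(f_x,D)=\max_{D'':d(D,D'')\le k}\mathrm{LS}(f_x,D'')$. If $A^k=1$, there exist $D''$ with $d(D,D'')\le k$ and $D'''$ with $d(D'',D''')=1$ such that $f_x(D'')\ne f_x(D''')$. By the triangle inequality for the symmetric-difference metric, $d(D,D''')\le k+1$. Both $D''$ and $D'''$ lie within distance $k+1$ of $D$, and they predict different labels. Hence at least one of them predicts a label different from the nominal $f_x(D)$. Call that dataset $\tilde D$; it satisfies $d(D,\tilde D)\le k+1$ and $f_x(\tilde D)\neq f_x(D)$.

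\textbf{Step 2 (soundness of the envelope).} By Definition~\ref{def:valid_parameter_bounds}, $\mathcal{M}(f,\theta_{\text{init}},\tilde D)\in T_{k+1}$. Sound bound propagation through $f$ then places the output $f^{\tilde\theta}(x)$ inside the propagated output interval $[y_L^{k+1},y_U^{k+1}]$. I interpret the certification as follows: $f^{T_{k+1}}(x)=f_x(D)$ means the whole envelope certifies the nominal label, i.e.\ every reachable parameter predicts the same label. Under that reading, a flipped $\tilde D$ forces $\mathbbm{1}(f^{T_{k+1}}(x)\neq f_x(D))=1$. This is the delicate step. The statement's definition of $f^{T_k}$ only uses the lower bound $y_L^k\ge 0.5$. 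For nominal label $1$ this is exactly the certification condition. For nominal label $0$ the check should use $y_U^k<0.5$, so I would state and use the symmetric version, which is the intended meaning. I would also use monotonicity of the envelopes, $T_k\subseteq T_{k+1}$, or simply take the indicator as a certification failure at level $\le k+1$.

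\textbf{Step 3 (absorbing the index shift).} Steps 1 and 2 give $e^{-\beta k}A^k\le e^{-\beta k}\mathbbm{1}_{k+1}$, where $\mathbbm{1}_{k+1}$ is the indicator at index $k+1$. The right-hand side equals $e^{\beta}\cdot e^{-\beta(k+1)}\mathbbm{1}_{k+1}$, which carries an extra factor $e^{\beta}$. To remove it, I would reconcile the indexing with the convention of \cite{sosnin2025abstract}, where local sensitivity is measured between neighbors at distance $k$ from $D$ on both sides. There $A^k$ is witnessed by two datasets within distance $k$ of the centre, so the indicator at $k$ suffices. Alternatively, I would shift the maximum to include $k=0$. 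I expect this index bookkeeping, together with the one-sided definition of $f^{T_k}$, to be the main obstacle. For $k>N$, removals are exhausted, so $T_N$ already covers every reachable dataset and the maximum over $k\in[N]$ suffices.
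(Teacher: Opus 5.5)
Your Steps~1--2 are correct. Your concern about the one-sided definition is also justified: because $\theta\in T_k$, a nominal label $0$ forces $y_L^k\le f^{\theta}(x)<0.5$ for every $k$. The literal indicator is then identically zero, so the two-sided certification really is needed. The genuine gap is Step~3, and neither proposed fix closes it. Including $k=0$ in the maximum only adds the term for $T_0=\{\theta\}$, which vanishes. Measuring both neighbours inside $B_k(D)$ does not give the $A^k$ of Definition~\ref{def:beta_ss}; it gives a quantity lying between $A^{k-1}$ and $A^k$, so bounding it does not bound the stated quantity. In fact the factor $e^{\beta}$ cannot be removed, because the statement, read with $A^k$ as in Definition~\ref{def:beta_ss}, is false. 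Here is a counterexample.

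\begin{itemize}
\item Let $\mathcal{M}(\tilde D)=n(D)-n(\tilde D)$, where $n(\cdot)$ counts negative examples. Any addition, removal or substitution moves $\theta$ by at most one, so $T_k=[-k,k]$ is a valid envelope.
\item Take $f^{\theta}(x)=\sigma(\theta+m+\tfrac12)$ with $m\ge 1$ and $N\ge m+1$.
\item The nominal label is $1$, and $y_L^k=\sigma(m+\tfrac12-k)\ge 0.5$ iff $k\le m$. Hence $\overline{\mathrm{SS}^\beta_c}(f_x,D)=e^{-\beta(m+1)}$.
\item Adding $m$ negative examples gives a dataset at distance $m$ that predicts $1$, while its neighbour with one more negative example predicts $0$.
\item Therefore $A^m=1$ and $\mathrm{SS}^\beta(f_x,D)\ge e^{-\beta m}>\overline{\mathrm{SS}^\beta_c}(f_x,D)$.
\end{itemize}

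What your argument does prove is $A^k(f_x,D)\le\mathbbm{1}\bigl(f^{T_{k+1}}(x)\neq f_x(D)\bigr)$ under two-sided certification. Hence $\mathrm{SS}^\beta(f_x,D)\le\max_k e^{-\beta k}\,\mathbbm{1}\bigl(f^{T_{k+1}}(x)\neq f_x(D)\bigr)$, and in particular $\mathrm{SS}^\beta\le e^{\beta}\,\overline{\mathrm{SS}^\beta_c}$. This is the theorem with $T_{k+1}$ in place of $T_k$. That corrected statement is what you should state, and your Steps~1--2 already prove it.

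This version is also consistent with the paper's own regression construction. Its sound-bounding function $\mathfrak{d}(x,k)=\max(y^k_U-y^{k+1}_L,\,y^{k+1}_U-y^k_L)$ pairs $T_k$ with $T_{k+1}$ for exactly the reason you isolated in Step~1. The paper gives no separate proof of this theorem. It cites it, and recovers it in the Proposition following Theorem~\ref{thm:ss_agtr_ub} by taking the index-$k$ indicator as $\mathfrak{d}(x,k)$. That tacitly asserts property (i), $A^k\le\mathbbm{1}\bigl(f^{T_k}(x)\neq f_x(D)\bigr)$, which is precisely what the example above violates.

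One smaller point: your closing remark about $k>N$ needs substitution adjacency on fixed-size datasets, where $B_N(D)$ is the whole space. If additions are allowed, exhausting removals does not cap the distance.
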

Albeit sound, a critical observation about this theorem is that it only works in classification settings, where $\forall y,y' \in \mathcal{Y},\, y \neq y'\!: d(y, y') = 1$, and fails otherwise, i.e. in regression settings. We notice that the constraint exists because the indicator function acts as a \textit{sound-bounding function} for the output sensitivity at distance $k$ ($A^k$ in Definition~\ref{def:beta_ss}). To extend this bound to cases where the output space $\mathcal{Y}$ is continuous and unbounded (e.g., $\mathcal{Y} = \mathbb{R}$), we proceed to systematically generalize the sound-bounding function used. 

We first define what it means for a function $\mathfrak{d}(x,k)$  to be sound-bounding for local sensitivity $A^k(f,x)$ by introducing the properties it needs to satisfy: 
\begin{align*}
    (i) & \quad A^k(f,x) \leq \mathfrak{d}(x,k), \: \forall x \in \mathcal{X} , k \in \mathcal{K} \\
    (ii) &  \quad \mathfrak{d}(x,k) \leq \Delta_p f, \forall p \in \mathbb{N}, x \in \mathcal{X} , k \in \mathcal{K}.
\end{align*}

We note that the first property enforces soundness, while the second avoids triviality (i.e., ensures it is lower than the global sensitivity). For $\mathcal{Y} = \mathbb{R}$ one can construct
such a sound-bounding function using only an arbitrary, finite number of envelopes and the mechanism proposed by AGT (further discussed in \S~\ref{ssec:agtr_alg}). For regression, we begin by assuming that we have computed valid parameter-space bounds for all values of $k \in [N]$, and we construct the sound bounding function. Firstly, let $B_k(D)$ be the set of all datasets within radius $k$ of $D$ and allow $y_L^k := \min_{\tilde{D} \in B_k(D)} f^{\tilde{\theta}}(x) \leq \min_{\theta' \in T_k} f^{\theta'}(x)$ and let $y_U^k := \max_{\tilde{D} \in B_k(D)} f^{\tilde{\theta}}(x) \leq \max_{\theta' \in T_k} f^{\theta'}(x)$.  We define the following sound bounding function for regression: $\frak{d}(x,k):= \max(y^k_U - y^{k+1}_L, y_U^{k+1} - y_L^{k})$. For this $\frak{d}(x,k)$, property (i) is satisfied by observing:

\begin{align*}
A^{(k)}(D) &:= \max_{\tilde D \in B_k(D)} LS(f, \tilde D) \\
&\le \max_{\tilde D \in B_k(D),\; D^\star \in B_{k+1}(D)} \|f^{\theta^\star}(x) - f^{\tilde\theta}(x)\|_1 \\
&\le \max_{\tilde y \in [y^k_L,\, y^k_U],\; y^\star \in [y^{k+1}_L,\, y^{k+1}_U]} \|y^\star - \tilde y\|_1
  \\
&= \sum_i \max\left(y^{k+1}_{U,i} - y^{k}_{L,i},\; y^{k}_{U,i} - y^{k+1}_{L,i}\right).
\end{align*}
 
The first inequality follows because $B_1(\tilde D) \subseteq B_{k+1}(D)$ and the second by the soundness of the certified intervals. Property (ii) is satisfied by propagating $k=N$ to get the universe of reachable parameters for which: 
\begin{align*}
 LS_N(f, x) \leq  \max_{\theta' \in T_N} \|f^{\theta}(x) -f^{\theta'}(x)\|_1 \leq GS(f)
\end{align*}
Once a sound-bounding function is established a $\beta$-smooth upper-bound on the smooth sensitivity is given by using the construction of Nissim et al. \cite{nissim2007smooth}. Formally:

\begin{theorem} \label{thm:ss_agtr_ub}
    Given a model $f$, data point $x \in \mathcal{X}$ and $k \in \mathbb{N}$, let $\mathfrak{d}(x,k)$ be an AGT-constructed (as above) sound-bounding function of the local sensitivity $A^k(f,x)$. Then the following is a $\beta$-smooth upper bound on the $\beta$-smooth sensitivity of \cite{nissim2007smooth}: 
    \begin{align*}
    \overline{\mathrm{SS}^\beta}(f_x, D) = \max_{k \in [N]} \left[\mathfrak{d}(x,k)\, e^{-\beta k} \right] 
    \end{align*}
\end{theorem}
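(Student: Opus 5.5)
We need to show two things about $\overline{\mathrm{SS}^\beta}(f_x,D)=\max_{k\in[N]}\mathfrak{d}(x,k)e^{-\beta k}$. First, it dominates the local sensitivity: $\overline{\mathrm{SS}^\beta}(f_x,D)\ge \mathrm{LS}(f,D)$. Second, it is $\beta$-smooth: $\overline{\mathrm{SS}^\beta}(f_x,D)\le e^{\beta}\,\overline{\mathrm{SS}^\beta}(f_x,D')$ for all adjacent $D,D'$. These are exactly the two properties Nissim et al.\ \cite{nissim2007smooth} require of an admissible smooth upper bound. Once they hold, the Cauchy and smooth-Laplace mechanisms (Definitions~\ref{def:cauchy_mech} and \ref{def:laplace_approx_dp_mech}) apply verbatim with $\overline{\mathrm{SS}^\beta}$ in place of $\mathrm{SS}^\beta$. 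As a sanity check I would also record that $\overline{\mathrm{SS}^\beta}\ge \mathrm{SS}^\beta$ pointwise. This follows from property (i): $e^{-\beta k}A^k(f,x)\le e^{-\beta k}\mathfrak{d}(x,k)$ for every $k$, and we take the max over $k$. The case $k=0$ and the range $k>N$ are handled by the convention that $T_N$ is the full reachable set, so $\mathfrak{d}(x,k)$ is nonincreasing-free but bounded by $\mathfrak{d}(x,N)$ through property (ii).

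\textbf{Lower bound on local sensitivity.} I would use $k=0$, indexing $[N]$ to include $0$ or shifting by one. Here $\mathfrak{d}(x,0)=\max(y_U^0-y_L^1,\,y_U^1-y_L^0)$. Since $y_L^0=y_U^0=f^{\theta}(x)$ and every neighbour's prediction lies in $[y_L^1,y_U^1]$, this quantity dominates $\mathrm{LS}(f,D)$. This is just the $k=0$ instance of the chain of inequalities already displayed for property (i).

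\textbf{Smoothness, the main step.} I must make sure that $\mathfrak{d}$ is defined \emph{relative to the nominal dataset}. Write $\mathfrak{d}_D(x,k)$ for the quantity built from the envelopes $T_k(D)$. The key monotonicity claim is that if $d(D,D')=1$ then $B_k(D)\subseteq B_{k+1}(D')$. Consequently the exact reachable output intervals satisfy $[y_L^k(D),y_U^k(D)]\subseteq[y_L^{k+1}(D'),y_U^{k+1}(D')]$, and hence $\mathfrak{d}_D(x,k)\le \mathfrak{d}_{D'}(x,k+1)$. Then
\[
\overline{\mathrm{SS}^\beta}(f_x,D)=\max_k e^{-\beta k}\mathfrak{d}_D(x,k)\le e^{\beta}\max_k e^{-\beta(k+1)}\mathfrak{d}_{D'}(x,k+1)\le e^{\beta}\,\overline{\mathrm{SS}^\beta}(f_x,D').
\]

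\textbf{The obstacle.} This argument works cleanly for the \emph{exact} reachable intervals, which is how $y_L^k,y_U^k$ are defined in the text via $\min/\max$ over $B_k(D)$. It fails for the AGT over-approximations $T_k(D)$, which need not be nested across neighbouring nominal datasets. I therefore plan to argue smoothness for the function defined with the exact $y^k_{L},y^k_U$ (so $\mathfrak{d}$ is itself a genuine data-dependent but smooth quantity). I then view the AGT envelopes only as certifying upper bounds on it at the realised $D$.

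If that is deemed insufficient, I would fall back to a second option: enforce nestedness by defining the certified quantity at $D$ as a max over $k'\le k$ of shifted terms, or show that AGT with worst-case clipped gradients of norm $\gamma$ yields $T_k(D)\subseteq T_{k+1}(D')$. That inclusion holds because the one differing point can be absorbed into the adversarial budget. This inclusion for AGT is the step I expect to require the most care.

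Finally, the $k=N$ tail is handled by capping. For $k\ge N$, $B_k(D)$ is the whole universe, so $\mathfrak{d}$ is constant there, and the max over $[N]$ equals the max over $\mathbb{N}$.
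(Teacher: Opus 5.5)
Your skeleton is the paper's: domination from property (i), and $\beta$-smoothness from the discount factor together with the cross-neighbour inequality $\mathfrak{d}_D(x,k)\le\mathfrak{d}_{D'}(x,k+1)$. Your argument for the exact reachable intervals is correct; it goes via $B_k(D)\subseteq B_{k+1}(D')$, used at both levels $k$ and $k+1$. Your insistence on including $k=0$ is a real improvement: with $k\in[N]$ starting at $1$, the paper's appeal to property (i) only gives $\overline{\mathrm{SS}^\beta}\ge e^{-\beta}\mathrm{LS}$.

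The gap is the step you yourself flag. The theorem concerns the AGT-computed $\mathfrak{d}$, because that is the scale actually fed to the Cauchy/Laplace noise, and your primary plan for it does not work. Lemma~2.5 of \cite{nissim2007smooth} needs the scale actually used to be $\beta$-smooth in $D$, since the dilation step compares the scales at $D$ and $D'$. A pointwise upper bound on a $\beta$-smooth function need not be $\beta$-smooth. For constant $f$ the exact bound is $0$. Yet calibrating to certified values $1$ at $D$ and $R$ at a neighbour $D'$ gives two Cauchy laws whose scales differ by the factor $R$, which for large $R$ violates $\epsilon$-DP at any fixed $\epsilon$. Your second option, a max over $k'\le k$ at a single dataset, only makes the staircase monotone in $k$. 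It cannot relate quantities computed at $D$ to quantities computed at $D'$, and that relation ($U^{(k)}(D)\le U^{(k+1)}(D')$ in Nissim et al.'s terms) is exactly what is required.

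The paper disposes of this in one clause, asserting the inequality for any sound orthotope over-approximation. That is too strong as stated: take envelopes that are loose at $D$ and tight at $D'$. So your suspicion is justified, but the step must be proved for AGT specifically, and your proposal does not do this.

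Your third option is the right lemma, and it can be closed as follows. Assume substitution adjacency with the differing record in the same batch slot, and prove $T^t_k(D)\subseteq T^t_{k+1}(D')$ coordinate-wise by induction on the step $t$.

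\begin{enumerate}
\item Suppose the inclusion holds at step $t$ and the bound propagation is inclusion-monotone. This holds for IBP and exact MILP; for adaptive linear relaxations such as CROWN it must be checked or enforced. Then the gradient bounds of the shared records at $(D,k)$ lie inside those at $(D',k+1)$.
\item Every clipped coordinate lies in $[-\gamma,\gamma]$. Hence the top-$(b-k)$ sum of upper bounds over $D$'s batch is at most $\gamma$ plus the top-$(b-k-1)$ sum over the shared records. By step 1, that is at most the top-$(b-k-1)$ sum over $D'$'s batch at level $k+1$.
\item Step 2 gives $\mathrm{SEMax}_{b-k}(D)+k\gamma\le\mathrm{SEMax}_{b-k-1}(D')+(k+1)\gamma$, and symmetrically $\mathrm{SEMin}_{b-k}(D)-k\gamma\ge\mathrm{SEMin}_{b-k-1}(D')-(k+1)\gamma$. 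Batches that do not contain the differing record satisfy the same inequalities. Combined with the inclusion of the current boxes, this passes the inclusion to step $t+1$.
\end{enumerate}

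Propagating the final boxes through $f$ at $x$ with the same monotone procedure gives the output-interval inclusions at levels $k$ and $k+1$. Hence $\mathfrak{d}_D(x,k)\le\mathfrak{d}_{D'}(x,k+1)$ holds for the certified quantities, and your displayed chain finishes the proof. The boundary term $k=N$ additionally needs the certified envelopes to saturate there, for instance AGT capping the adversarial count at the batch size so that $T_N$ is data-independent. This mirrors your capping remark for the exact sets.
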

\begin{proof}
     The first property of the smooth bound proposed by \cite{nissim2007smooth} follows directly by applying property $(i)$ of a sound-bounding function: $\mathfrak{d}(x,k) \geq A^{k}(f, x)$. The second property, namely $\beta$-smoothness, is a direct consequence of the from the exponential discount factor $e^{-\beta k}$ and the fact that $\mathfrak{d}(D, k) \leq \mathfrak{d}(\tilde{D}, k+1)$ for $d(D, \tilde{D}) = 1$ which is true both when using exact valid parameter-space bounds and any sound orthotope overapproximation.
 \end{proof}

One can observe that by letting $\mathfrak{d}$ be the indicator function, this recovers exactly Theorem \ref{thm:sosnin_binary} of Sosnin et al. \cite{sosnin2025abstract}:

\begin{proposition}
    Let $\mathcal{Y} = \{0,1\}$ and instantiate the sound bounding function with $\mathfrak{d}(x,k) = \mathbbm{1}\!\left(f^{T_k}(x) \!\neq \!f^{\mathcal{M}(f,\theta_{\mathrm{init}},D)}(x)\right)$. Then, Theorem \ref{thm:sosnin_binary} can be recovered and holds: $\overline{\mathrm{SS}^\beta}  = \overline{\mathrm{SS}^\beta_c}.$
\end{proposition}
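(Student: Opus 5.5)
The plan is to prove the equality by direct substitution. Theorem~\ref{thm:ss_agtr_ub} defines $\overline{\mathrm{SS}^\beta}(f_x,D)=\max_{k\in[N]}\mathfrak{d}(x,k)e^{-\beta k}$ for any admissible sound-bounding function. We plug in $\mathfrak{d}(x,k)=\mathbbm{1}\bigl(f^{T_k}(x)\neq f^{\mathcal{M}(f,\theta_{\mathrm{init}},D)}(x)\bigr)$ and compare term by term with the expression in Theorem~\ref{thm:sosnin_binary}. The two maxima range over the same index set $[N]$ and use the same discount $e^{-\beta k}$, so the summands coincide identically in $k$ and hence $\overline{\mathrm{SS}^\beta}=\overline{\mathrm{SS}^\beta_c}$ as functions of $x$ and $D$. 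The only substantive work is checking that this indicator really is a sound-bounding function, so that Theorem~\ref{thm:ss_agtr_ub} applies and the recovered bound is valid, i.e.\ Theorem~\ref{thm:sosnin_binary} ``holds''.

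For property (ii), note that with $\mathcal{Y}=\{0,1\}$ any two labels differ by at most $1$ in every $\ell_p$ norm. Hence $\Delta_p f\le 1$. In the nontrivial case some neighbouring pair flips the label, so $\Delta_p f=1$, and the indicator, taking values in $\{0,1\}$, is bounded by it. If $\Delta_p f=0$, no neighbouring pair changes the output. Then every valid envelope can be taken to certify the nominal label, the indicator is $0$, and (ii) still holds.

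For property (i), we must show $A^k(f,x)\le \mathbbm{1}(f^{T_k}(x)\neq f^{\theta}(x))$. Since $A^k\le 1$ automatically, only the case where the indicator is $0$ matters. There the envelope $T_k$ certifies that every parameter reachable within distance $k$ yields the nominal label. We would then argue that every $\tilde D\in B_k(D)$, together with its neighbours, produces the nominal prediction, so $LS(f,\tilde D)=0$ and $A^k=0$.

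This last step is the main obstacle. A neighbour of $\tilde D$ lies in $B_{k+1}(D)$, not $B_k(D)$, so certification at radius $k$ alone does not literally control it. This is the same off-by-one visible in the regression construction, where $\mathfrak{d}$ uses both $T_k$ and $T_{k+1}$. I would first try to handle it as Sosnin et al.\ do, by reading the indicator at index $k$ as certifying stability of the $(k{+}1)$-ball, i.e.\ the standard convention $A^{k}$ versus $A^{(k)}$ with the shift absorbed into the indexing of $T_k$. Failing that, I would use the monotonicity $T_k\subseteq T_{k+1}$ and shift indices by one, which changes the bound only by a factor $e^{\beta}$ that is absorbed in the max. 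Once this is settled, the $\beta$-smoothness part of Theorem~\ref{thm:ss_agtr_ub} transfers verbatim, because the indicator is monotone in $k$ when the envelopes are nested.
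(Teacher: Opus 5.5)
Your substitution step is the paper's whole argument. The paper observes that plugging the indicator into Theorem~\ref{thm:ss_agtr_ub} reproduces Theorem~\ref{thm:sosnin_binary} term by term, and never checks that the indicator satisfies properties (i)--(ii). So the equality $\overline{\mathrm{SS}^\beta}=\overline{\mathrm{SS}^\beta_c}$ is fine.

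The problem is the soundness argument you add, which you call the only substantive work. You diagnose the off-by-one correctly, but resolution (b) is wrong. Nesting only gives $A^k(f,x)\le\mathfrak{d}(x,k+1)$, and
\[
\max_{k} e^{-\beta k}\,\mathfrak{d}(x,k+1)\;=\;e^{\beta}\max_{k} e^{-\beta(k+1)}\,\mathfrak{d}(x,k+1),
\]
so the factor $e^{\beta}$ multiplies the maximum rather than being absorbed by it. This route certifies a quantity exceeding $\overline{\mathrm{SS}^\beta_c}$ by $e^{\beta}$ whenever the first uncertified radius is at least $2$. It therefore does not make $\overline{\mathrm{SS}^\beta_c}$ itself sound. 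Resolution (a) relabels $T_k$, but Definition~\ref{def:valid_parameter_bounds} ties $T_k$ to the radius-$k$ ball, so (a) changes the statement rather than proving it.

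In fact no repair exists under the stated definitions. Take nested envelopes that certify the nominal label up to radius $K\ge 1$, and suppose some $D''$ with $d(D,D'')=K+1$ genuinely flips it. Pick $\tilde D$ on a shortest path from $D$ to $D''$ with $d(D,\tilde D)=K$ and $d(\tilde D,D'')=1$. Then $LS(f,\tilde D)=1$, hence $A^K(f,x)=1>0=\mathfrak{d}(x,K)$, and $\overline{\mathrm{SS}^\beta_c}\le e^{-\beta(K+1)}<e^{-\beta K}\le\mathrm{SS}^\beta$.

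So the indicator at radius $k$ violates property (i), and Theorem~\ref{thm:ss_agtr_ub} cannot be invoked for it. The ``holds'' half cannot be obtained inside this paper's framework; it rests only on the cited result and its indexing convention. The instantiation consistent with the regression construction is $\mathfrak{d}(x,k)=\mathbbm{1}\bigl(f^{T_{k+1}}(x)\neq f^{\theta}(x)\bigr)$. It is sound, but it equals $\overline{\mathrm{SS}^\beta_c}$ only up to that factor $e^{\beta}$.

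Your property (ii) argument has a smaller slip of the same kind. The $T_k$ are the given, possibly loose, AGT outputs, so you cannot choose them to certify the nominal label when $\Delta_p f=0$.
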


Thus, sound bounding functions allow us to strictly generalize the analysis provided in prior works by following through with the smooth sensitivity analysis.
%
Using Theorem \ref{thm:ss_agtr_ub} we can achieve $(\epsilon, 0)$ and $(\epsilon, \delta)$-DP guarantees are given through mechanisms in Definition~\ref{def:cauchy_mech} and ~\ref{def:laplace_approx_dp_mech}, respectively.

Lastly, our formulation and privatization mechanism provide a finite bound on smooth sensitivity and noise for unbounded output domains.
\begin{remark}[$\overline{\mathrm{SS}^\beta}$ boundedness for unbounded $\mathcal{Y}$] 

Theorem~\ref{thm:ss_agtr_ub} can provide finite sensitivity bounds even for algorithms with \textit{a priori} infinite global sensitivity  (i.e., $d(\sup \mathcal{Y} - \inf \cal Y) = \infty$) when $\mathfrak{d}(x,N) = \max_{\theta' \in T_N} \|f^{\theta}(x) -f^{\theta'}(x)\|_1 \leq \infty$  in turn yields a finite smooth sensitivity bound $\overline{\mathrm{SS}^\beta}$.
\end{remark}

\subsubsection{Amplification of Generalized AGT Mechanisms}
Before turning to practical computations of generalized sound bounding functions, we remark on the use of amplification with AGT-based mechanisms.

Let \(D=(z_1,\ldots,z_N)\in\mathcal{X}^N\) be a fixed-size dataset
equipped with substitution adjacency, and let
\[
    I \sim \operatorname{Unif}
    \bigl(\{I\subseteq [N]: |I|=m\}\bigr),
    \qquad
    q := \frac{m}{N},
\]
where \(m\leq N\) is public. Denote the resulting size-\(m\)
subsample by \(D_I\). For a fixed public query \(x\), let
\(\mathcal{A}_m(\,\cdot\,;x)\) denote the  AGT
mechanism of Definition~\ref{def:cauchy_mech}, ~\ref{def:gauss_mech} 
instantiated on the randomized dataset of
size \(m\), then we have the following privacy amplification:

\begin{lemma}[Amplification of AGT mechanisms by sampling without replacement \cite{balle2018privacy}]
\label{lem:agtr_wor_amplification}
Suppose that \(\mathcal{A}_m\) is
\((\epsilon_{\mathrm{b}},\delta_{\mathrm{b}})\)-DP
under substitution adjacency on \(\mathcal{X}^m\). Then the subsampled
mechanism $\mathcal{A}^{\mathrm{WOR}}_{N,m}(D;x) := \mathcal{A}_m(D_I;x)$
is $\left(
        \log\!\left(
            1+q\left(e^{\epsilon_{\mathrm{b}}}-1\right)
        \right),
        \;q\delta_{\mathrm{b}}
    \right)$-DP under substitution adjacency on
\(\mathcal{X}^N\).

Equivalently, to obtain a target outer guarantee
\((\epsilon,\delta)\), it suffices to instantiate the base AGT-R
mechanism with
\begin{align}\label{eq:AmplificationEQ}
    \epsilon_{\mathrm{b}}
    =
    \log\!\left(
        1+\frac{e^\epsilon-1}{q}
    \right),
    \qquad
    \delta_{\mathrm{b}}
    =
    \frac{\delta}{q},
\end{align}
where the latter equality assumes \(\delta\leq q\). In the pure-DP
case, \(\delta=\delta_{\mathrm{b}}=0\).
\end{lemma}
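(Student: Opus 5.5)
The plan is to reduce the statement to the subsampling theorem of Balle et al.\ \cite{balle2018privacy} for sampling without replacement under substitution adjacency, and then invert the resulting formula. The reduction needs three ingredients. First, the AGT mechanism is itself a valid base mechanism. Second, the coupling argument behind the theorem applies to it. Third, the parameter relation in \eqref{eq:AmplificationEQ} follows by elementary algebra.

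\textbf{Step 1: the base mechanism is DP on $\mathcal{X}^m$.} For fixed public $x$, $\mathcal{A}_m(\cdot;x)$ maps a size-$m$ dataset $D'$ to $f^{\mathcal{M}(f,\theta_{\mathrm{init}},D')}(x)+\eta$. Here $\eta$ is calibrated to $\overline{\mathrm{SS}^\beta}(f_x,D')$ from Theorem~\ref{thm:ss_agtr_ub}, with the envelopes $T_k$ computed on $D'$ itself. By Theorem~\ref{thm:ss_agtr_ub} this quantity is a $\beta$-smooth upper bound on the smooth sensitivity. Definitions~\ref{def:cauchy_mech} and~\ref{def:laplace_approx_dp_mech} then give $(\epsilon_{\mathrm b},\delta_{\mathrm b})$-DP under substitution adjacency on $\mathcal{X}^m$. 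I would check one point explicitly: Theorem~\ref{thm:ss_agtr_ub} is stated with respect to $d(D,\tilde D)=|D\triangle\tilde D|$, and a substitution changes this by at most two. So either the envelopes must be defined with substitution counted as one edit, as Definition~\ref{def:valid_parameter_bounds} allows, or $k$ must be rescaled. The key requirement is that the mechanism be a \emph{fixed} randomized map from $\mathcal{X}^m$, with no dependence on $D$ beyond $D_I$.

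\textbf{Step 2: apply the coupling theorem.} Let $D,D'\in\mathcal{X}^N$ differ in index $j$. Split on whether $j\in I$; this happens with probability $q=m/N$. Writing $\mu_0$ for the law of the output conditioned on $j\notin I$, the output laws decompose as
\[
\mathcal{A}^{\mathrm{WOR}}_{N,m}(D;x)=(1-q)\mu_0+q\mu_1,\qquad
\mathcal{A}^{\mathrm{WOR}}_{N,m}(D';x)=(1-q)\mu_0+q\mu_1'.
\]
The component $\mu_0$ is identical in both because $D_I=D'_I$ when $j\notin I$. The natural coupling of $I$ pairs $D_I$ with $D'_I$, and these are substitution-neighbours on $\mathcal{X}^m$. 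Combining Step 1 with the advanced joint convexity of the hockey-stick divergence from \cite{balle2018privacy} gives $\epsilon'=\log(1+q(e^{\epsilon_{\mathrm b}}-1))$ and $\delta'=q\delta_{\mathrm b}$. The main obstacle is to confirm that the data-dependent noise scale does not break this argument. I expect it does not, because the theorem only uses the DP property of the base mechanism on each neighbouring pair, not any structure of its noise. Concretely, $\mu_1$ and $\mu_1'$ are mixtures, over the same coupled index distribution, of base-mechanism outputs on neighbouring size-$m$ sets, so they satisfy $(\epsilon_{\mathrm b},\delta_{\mathrm b})$-closeness by joint convexity. Separately, the data-dependent smooth-sensitivity recomputation on $D_I$ must use only $D_I$ and public quantities, namely $m$, $\theta_{\mathrm{init}}$ and $x$.

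\textbf{Step 3: invert the formula.} Solve $\epsilon=\log(1+q(e^{\epsilon_{\mathrm b}}-1))$ for $\epsilon_{\mathrm b}$, which gives $\epsilon_{\mathrm b}=\log(1+(e^\epsilon-1)/q)$. Set $\delta_{\mathrm b}=\delta/q$, which lies in $[0,1]$ exactly when $\delta\le q$; this is needed for Definition~\ref{def:laplace_approx_dp_mech} to apply. The pure case $\delta=0$ is the special case in which all $\delta$ terms vanish.
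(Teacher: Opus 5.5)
Your proposal is correct and matches the paper's route: the paper obtains the lemma as a direct application of \cite[Theorem~9]{balle2018privacy}, followed by the same algebraic inversion as your Step~3. Your Step~1 is not needed, because $(\epsilon_{\mathrm b},\delta_{\mathrm b})$-DP of $\mathcal{A}_m$ is a hypothesis of the lemma; and if you unpack the advanced-joint-convexity bound yourself, you must show that $\mu_1$ is $(\epsilon_{\mathrm b},\delta_{\mathrm b})$-close to $\mu_0$ as well as to $\mu_1'$ (for instance by coupling $I\ni j$ with $(I\setminus\{j\})\cup\{l\}$ for a fresh uniformly random index $l$), which is exactly what the cited theorem supplies.
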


Lemma~\ref{lem:agtr_wor_amplification} which is a direct application of \cite[Theorem~9]{balle2018privacy} allows AGT-R to operate on a
single persistent subsample of \(m\) records while providing a privacy
guarantee with respect to the original dataset of size \(N\). 
\begin{algorithm}[H]\footnotesize
\caption{AGT-R (Pure $\epsilon$-DP)}
\label{alg:agt_r}
\begin{algorithmic}[1]
    \STATE \textbf{Input:} Model $f$, initialization $\theta_{\text{init}}$, train dataset
      $D_t$ ($|D_t|=N$), public test input $x$, AGT procedure
      $\mathcal{M}_{\mathrm{AGT}}$, and privacy constants: $\epsilon$, $\delta$, $\beta$, global sensitivity $\Delta f$, and the set of distance radii: $\mathcal{K} \subset \mathbb{N}^+$.
  \STATE \textbf{Output:} An $(\epsilon, 0)$-DP prediction on $x$.
   \vspace{0.3em}
    \STATE $\theta,\;[\theta^k_L, \theta^k_U]_{k\in \mathcal{K}} \leftarrow \forall k_i \in \mathcal{K}, \, \mathcal{M}_{\mathrm{AGT}}(f, \theta_{\text{init}}, D_t, k_i)$
    \vspace{0.3em}
    
    \STATE $\forall i \in [N],\quad \overline{A}[i] \leftarrow \min \left( \Delta f \text{, } \max_{\theta' \in [\theta^N_L, \theta^N_U]}\|f^{\theta'}(x_i) - f^{\theta}(x_i)\|_1 \right)$
    \FOR{\(k\) \textbf{in} \(\text{sort}(\mathcal{K})\)}
        \STATE $\overline{A^k}(f, x) \leftarrow  \max_{\theta' \in [\theta^k_L, \theta^k_U]}\|f^{\theta'}(x_i) - f^{\theta}(x_i)\|_1$ 
        \vspace{0.2em}
        \FORALL{$j \in [N]\, s.t., \, j < k$ }
            \STATE $\overline{A}[i] \leftarrow \min\!\left(\overline{A}[i], \quad \overline{A^k}(f, x)\right)$
        \ENDFOR

    \ENDFOR
    \STATE $\forall i \in [N], \quad B[i] \gets \text{exp}(-\beta i)$
    \STATE $\forall i \in [N], \quad \overline{\mathrm{S}}[i] \gets \overline{A}[i] \cdot B[i]$
    \STATE $\overline{\mathrm{SS}^\beta}\leftarrow \max_k \,\overline{\mathrm{S}}[k]$
      
    \STATE $\hat{y} \leftarrow f^{\theta}(x) + \eta, \quad \eta \sim \mathrm{Cauchy}\!\left( \frac{6\,\overline{\mathrm{SS}^\beta}}{\epsilon}\right)$
    \RETURN $\hat{y}$
\end{algorithmic}
\end{algorithm}
\subsubsection{Tractable Computation via AGT}
 \label{ssec:agtr_alg}

Algorithm~\ref{alg:agt_r} first trains the model via AGT, yielding nominal parameters $\theta$ and parameter envelopes $[\theta^k_L, \theta^k_U]_{k \in \mathcal{K}}$, where $k$ denotes the cardinality of the symmetric difference between neighbouring datasets that parameterizes AGT. 

We observe that in Theorem~\ref{thm:ss_agtr_ub}, $\mathcal{K} := [N]$ implying that one must compute the maximum over all possible symmetric difference radii (with $N$-many AGT runs, for example).
In fact, we show that arbitrary $\mathcal{K} \subset \mathbb{N}$ bounds Theorem~\ref{thm:ss_agtr_ub} as long as $N \in \mathcal{K}$. 
Consider a set $\mathcal{K}$ that has at least one ``gap''  at some value $k$, i.e., $ k\in [N] \wedge k\notin \mathcal{K}$. A sound upper-bound in Theorem~\ref{thm:ss_agtr_ub} must somehow bound the local sensitivity at radius $k$ to achieve a valid smooth-sensitivity bound. We first observe that $\forall k' > k, A^{k'}(f,x) > A^{k}(f,x)$. Thus, if we observe a value $A^{k'}(f,x)$ it is sound for every $k < k'$.
This naturally gives rise to a sound ``back-filling'' procedure:  $\forall  k' \notin \mathcal{K}$ define $k^\star := \min_{k\in\mathcal{K}} k > k'$ and then let $\overline{A^{k'}} = \overline{A^{k^\star}}$. Finally, ensuring that $N \in \mathcal{K}$ allows us to soundly fill all gaps $k' \notin \mathcal{K}$. 
By the same reasoning in Theorem \ref{thm:ss_agtr_ub}, taking the maximum over all $k$ after applying the back-filling procedure obtains a upper-bound on smooth sensitivity (line 13 of Algorithm \ref{alg:agt_r}). 

The choice of $\mathcal{K}$ naturally induces a trade-off: small cardinality sets reduce the number of training runs and forward passes (which grow proportionally with $k$) but may yield looser bounds. Denser sets of values in $\mathcal{K}$  progressively tighten the smooth sensitivity at a greater computational cost. To conclude this analysis, we note that Algorithm \ref{alg:agt_r} covers only the pure $(\epsilon, 0)$-DP case, but few changes are required to achieve approximate $(\epsilon, \delta)$-DP: $\eta_i$ on line $14$ must be sampled from $\text{Lap}(\frac{2\, \overline{\mathrm{SS}^\beta}} {\epsilon})$ with $\beta \leq \frac{\epsilon}{2\ln(2/\delta)}$.

 \subsubsection{Improving Utility of Private Prediction} \label{sssec:superiority_thms}
 
In this section, we establish the conditions under which AGT-R can be (probabilistically) guaranteed to outperform global sensitivity-based privatization in regression settings. We analyze equivalent approaches in pure- and approximate-DP, providing closed-form privacy-utility trade-offs. Proofs of theorems are deferred to Appendix \ref{app:proof_agtr_cauchy_dom} and \ref{app:proof_agtr_lap_dom}.

\subsubsection*{Pure $(\epsilon,0)$-DP}
\looseness=-1
The noise distributions that guarantee $(\epsilon,0)$-DP are $\text{Lap}(0;\Delta f / \epsilon)$ and $\text{Cauchy}(0; 6\,\text{SS}^{\beta}(f, x) / \epsilon)$ for global and smooth sensitivity, respectively. We analyze and prove the critical conditions that need to be satisfied for smooth sensitivity privatization to outperform global sensitivity privatization at a \textit{fixed} privacy level $\epsilon$. Importantly, we make this systematic by defining a flexible cost fraction $c$, which represents the \textit{exact} utility improvement multiplier our method offers, provided the condition below is satisfied.

\begin{theorem} \label{thm:cauchy_lap_dom}
    Consider a regressor $f$ trained on dataset $D$ performing private prediction at point $x$ and a real constant $c \in (1,\infty)$. At a failure probability $\alpha$ (or, equivalently, at confidence level $1 - \alpha$), Cauchy noise with parameter $\gamma=6 \text{SS}^{\beta}(f, x) / \epsilon$ and $\beta < \epsilon/6$ provides $c$-tighter error bounds than Laplace noise with parameter $b = \Delta f /\epsilon$ whenever 
    \[
    \text{SS}^\beta(f,x) < \frac{\Delta f}{6c} \left[\frac{\ln\left(\frac{1}{\alpha}\right)}{\tan \left(\frac{\pi(1-\alpha)}{2}\right)}\right].
    \]
\end{theorem}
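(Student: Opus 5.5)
We read "error bound at failure probability $\alpha$" as the $(1-\alpha)$ two-sided quantile of the absolute noise: the smallest $t$ with $\mathbb{P}(|\eta|>t)\le\alpha$. "$c$-tighter" then means $t_{\mathrm{Cauchy}}\cdot c < t_{\mathrm{Lap}}$. With this reading the proof reduces to computing both quantiles in closed form and rearranging.

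\textbf{Laplace tail.} For $\eta\sim\mathrm{Lap}(0;b)$ we have $\mathbb{P}(|\eta|>t)=e^{-t/b}$. Setting this equal to $\alpha$ gives
\[
t_{\mathrm{Lap}} = b\ln(1/\alpha) = \frac{\Delta f}{\epsilon}\ln(1/\alpha).
\]

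\textbf{Cauchy tail.} For $\eta\sim\mathrm{Cauchy}(0;\gamma)$ the CDF is $\tfrac12+\tfrac1\pi\arctan(t/\gamma)$. Hence $\mathbb{P}(|\eta|\le t)=\tfrac{2}{\pi}\arctan(t/\gamma)$. Setting this equal to $1-\alpha$ gives
\[
t_{\mathrm{Cauchy}}=\gamma\tan\!\left(\frac{\pi(1-\alpha)}{2}\right)=\frac{6\,\mathrm{SS}^\beta(f,x)}{\epsilon}\tan\!\left(\frac{\pi(1-\alpha)}{2}\right).
\]
For $\alpha\in(0,1)$ the tangent is positive and finite. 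This matters because we will divide by it.

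\textbf{Comparison.} The condition $c\,t_{\mathrm{Cauchy}}<t_{\mathrm{Lap}}$ reads
\[
\frac{6c\,\mathrm{SS}^\beta}{\epsilon}\tan\!\left(\tfrac{\pi(1-\alpha)}{2}\right)<\frac{\Delta f}{\epsilon}\ln(1/\alpha).
\]
The factor $1/\epsilon$ cancels. Dividing by $6c\tan(\cdot)>0$ gives exactly the stated inequality.

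The hypothesis $\beta<\epsilon/6$ is not used in the computation. It only ensures, via Definition~\ref{def:cauchy_mech}, that the Cauchy mechanism is $(\epsilon,0)$-DP, so both mechanisms are compared at the same privacy level. In practice one would apply the result with $\overline{\mathrm{SS}^\beta}$ from Theorem~\ref{thm:ss_agtr_ub} in place of $\mathrm{SS}^\beta$, which yields a valid sufficient condition.

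\textbf{Main obstacle.} The only delicate step is fixing the notion of error bound and checking that it is consistent for both distributions. The natural choice is the two-sided, per-coordinate, $(1-\alpha)$-quantile of $|\eta|$ in the scalar regression setting. For $\mathcal{Y}=\mathbb{R}^n$ the same argument applies coordinatewise. The rest is routine algebra.
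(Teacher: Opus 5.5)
Your proposal is correct and follows the paper's proof essentially step for step. Both compute the two-sided $(1-\alpha)$-quantiles $E_L = \frac{\Delta f}{\epsilon}\ln(1/\alpha)$ and $E_C = \frac{6\,\mathrm{SS}^\beta}{\epsilon}\tan\bigl(\frac{\pi}{2}(1-\alpha)\bigr)$ from the closed-form Laplace and Cauchy tails, and then impose $c\,E_C < E_L$ to get the stated condition.
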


\begin{figure*}[t]
    \centering
    \includegraphics[width=\textwidth]{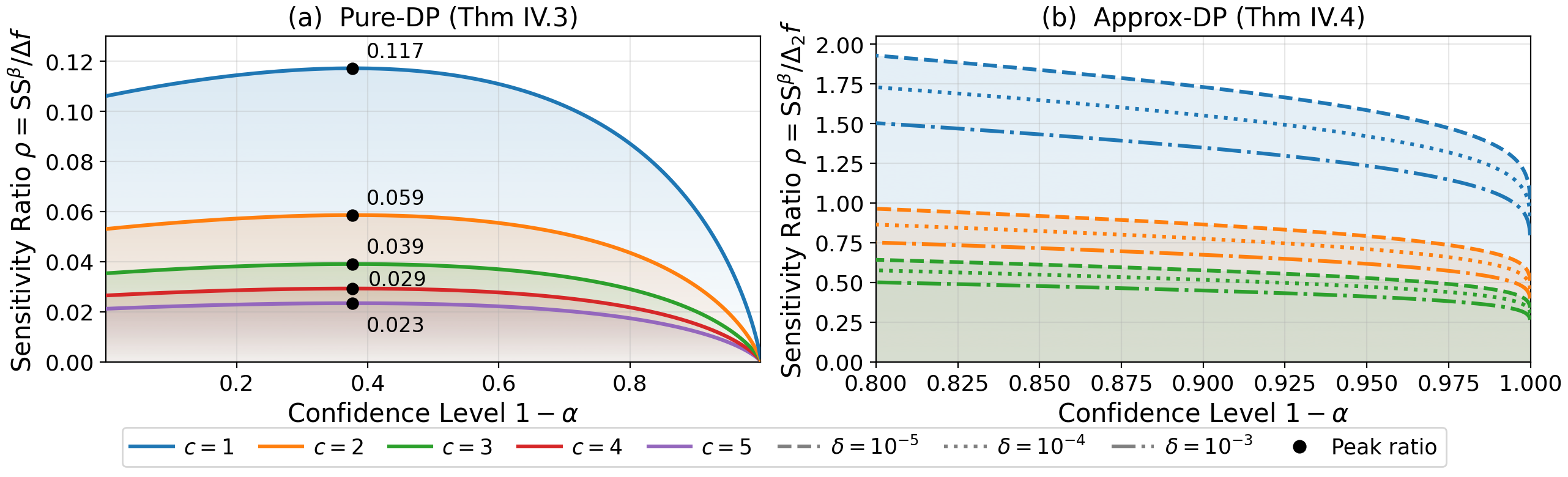}
\caption{\textbf{Feasibility of smooth-sensitivity privatization.} Each curve
gives the largest sensitivity ratio $\rho=\mathrm{SS}^\beta/\Delta f$ for which
our smooth-sensitivity mechanism attains a $c$-times \emph{tighter} error bound
than the global-sensitivity baseline at confidence $1-\alpha$; the region below
each curve is feasible. \textbf{(a)}~Pure-DP (Thm.~\ref{thm:cauchy_lap_dom}),
Cauchy vs.\ Laplace; black dots mark the peak ratio. \textbf{(b)}~Approx-DP
(Thm.~\ref{thm:laplace_gauss_dom}), Laplace vs.\ Gaussian, for $1-\alpha>0.8$
and measured against $\Delta_2 f$. 
}
    \label{fig:cond_table}
\end{figure*}
\noindent \textbf{Analysis of Utility Trade-offs:} The condition in Thm.~\ref{thm:cauchy_lap_dom} is a probabilistic certificate of utility improvement, and we make two observations about it. Firstly, since an exact expression for $\text{SS}^\beta$ is not available, the left-hand side is in practice replaced by the upper bound $\overline{\text{SS}^\beta}$ computed by AGT, which makes the condition sufficient but conservative. Secondly, as $\alpha \rightarrow 0$ we have $\tan(\pi(1-\alpha)/2) \sim 2/(\pi\alpha)$, so the bracketed term decays as $\tfrac{\pi}{2}\alpha\ln(1/\alpha) \rightarrow 0$: the critical sensitivity vanishes faster than the $\ln(1/\alpha)$ grows and thus no deterministic guarantee is attainable. The condition therefore fails when the AGT bounds are unacceptably large, when the enforced improvement factor $c$ is unrealistic, or when the desired guarantee approaches determinism. Although we cannot control the smooth sensitivity directly, it is possible to simulate over $\alpha$ and $c$ and use AGT-R only when the ratio between $\overline{\mathrm{SS}^\beta}$ and $\Delta f$ falls below the resulting critical value. We present this simulation in the left column of Fig.~\ref{fig:cond_table} and analyze it in \S\ref{sec:exp}.

\subsubsection*{Approximate $(\epsilon, \delta)$-DP}
\looseness=-1
We now turn to the Gaussian mechanism, which, although it offers lighter tails via $(\epsilon,\delta)$-DP, requires $\delta \ll 1/N$, making its noise scale prohibitive in limited-sample regimes. Similarly to before, we derive and prove a sufficient condition on the smooth sensitivity under which our smooth sensitivity-based Laplace mechanism (approximate $(\epsilon,\delta)$-DP) yields tighter error bounds than the Gaussian mechanism.

\begin{theorem} \label{thm:laplace_gauss_dom}
    Consider the regressor $f$, dataset $D$, point $x$,  confidence level $1 -\alpha$, and real constant $c \in (1,\infty)$. Additionally, consider a fixed failure probability $\delta \in (0,1)$ of approximate-DP. Laplace noise with scale parameter $b = 2\,\text{SS}^\beta(f,x)/\epsilon$ and $\beta < (\epsilon / (2 \ln(2/\delta)))$ provides $c$-tighter error bounds than Gaussian noise with parameter $\sigma = \frac{\Delta_2f \sqrt{2\cdot\ln(1.25/\delta)}}{\epsilon}$ whenever: 
    \[
    \text{SS}^\beta(f,x) <  \frac{\Delta_2 f}{2c}\left[\frac{\sqrt{2\ln\left(\frac{1.25}{\delta}\right)} \cdot \Phi^{-1}\left(1 - \frac{\alpha}{2}\right)}{\ln \left(\frac{1}{\alpha}\right)}\right],
    \]
    where $\Phi(\cdot)$ denotes the standard normal CDF and $\delta$ is the allowed privacy leakage in approximate DP.
\end{theorem}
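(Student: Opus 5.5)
The plan is to compare the $(1-\alpha)$-quantiles of the absolute error for the two noise distributions and ask when the Laplace quantile is at most a $1/c$ fraction of the Gaussian quantile. This mirrors what we expect for Theorem~\ref{thm:cauchy_lap_dom}. Concretely, ``$c$-tighter error bound at confidence $1-\alpha$'' will be formalised as $t_{\mathrm{Lap}}(\alpha) < t_{\mathcal{N}}(\alpha)/c$. Here $t_{\mathrm{Lap}}(\alpha)$ is the smallest $t$ with $\mathbb{P}(|\eta|\le t)\ge 1-\alpha$ for $\eta\sim\mathrm{Lap}(b)$, and $t_{\mathcal{N}}(\alpha)$ is the analogous quantity for $\eta\sim\mathcal{N}(0,\sigma^2)$. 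We work with the scalar output (one coordinate) so that both quantiles are in closed form.

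\textbf{Step 1 (Laplace quantile).} For $\eta\sim\mathrm{Lap}(b)$ we have $\mathbb{P}(|\eta|>t)=e^{-t/b}$. Setting this equal to $\alpha$ gives $t_{\mathrm{Lap}}=b\ln(1/\alpha)$. With $b=2\,\mathrm{SS}^\beta(f,x)/\epsilon$, as required by Definition~\ref{def:laplace_approx_dp_mech} with $\beta<\epsilon/(2\ln(2/\delta))$, this becomes $t_{\mathrm{Lap}}=\tfrac{2\,\mathrm{SS}^\beta(f,x)}{\epsilon}\ln(1/\alpha)$.

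\textbf{Step 2 (Gaussian quantile).} For $\eta\sim\mathcal{N}(0,\sigma^2)$ we have $\mathbb{P}(|\eta|>t)=2(1-\Phi(t/\sigma))$. Setting this equal to $\alpha$ gives $t_{\mathcal{N}}=\sigma\,\Phi^{-1}(1-\alpha/2)$. Substituting the calibration of Definition~\ref{def:gauss_mech} yields $t_{\mathcal{N}}=\tfrac{\Delta_2 f\sqrt{2\ln(1.25/\delta)}}{\epsilon}\Phi^{-1}(1-\alpha/2)$.

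\textbf{Step 3 (comparison).} Impose $c\,t_{\mathrm{Lap}}<t_{\mathcal{N}}$. The common factor $1/\epsilon$ cancels, which is why the condition is independent of $\epsilon$. Dividing by $2c\ln(1/\alpha)$ then gives exactly the displayed bound on $\mathrm{SS}^\beta(f,x)$. This division is valid since $\alpha\in(0,1)$ gives $\ln(1/\alpha)>0$, and $c>1$. Since every step is an equivalence, the condition is in fact necessary and sufficient for the quantile inequality.

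The main obstacle is not computational but definitional: we must fix precisely what ``$c$-tighter error bounds'' means so that the statement is meaningful. The natural choice is the two-sided tail quantile of $|\eta|$, used consistently with the pure-DP case. If $f$ is vector-valued, the argument is applied per coordinate. Two points need remarking. First, $\mathrm{SS}^\beta$ is computed with respect to $\ell_1$ while the Gaussian mechanism uses $\Delta_2 f$; in the scalar case these coincide. Second, replacing $\mathrm{SS}^\beta$ by the AGT upper bound $\overline{\mathrm{SS}^\beta}$ from Theorem~\ref{thm:ss_agtr_ub} preserves sufficiency, because the Laplace quantile is monotone in the scale.
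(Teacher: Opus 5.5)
Your proposal is correct and matches the paper's proof: both compute the two-sided $(1-\alpha)$ tail quantiles $E_L = b\ln(1/\alpha) = \tfrac{2\,\mathrm{SS}^\beta}{\epsilon}\ln(1/\alpha)$ and $E_G = \sigma\,\Phi^{-1}(1-\alpha/2)$, impose $E_L < E_G/c$, and rearrange to get the displayed bound. Your added remarks are also sound: in the scalar case $\ell_1$ and $\Delta_2 f$ coincide, and because the Laplace quantile is monotone in its scale, substituting $\overline{\mathrm{SS}^\beta}$ keeps the condition sufficient.
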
       
\noindent \textbf{Analysis of Utility Trade-offs:}  The condition in Thm. \ref{thm:laplace_gauss_dom} is again a probabilistic certificate, $\Delta_2 f$, $c$, and $\overline{\text{SS}^\beta}$ playing the same role as before. 
The confidence level, however, behaves differently. Firstly, both quantiles grow as $\alpha \rightarrow 0$, since $\Phi^{-1}(1 - \alpha/2) \sim \sqrt{2\ln(1/\alpha)}$, so the bracketed factor decays only as $\sqrt{2/\ln(1/\alpha)}$ rather than collapsing polynomially.
This reflects the tails involved, as Laplace and Gaussian noise are both light-tailed, whereas Cauchy noise is not. Secondly, the factor $\sqrt{2\ln(1.25/\delta)}$ acts in our favour, since a smaller leakage $\delta$ forces the Gaussian mechanism to inject more noise. In summary, this condition is markedly more permissive, and remains satisfiable at confidence levels for which Thm. \ref{thm:cauchy_lap_dom} already fails. We report the critical ratio $\mathrm{SS}^\beta / \Delta_2 f$ for various values of $\alpha$ and $c$ at fixed $\delta$ in the right column of Figure ~\ref{fig:cond_table}.

\section{Abstract Gradient Sampling}

The previous section established AGT-R as a mechanism for private \textit{prediction}: given a trained model, we add noise calibrated to smooth sensitivity in the \textit{output} space to privatize individual predictions. We now turn to what is arguably the more fundamental problem in the ML privacy literature: private \textit{learning}---i.e., releasing a model whose parameters themselves satisfy differential privacy. The gold standard for private learning is widely considered to be DP-SGD \cite{abadi2016deep}, with improvements such as R\'{e}nyi DP \cite{mironov2017renyi} and the moments accountant \cite{wang2019subsampled, mironov2019sgm}. We show that the smooth sensitivity framework developed for AGT-R extends naturally to the parameter space, yielding a novel approach to private learning. The central observation is that where the learning algorithm is viewed as a multi-dimensional regression over parameters, $\theta$, the valid parameter space bounds yield exactly the local sensitivities used to calibrate noise in AGT-R. In this section, we begin by directly applying AGT-R to the output of the learning algorithm (\S\ref{ssec:post_train_priv}); we then systematically generalize this approach into an algorithm we call Abstract Gradient Sampling (\S\ref{ssec:ags}).

\subsection{Post-Training Privatization} \label{ssec:post_train_priv}


Where the learning algorithm returns a real-valued vector, $\theta = \mathcal{M}(f, \theta_{\text{init}}, D)$, the $\ell_1$ sensitivity of that vector is given by the valid parameter space bound with $k=1$ i.e., $T_1$ i.e., Definition~\ref{def:l1_sens}. Moreover, the parameter envelopes produced by AGT for any $k\geq1$ maintain the guarantee for that for any neighboring dataset $\tilde{D}$ with $|\tilde{D} \, \triangle\, D| \leq k$, the parameters obtained by training on $\tilde{D}$ reside within the bounds. Formally, we have that if $\tilde{\theta} = \mathcal{M}(f, \theta_{\text{init}}, \tilde{D})$, then by construction, we have that $\tilde{\theta} \in [\theta_L^k, \theta_U^k]$. As in the (private prediction) regression case, the local sensitivity at $\tilde{D}$ compares $\tilde{\theta}$ with the parameters of a neighbor of $\tilde{D}$, which lie in $[\theta_L^{k+1}, \theta_U^{k+1}]$, so the sound-bounding function construction of \S\ref{sec:agtr} gives: 
\[
\overline{A^k}(f_\theta, D) = \sum_i \max\left(\theta^{k+1}_{U,i} - \theta^{k}_{L,i},\; \theta^{k}_{U,i} - \theta^{k+1}_{L,i}\right).
\]
It is now clear that $\overline{A^k}(f_\theta, D)   \geq A^k(f_\theta, D)$.
Thus, applying the same construction as in Alg.~\ref{alg:agt_r}, yields a $\beta$-smooth upper bound on the smooth sensitivity:
\[
\overline{\mathrm{SS}^\beta}(f_\theta, D) = \max_{k} \, e^{-\beta k} \cdot \overline{A^k}(f_\theta, D) \geq \mathrm{SS}^\beta(f_\theta, D).
\]
Finally, the privatization mechanism follows naturally: indeed, calibrating Cauchy noise to $\overline{\mathrm{SS}^\beta}(f_\theta, D)$ ensures the released parameters themselves are private, which exactly matches the guarantee of DP-SGD:

\begin{corollary}[Single-Release AGS]\label{thm:1_step_ags}
Let $f$ be a model with parameters $\theta \in \Theta \subseteq \mathbb{R}^{p}$, let
$\theta_{\mathrm{init}}$ be a random initialization,
and let $\theta^{n_s} = \mathcal{M}(f, \theta_{\mathrm{init}}, D)$ be the parameters
obtained after $n_s$ training steps. Let
$u(\beta) := \overline{\mathrm{SS}^\beta}(f_{\theta^{n_s}}, D)$ denote the $\beta$-smooth sensitivity. The
release $\theta^{n_s}_{\mathrm{private}}
  \;=\; \theta^{n_s} \;+\; \eta_{u(\beta)}$, where $\eta_{u(\beta)} \in \mathbb{R}^p$ has independent coordinates, each with density:
\begin{align*}
  \textbf{\emph{(C)}} \;\; \eta_{u(\beta), j} &\sim \mathrm{Cauchy}\!\left(6\,u/\epsilon\right),
    &&\beta \leq \epsilon/(6p), \\
  \textbf{\emph{(L)}} \;\; \eta_{u(\beta), j} &\sim \mathrm{Lap}\!\left(2\,u/\epsilon\right),
    &&\beta \leq \epsilon/\big(2p\,(1 + 2\ln(2p/\delta))\big),
\end{align*}
satisfies pure-DP (for C) and approx. DP (for L).
\end{corollary}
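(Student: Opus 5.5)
The proof reduces the corollary to the multi-dimensional smooth-sensitivity mechanisms of Nissim et al.~\cite{nissim2007smooth}, with $\overline{\mathrm{SS}^\beta}(f_\theta,D)$ from the post-training construction playing the role of the smooth upper bound. The argument has three parts: (1) show that $u(\beta)$ is a $\beta$-smooth upper bound on the $\ell_1$ local sensitivity of the map $D\mapsto\theta^{n_s}$; (2) check that the product noise distributions are admissible in dimension $p$; (3) plug in the resulting privacy conditions.

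\textbf{Step 1: $u$ is a $\beta$-smooth upper bound.} I treat $\mathcal{M}(f,\theta_{\mathrm{init}},\cdot)$ as a regressor into $\mathbb{R}^p$, with $\theta_{\mathrm{init}}$ fixed and public (otherwise condition on it). From the display preceding the corollary, $\overline{A^k}(f_\theta,D)\ge A^k(f_\theta,D)$ for every $k$, by validity of the envelopes $T_k$ (Def.~\ref{def:valid_parameter_bounds}). Two properties are then needed.
\begin{itemize}
\item Domination: $u(\beta)\ge \mathrm{LS}(D)$, which follows from the $k=0$ (or $k=1$) term of the maximum.
\item Smoothness: $u_{\tilde D}(\beta)\le e^{\beta}u_D(\beta)$ for neighbours $D,\tilde D$.
\end{itemize}
For smoothness I reuse the argument of Theorem~\ref{thm:ss_agtr_ub}. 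Since $B_k(\tilde D)\subseteq B_{k+1}(D)$, valid envelopes for $D$ at radius $k+1$ are valid for $\tilde D$ at radius $k$. Hence the sound-bounding function at $\tilde D$, radius $k$, can be taken to be the one at $D$, radius $k+1$, and the exponential discount yields the factor $e^{\beta}$. If the envelopes are recomputed on $\tilde D$ rather than inherited, I would define $u$ through the canonical (exact-reachability) envelopes, or argue monotonicity of the AGT orthotopes as the paper asserts in Theorem~\ref{thm:ss_agtr_ub}.

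\textbf{Step 2: admissibility in $\mathbb{R}^p$.} For product noise, the sliding and dilation properties of~\cite{nissim2007smooth} (Lemma 2.5 and Example 2 there) must hold under a shift $\Delta\in\mathbb{R}^p$ with $\|\Delta\|_1\le\alpha$ and a dilation $e^{\lambda}$ applied to every coordinate.
\begin{itemize}
\item Sliding: the shift splits across coordinates and its cost adds up, so the $\ell_1$-sensitivity (a sum over coordinates, as in $\overline{A^k}$) is exactly what is controlled.
\item Dilation: scaling all $p$ coordinates multiplies the density by $e^{p\lambda}$ times a product of per-coordinate ratios, so the dilation cost grows linearly in $p$.
\end{itemize}
For the Cauchy (heavy-tailed, pure) case, the density $\propto 1/(1+z^2)$ gives per-coordinate dilation cost at most $2|\lambda|$ and sliding cost at most $2\|\Delta\|_1$. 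Taking $\alpha=\epsilon/6$ and $\lambda=\beta$ with $\beta\le\epsilon/(6p)$ keeps the dilation contribution within the $\epsilon/6$-type budget of Definition~\ref{def:cauchy_mech}, giving $(\epsilon,0)$-DP.

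For the Laplace case, dilation is not bounded pointwise. Following~\cite{nissim2007smooth}, I would bound the per-coordinate dilation cost by $|\lambda|(1+|z|/b)$ and take a union bound over coordinates on the event $|z_j|/b\le\ln(2p/\delta)$. That event has probability at least $1-\delta/2$, which yields the stated constraint $\beta\le\epsilon/\big(2p(1+2\ln(2p/\delta))\big)$ and $(\epsilon,\delta)$-DP.

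\textbf{Step 3: conclusion and main obstacle.} Applying the generic smooth-sensitivity theorem with $S=u$ and the admissible product noise gives both claims. I expect the main obstacle to be the Laplace dilation bound with precise constants. The union bound and the allocation of $\epsilon$ between sliding and dilation must reproduce exactly the factor $1+2\ln(2p/\delta)$. I would first try splitting $\epsilon/2$ to sliding and $\epsilon/2$ to dilation, charging $\delta/2$ to each tail event, and adjust if the constants do not match.
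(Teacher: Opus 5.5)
Your proposal is correct and follows the paper's route. You apply Lemma~2.5 of \cite{nissim2007smooth} with $S=u(\beta)$, whose $\beta$-smoothness the paper likewise takes from the Theorem~\ref{thm:ss_agtr_ub} argument, and you verify $(\alpha,\beta)$-admissibility of the product densities: sliding is controlled coordinatewise by the Lipschitz log-density, which yields the $\ell_1$ norm, and dilation by the Jacobian $e^{p\lambda}$ plus per-coordinate terms, bounded pointwise for Cauchy and on the union-bound event $\max_j|z_j|\le\ln(2p/\delta)$ for Laplace. The one constant to repair is the Laplace dilation term: $|z_j|(e^{\lambda}-1)-\lambda$ is not bounded by $|\lambda|(1+|z_j|)$ in general when $\lambda>0$, because $e^{\lambda}-1>\lambda$; using $e^{\beta}-1\le 2\beta$ for $\beta\le 1$ instead gives $\beta(1+2|z_j|)$, and this factor $2$ is exactly what produces the $1+2\ln(2p/\delta)$ in the stated constraint.
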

\begin{proof}
Both noise densities are admissible in the sense of \cite[Def.~2.4]{nissim2007smooth}, so the claim is \cite[Lemma~2.5]{nissim2007smooth} with $S = u(\beta)$ and $\alpha = \epsilon/6$, resp.\ $\alpha = \epsilon/2$; the admissibility is proved in Appendix~\ref{app:admisibillity_proof}.
\end{proof}

\subsubsection*{Advanced Composition for Repeated Release} Where we release the parameters $n$ times along one trajectory, we can apply advanced composition in order to get tighter privacy guarantees than basic composition. Formally:

\begin{corollary}[Advanced Composition for Repeated Release]
\label{cor:ags_coordinate_advanced_composition}
Suppose that each of the $n$ releases along the trajectory of Algorithm~\ref{alg:ags} is obtained by Corollary~\ref{thm:1_step_ags} at budget $(\epsilon_0,\delta_0)$.
Then, for any $\delta_{\mathrm{ac}}\in(0,1)$, their joint release is
\[
\left(
    \epsilon_0\sqrt{2n\ln(1/\delta_{\mathrm{ac}})}
    +n\epsilon_0\!\left(e^{\epsilon_0}-1\right),
    \;
    n\delta_0+\delta_{\mathrm{ac}}
\right)\text{-DP}.
\]
\end{corollary}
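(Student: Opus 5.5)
The plan is to reduce the statement to the standard advanced composition theorem of Dwork, Rothblum and Vadhan \cite{dwork2010boosting} in its $(\epsilon_0,\delta_0)$ form. That theorem says the adaptive $n$-fold composition of $(\epsilon_0,\delta_0)$-DP mechanisms is $(\epsilon_0\sqrt{2n\ln(1/\delta_{\mathrm{ac}})}+n\epsilon_0(e^{\epsilon_0}-1),\,n\delta_0+\delta_{\mathrm{ac}})$-DP for every $\delta_{\mathrm{ac}}\in(0,1)$. The work therefore lies in checking that the $n$ releases along the trajectory really are an adaptive composition of mechanisms, each satisfying $(\epsilon_0,\delta_0)$-DP with respect to the same adjacency relation on the full dataset $D$.

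\textbf{Step 1: the $t$-th release as a mechanism.} I will formalise the $t$-th release as a mechanism $\mathcal{A}_t(D; o_1,\ldots,o_{t-1})$. It takes the dataset and the previously released (noisy) parameters, continues the training from the appropriate point, computes $\overline{\mathrm{SS}^\beta}$ via the AGT envelopes as in \S\ref{ssec:post_train_priv}, and adds the noise of Corollary~\ref{thm:1_step_ags}.

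\textbf{Step 2: per-release privacy.} For each fixed prefix $o_{<t}$, the map $D\mapsto \mathcal{A}_t(D;o_{<t})$ is exactly a single release of the type covered by Corollary~\ref{thm:1_step_ags}. Concretely, it is a deterministic training map $D\mapsto\theta^{(t)}(D;o_{<t})$ followed by smooth-sensitivity noise. The valid parameter-space bounds of Definition~\ref{def:valid_parameter_bounds} are computed for that same map, so $\overline{\mathrm{SS}^\beta}$ is a $\beta$-smooth upper bound by the argument preceding the corollary. Hence each $\mathcal{A}_t(\cdot;o_{<t})$ is $(\epsilon_0,\delta_0)$-DP; for the Cauchy variant, $\delta_0=0$.

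\textbf{Step 3: apply the theorem.} Advanced composition holds for adaptively chosen mechanisms, so I will invoke it directly with parameters $(\epsilon_0,\delta_0,n,\delta_{\mathrm{ac}})$, which yields the stated bound. In the pure-DP case it degrades gracefully to $(\cdot,\delta_{\mathrm{ac}})$.

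\textbf{Main obstacle: Step 2 under adaptivity.} If the trajectory is continued from the \emph{nominal} (un-noised) parameters rather than the released ones, later releases are not post-processings of earlier outputs. Each $\mathcal{A}_t$ is then just a different function of $D$ (training for more steps), and Corollary~\ref{thm:1_step_ags} applied to that longer training map still gives $(\epsilon_0,\delta_0)$-DP. Composition of non-adaptive mechanisms on the same $D$ is a special case of the adaptive theorem, so this case is also covered. I would treat both readings explicitly to confirm that the bound holds either way. I would also verify that the adjacency notion, substitution with $d(D,\tilde D)=1$ in the smooth-sensitivity analysis, matches the one used in the composition theorem.
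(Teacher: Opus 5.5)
Your proposal is correct and follows the paper's proof: once the previously released point (the window's initialization in Algorithm~\ref{alg:ags}) is fixed, each release is $(\epsilon_0,\delta_0)$-DP by Corollary~\ref{thm:1_step_ags}, and the stated bound is then exactly the adaptive advanced composition theorem \cite[Thm.~III.3]{dwork2010boosting}. Your conditioning-on-the-prefix argument just spells out what the paper compresses into ``adaptively composed along one trajectory.'' The nominal-continuation reading is not needed, because Algorithm~\ref{alg:ags} restarts each window from $\theta^{\mathrm{curr}}_{\mathrm{private}}$.
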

\begin{proof}
Each release is $(\epsilon_0,\delta_0)$-DP by Corollary~\ref{thm:1_step_ags} and the releases are adaptively composed along one trajectory, so the bound is the advanced composition theorem of \cite[Thm.~III.3]{dwork2010boosting}.
\end{proof}

\subsection{Abstract Gradient Sampling (AGS)} \label{ssec:ags}

Using AGT-R for post-training privatization bears substantial similarities with existing private training algorithms: gradients are clipped to enforce a bounded sensitivity and carefully calibrated noise is added to privatize the model parameter. In popular private learning algorithms, however, noise addition is done at each step of the learning process. In this section, we generalize post-training privatization to allow the application of AGT-R privatization at intermediate steps of the learning algorithm. We term this generalization \textit{Abstract Gradient Sampling}. 

To make this generalization we define the notion of intermediate privatization through \textit{``sampling''} at step $i \in \{1,\dots,n_s\}$: exactly as in Corollary~\ref{thm:1_step_ags} but with $u(\beta, i)$ defining an upper-bound on the $\beta$-smooth sensitivity at step $i$ we can use the single release mechanism: $\theta_{\text{private}}^i = \theta^i + \eta_{u(\beta, i)}$
%
While $\theta_{\text{private}}^i$ can be released with an accompanying privacy guarantee, we are often only interested in the final learning parameter, therefore, we must establish how to carry out our private learning algorithm from an intermediate point. In what follows we first demonstrate how multiple releases affects the AGT semantics and how to leverage tight privacy accounting for multiple releases. 

\subsubsection{Multiple Release and AGT Semantics}

The AGT algorithm accounts for dataset sensitivity through the use of abstract interpretation and formally verified parameter envelopes. Once a parameter has been released with DP guarantees, however, the DP mechanism accounts for the dataset sensitivity, and intuitively the parameter envelopes are no longer necessary. We make this formal with Lemma~\ref{lem:collapse}:

\begin{lemma}[Envelope Collapse]\label{lem:collapse}
Sampling collapses certified envelopes $\forall k$ onto the released point: $[\theta^{k,i}_L,\, \theta^{k,i}_U] \;=\; \{\theta^i_{\mathrm{private}}\}.$
\end{lemma}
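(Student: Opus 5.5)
The plan is to read Lemma~\ref{lem:collapse} as a statement about the \emph{conditional} semantics of the valid parameter-space bounds (Definition~\ref{def:valid_parameter_bounds}) once a privatized iterate has been released. The key move is to redefine what the training procedure after step $i$ actually is. Once $\theta^i_{\mathrm{private}}$ is published, the algorithm continues from $\theta^i_{\mathrm{private}}$, not from the nominal $\theta^i$. Steps $i+1,\dots,n_s$ are therefore a new gradient-based mechanism $\mathcal{M}(f,\theta^i_{\mathrm{private}},\cdot)$. Its initialization is a public, fixed quantity: a post-processing of an already-private output, conditioned upon by the adversary.

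First I will fix an arbitrary realization $\hat\theta := \theta^i_{\mathrm{private}}$ and an arbitrary $k$. For every $\tilde D$ with $d(D,\tilde D)\le k$, I will check that the state of the continued computation at step $i$ equals $\hat\theta$. This holds because the remaining algorithm reads its state only from the released value, exactly as $\theta_{\mathrm{init}}$ is treated as fixed and shared across datasets in Definition~\ref{def:valid_parameter_bounds}. Hence the set of reachable step-$i$ parameters over $B_k(D)$ is the singleton $\{\hat\theta\}$. The tightest valid envelope, namely the smallest orthotope containing this set, is therefore $[\hat\theta,\hat\theta]$, so $\theta^{k,i}_L=\theta^{k,i}_U=\hat\theta$. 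I also need soundness, i.e.\ that this singleton envelope is still a valid bound under Definition~\ref{def:valid_parameter_bounds} for subsequent AGT propagation. For this I will argue by induction on later steps: AGT started from a degenerate interval reproduces the usual AGT guarantee with $\theta_{\mathrm{init}}$ replaced by $\hat\theta$.

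Second, I will justify that discarding the pre-release envelopes loses no privacy. By Corollary~\ref{thm:1_step_ags}, the release at step $i$ is $(\epsilon,\delta)$-DP with respect to the sensitivity captured by $[\theta^{k,i}_L,\theta^{k,i}_U]$. The dependence of later iterates on $D$ through steps $1,\dots,i$ is then fully accounted for by that release. Everything afterwards is composed adaptively, which is the setting of Corollary~\ref{cor:ags_coordinate_advanced_composition}, so the new envelopes need only track data influence from step $i+1$ onward.

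The main obstacle is the conceptual gap between the \emph{actual} reachable set and the adaptive-composition viewpoint. Unconditionally, different datasets induce different distributions of $\theta^i_{\mathrm{private}}$, so the envelope does not literally collapse. The collapse holds only conditionally on the observed release, and I expect to state this explicitly as an assumption of the adaptive composition framework, namely that the mechanism at later steps receives the released output as auxiliary input. I would make this precise by phrasing the lemma as: for each fixed value of the released point, the envelopes of the continued mechanism are singletons.
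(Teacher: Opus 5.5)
Your proposal is correct. On the privacy side it matches the paper: its part (i) is exactly your appeal to Corollary~\ref{thm:1_step_ags} for the release at step $i$.

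You diverge on the \emph{AGT semantics} half. The paper argues from the consequences of a collapsed envelope. If $\theta^{k}_L=\theta^{k}_U$ for all $k$, every sound-bounding value $\mathfrak{d}(x,k)$ vanishes, so $\overline{\mathrm{SS}^\beta}=0$. The AGT release would then report $\theta^i_{\mathrm{private}}$ deterministically, which is consistent with that point already being public. You instead argue from Definition~\ref{def:valid_parameter_bounds} itself. Conditioned on the realized release $\hat\theta$, the continuation is a fresh mechanism $\mathcal{M}(f,\hat\theta,\cdot)$ whose initialization is shared by every $\tilde D$ with $d(D,\tilde D)\le k$. Its step-$i$ reachable set is therefore $\{\hat\theta\}$, and later AGT propagation is sound by the ordinary AGT guarantee with $\theta_{\mathrm{init}}$ replaced by $\hat\theta$.

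The paper's route is shorter, but it only checks self-consistency of the collapsed state. It never explains why a singleton is a valid bound when, unconditionally, neighbouring datasets induce different distributions of $\theta^i_{\mathrm{private}}$. Your route supplies that justification. It also makes explicit that the lemma is a statement conditional on the released value, which is exactly the per-window hypothesis that adaptive composition requires.

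One small mismatch: Algorithm~\ref{alg:ags} uses per-window budgets. The sequential composition of Theorem~\ref{thm:AGS-comp} is therefore a closer fit than the equal-budget advanced-composition Corollary~\ref{cor:ags_coordinate_advanced_composition} you cite. Either one works once your first step has established per-window validity.
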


\begin{proof}
To safely collapse envelopes one must prove two properties (i) the privacy guarantee is accounted for and (ii) the AGT semantics are preserved. Satisfaction of (i) follows directly from Thm.~\ref{thm:1_step_ags} which gives the DP mechanism. Satisfaction of (ii) is done by observing that if $\forall k, \theta^{k}_L = \theta^{k}_U$, then $\forall k \max_k \frak{\delta}(x, k) = 0$ and finally $\forall \beta, \overline{\mathrm{SS}^\beta} = 0$ implying that the AGT release deterministically reports $\theta^{i}_{\text{private}}$. Thus collapsing all envelopes preserves the privacy guarantee and maintains valid AGT semantics. 
\end{proof}

\subsubsection{Tighter Privacy Accounting for AGS}

The final privacy guarantee for AGS must cover all of its parameter releases, which can be done with composition: 

\begin{theorem}[AGS Sequential Composition]\label{thm:AGS-comp}
Let $S = \{i_1 < \dots < i_m\} \subseteq \{1,\dots,n_s\}$ with $i_m = n_s$ be the sampling steps, partitioning training into \textit{windows} $(i_{j-1}, i_j]$ (with $i_0 = 0$), each certified by $\mathcal{M}_{\mathrm{AGT}}$ from the release preceding it. If the release at step $i_j$ is $(\epsilon_j, \delta_j)$-DP, then $\theta^{n_s}_{\mathrm{private}}$ is released with guarantee  $\left(\textstyle\sum_{j=1}^{m}\epsilon_j,\;\; \sum_{j=1}^{m}\delta_j\right)\text{-DP}.$
\end{theorem}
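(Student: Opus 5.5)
The plan is to view AGS as an adaptive composition of $m$ mechanisms, one per window, and then apply the basic (sequential) composition theorem for approximate DP \cite{dwork2014algorithmic}.

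\textbf{Step 1: define the window mechanisms.} For $j=1,\dots,m$ let $\mathcal{A}_j$ be the mechanism that takes the dataset $D$ and the previous public release $\theta^{i_{j-1}}_{\mathrm{private}}$ (with $\theta^{i_0}_{\mathrm{private}} := \theta_{\mathrm{init}}$, which is data-independent). It runs the training steps $i_{j-1}+1,\dots,i_j$ under $\mathcal{M}_{\mathrm{AGT}}$ and outputs $\theta^{i_j}_{\mathrm{private}} = \theta^{i_j} + \eta_{u(\beta,i_j)}$.

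\textbf{Step 2: each window is $(\epsilon_j,\delta_j)$-DP given the past.} By Lemma~\ref{lem:collapse}, after the release at $i_{j-1}$ all envelopes collapse onto $\theta^{i_{j-1}}_{\mathrm{private}}$. The AGT certification for window $j$ therefore starts from a fixed public point, exactly as AGT starts from a fixed $\theta_{\mathrm{init}}$ in Definition~\ref{def:valid_parameter_bounds}. The computed envelopes are then valid parameter-space bounds for the map $D \mapsto \theta^{i_j}$, conditional on that initialization. Corollary~\ref{thm:1_step_ags}, applied with initialization $\theta^{i_{j-1}}_{\mathrm{private}}$, then shows that for every fixed value of the previous release, $\mathcal{A}_j(\cdot\,;\theta^{i_{j-1}}_{\mathrm{private}})$ is $(\epsilon_j,\delta_j)$-DP with respect to $D$. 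This is the hypothesis of the theorem.

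A subtlety needs care here: the mini-batch randomness and the data in window $j$ must be handled only through the AGT envelopes started at the released point. Unreleased nominal parameters from earlier windows must not leak in. Lemma~\ref{lem:collapse} is exactly what guarantees this.

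\textbf{Step 3: compose.} The joint output $(\theta^{i_1}_{\mathrm{private}},\dots,\theta^{i_m}_{\mathrm{private}})$ is an adaptive composition in which each mechanism depends on $D$ and on the previous outputs only. The standard adaptive sequential composition theorem for $(\epsilon,\delta)$-DP (e.g.\ \cite[Thm.~3.16]{dwork2014algorithmic}, or the $k$-fold adaptive version in \cite{dwork2010boosting}) gives that the full transcript is $(\sum_j\epsilon_j,\sum_j\delta_j)$-DP. Because $i_m=n_s$, $\theta^{n_s}_{\mathrm{private}}$ is a coordinate projection of this transcript, so post-processing yields the claim.

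\textbf{Main obstacle.} The delicate step is Step 2. One must justify that conditioning on a released noisy point legitimately resets the AGT analysis, i.e.\ that $\theta^{i_j}$ depends on earlier data \emph{only} through $\theta^{i_{j-1}}_{\mathrm{private}}$. I would argue this by making the algorithm explicitly continue training from $\theta^{i_{j-1}}_{\mathrm{private}}$ rather than from the nominal $\theta^{i_{j-1}}$. The per-window mechanism is then a function of $(D,\text{previous output},\text{fresh randomness})$, which is the form the adaptive composition theorem requires.
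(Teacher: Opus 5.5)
Your proposal is correct and follows the route the paper itself implies. The paper gives no separate proof of this theorem; it justifies it, in the text around Algorithm~\ref{alg:ags} and in the analogous proof of Corollary~\ref{cor:ags_coordinate_advanced_composition}, by treating each window's release as an $(\epsilon_j,\delta_j)$-DP mechanism restarted from the previous noisy release (Lemma~\ref{lem:collapse}) and then applying adaptive sequential composition. You also make explicit two points the paper leaves implicit: the per-window guarantee must hold for every fixed value of the previous release, which Algorithm~\ref{alg:ags} ensures by restarting AGT from $\theta^{\mathrm{curr}}_{\mathrm{private}}$, and $\theta^{n_s}_{\mathrm{private}}$ is a post-processing of the full transcript.
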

As with prior sections, we highlight that sequential composition is substantially looser the state-of-the-art numerical composition \cite{gopi2021numerical}. We can leverage these results by observing that for every AGS mechanism (Corollary~\ref{thm:1_step_ags}), $\mathcal{M}$, we can let
\[
    \delta_{\mathcal{M}}(\epsilon)
    :=
    \inf\bigl\{
        \delta :
        \mathcal{M}\text{ is }(\epsilon,\delta)\text{-DP}
    \bigr\}
\]
denote its privacy curve which can be subsequently optimized by numerical accounting techniques which can substantially improve the privacy guarantee \cite{gopi2021numerical}.  

\subsubsection{Statement of the AGS Algorithm}

The complete algorithm we propose, \textbf{Abstract Gradient Sampling (AGS)}, can be found in Algorithm \ref{alg:ags}. AGS receives a user-defined set of steps at which sampling should occur, along with the desired privacy budget for the windows. Then, for each window, AGT is performed to obtain parameter envelopes and compute the upper bound on smooth sensitivity, which is subsequently leveraged to perform the noise addition and privatize the window's parameters, as per Theorem \ref{thm:1_step_ags}. Since the privacy cost has been paid by sampling, the parameter bounds are reset (Lemma \ref{lem:collapse}) and the budget is accounted for through composition (Theorem \ref{thm:AGS-comp}). Finally, the algorithm returns a private set of parameters, along with a total privacy guarantee.


\subsection{Analysis of AGS} \label{ssec:ags_analysis}

The power of AGS lies in the ability to control the moment of sampling (i.e., privatization). This is a key advantage: indeed, due to this particularity of our algorithm, it is possible to design a strategy that only releases parameters when the smooth-sensitivity-calibrated noise has a lower magnitude than the noise DP-SGD would accumulate over the same steps. We investigate this strategy theoretically and provide closed-form expressions for the conditions that need to be satisfied for AGS to be preferable to DP-SGD. 
\begin{algorithm}[H]\small
\caption{AGS}
\label{alg:ags}
\begin{algorithmic}[1]
    \STATE \textbf{Input:} Dataset $D$, function $f$, initialization $\theta_{\mathrm{init}}$, sampling steps set $S \subseteq \{1,\dots,n_s-1\} \cup \{n_s\}$, step privacy budgets $\epsilon_i, \delta_i,\, \forall i \in S$.
    \STATE $\epsilon \leftarrow 0, \; \delta \leftarrow 0$
    \STATE $\theta^0_L = \theta_{\mathrm{init}} = \theta^0_U$
    \vspace{0.2em}
    \STATE $\theta^{\mathrm{curr}}_{\mathrm{private}} = \theta_{\mathrm{init}}$ \COMMENT{Window's private parameter start}
    \vspace{0.3em}
    \FOR{$i$ \textbf{in} $S$}
        \STATE $[\theta_L^i,\theta_U^i],\theta^i \leftarrow \mathcal{M}_{\mathrm{AGT}}(f, \theta^{\mathrm{curr}}_{\mathrm{private}}, D, \cdot)$
        \vspace{0.2em}
        \STATE $\theta^{\mathrm{curr}}_{\mathrm{private}} \leftarrow \theta^i + \mathrm{AGS}(\theta_L^i,\theta_U^i)$
        \vspace{0.2em}
        \STATE \COMMENT{Paid privacy cost - reset interval}
        \vspace{0.2em}
        \STATE $\theta^i_{U} = \theta^i_{L} = \theta^{\mathrm{curr}}_{\mathrm{private}}$ 
        \vspace{0.2em}
        \STATE $\epsilon \leftarrow \epsilon + \epsilon_i, \; \delta \leftarrow \delta + \delta_i$
    \ENDFOR
    \vspace{0.3em}
    \RETURN $\theta^{\mathrm{curr}}_{\mathrm{private}}, \epsilon, \delta$
\end{algorithmic}
\end{algorithm}
\subsubsection*{Utility Analysis - Pure DP}
The analysis of the incurred privacy cost is straightforward via the scale parameter and standard composition for both Laplace DP-SGD and AGS. However, utility is considerably difficult to characterize: noise addition in parameter space affects the per-step quality parameter estimate non-linearly and additionally, gradient descent counteracts this effect. Thus, the utility analyses of Theorems \ref{thm:cauchy_lap_dom} and \ref{thm:laplace_gauss_dom} are not applicable. While our prior bounds introduced no additional assumptions on top of those made by Sosnin et. al. \cite{sosnin2025abstract}, in what follows we will need to reason about training dynamics, which require introducing additional assumptions. For $(\epsilon,0)$-DP, we present below a condition for AGS to outperform Laplace DP-SGD. The theorem's proof is deferred to Appendix \ref{app:proof_ags_pure}.

\begin{theorem}\label{thm:1_step_ags_cauchy}
Let the loss function $\mathcal{L}(\bm{\theta}, D)$ be $1$-dimensional, $\mu$-strongly convex, and $L$-smooth with a locally constant Hessian across the $s_p$-step window. Consider the DP-SGD parameter update over $s_p$ steps with a learning rate $\eta \le 1/L$, where the gradient is perturbed by i.i.d. noise $Z_s \sim \mathrm{Lap}((s_p\Delta f) / \epsilon)$ with mean zero (so that the final per-window privacy guarantee is ($\epsilon,0)$-DP by basic composition). For a user-defined confidence level $\alpha \in (0, 1)$, a single AGS update using Cauchy noise with scale $\gamma = 6\,\mathrm{SS}^\beta(f)/\epsilon$ and $\beta < \epsilon/6$ yields a strictly tighter \textit{parameter space} error bound than Laplace DP-SGD with probability $1-\alpha$ whenever:
\begin{align*}
    \mathrm{SS}^\beta(f_{\theta},D) & < \frac{s_p \Delta f}{3 \tan\left(\frac{\pi}{2}(1-\alpha)\right)} \times \\  \times \max &\left( \sqrt{\frac{\eta \ln{\frac{\alpha}{2}}}{c\,\mu(\eta\mu - 2)}}, \; \frac{-\eta \ln{\frac{\alpha}{2}}}{c} \right)\!,
\end{align*}
where $c > 0$ is an absolute constant.
\end{theorem}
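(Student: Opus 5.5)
The plan is to compare two high-probability error bounds in parameter space: one for the accumulated DP-SGD noise, one for the single AGS Cauchy perturbation. We then ask when the Cauchy quantile falls below a lower bound on the DP-SGD deviation.

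\textbf{Step 1: the AGS side.} AGS adds one Cauchy$(\gamma)$ draw with $\gamma = 6\,\mathrm{SS}^\beta(f_\theta,D)/\epsilon$. By the Cauchy CDF, $\mathbb{P}(|\eta_{\mathrm{AGS}}|\le t)=\tfrac{2}{\pi}\arctan(t/\gamma)$. So with probability $1-\alpha$ the AGS error is at most
\[
t_C=\gamma\tan\!\big(\tfrac{\pi}{2}(1-\alpha)\big)=\tfrac{6\,\mathrm{SS}^\beta}{\epsilon}\tan\!\big(\tfrac{\pi}{2}(1-\alpha)\big).
\]

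\textbf{Step 2: the DP-SGD side.} With a locally constant Hessian $h\in[\mu,L]$ and $\eta\le 1/L$, the noisy recursion is linear:
\[
\theta_{s+1}-\theta^\star=(1-\eta h)(\theta_s-\theta^\star)-\eta Z_s .
\]
Subtracting the noiseless trajectory leaves the noise contribution $E=-\eta\sum_{s=0}^{s_p-1}(1-\eta h)^{s_p-1-s}Z_s$. This is a weighted sum of i.i.d.\ Laplace variables with scale $b=s_p\Delta f/\epsilon$, hence sub-exponential. I would use a Bernstein-type tail bound for weighted Laplace sums,
\[
\mathbb{P}(|E|\ge t)\le 2\exp\!\Big(-c\min\Big(\tfrac{t^2}{b^2\|w\|_2^2},\tfrac{t}{b\|w\|_\infty}\Big)\Big),
\]
with $w_s=\eta(1-\eta h)^{s_p-1-s}$, $\|w\|_\infty\le\eta$ and $\|w\|_2^2\le \eta^2/(1-(1-\eta\mu)^2)=\eta/(\mu(2-\eta\mu))$. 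Setting the right-hand side equal to $\alpha$ and inverting gives a radius
\[
t_L\asymp b\max\!\Big(\sqrt{\tfrac{\eta\ln(2/\alpha)}{c\,\mu(2-\eta\mu)}},\ \tfrac{\eta\ln(2/\alpha)}{c}\Big).
\]
This matches the stated expression after rewriting $\ln(2/\alpha)=-\ln(\alpha/2)$ and $2-\eta\mu=-(\eta\mu-2)$.

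\textbf{Step 3: conclude.} Requiring $t_C<t_L$ and solving for $\mathrm{SS}^\beta$ yields the claimed condition. The $1/\epsilon$ cancels: the right-hand side carries $s_p\Delta f/\epsilon$ and the left carries $6\,\mathrm{SS}^\beta/\epsilon$. The leftover $6$ pairs with a factor $2$ from the two-sided inversion to give the $3$ in the denominator. The privacy side needs nothing new. AGS is $(\epsilon,0)$-DP by Corollary~\ref{thm:1_step_ags} with $p=1$. DP-SGD is $(\epsilon,0)$-DP because each of its $s_p$ steps spends $\epsilon/s_p$, and basic composition (Theorem~\ref{thm:AGS-comp}) sums these to $\epsilon$.

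\textbf{Main obstacle.} The hard step is turning the Bernstein upper tail into a quantity that soundly \emph{favors} DP-SGD. Strictly, to claim the AGS bound is tighter, we should compare against the DP-SGD $(1-\alpha)$ error radius, and the Bernstein radius is only an upper bound on that. My reading is that ``error bound'' in the statement means exactly these high-probability certified radii, and I would make that interpretation explicit. I would also track the constant $c$, and the factor $2$ it absorbs, carefully so that the $\max$ expression comes out in exactly the stated form.
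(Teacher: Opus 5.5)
Your route is the same as the paper's. You take the exact Cauchy quantile for AGS, unroll the linear error recursion under a constant Hessian, and apply the weighted sub-exponential Bernstein bound with $\|a\|_\infty\le\eta$ and $\|a\|_2^2\le \eta/(\mu(2-\eta\mu))$. You then invert at level $\alpha$, take the larger of the two regime radii, and compare. Your reading of ``error bound'' as the certified Bernstein radius is also exactly the paper's convention, so the concern in your last paragraph is handled the same way there.

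The step that fails as you describe it is where the $3$ comes from. In your tail bound you normalize by the Laplace scale $b$. The factor $2$ from two-sidedness is already used up inside $\ln(2/\alpha)$ when you solve $2\exp(-c\min(\cdot,\cdot))=\alpha$. Carried out literally, your steps give $t_L=b\max(\dots)$ and so a $6$, not a $3$, in the denominator.

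In the paper the factor $2$ comes from the Bernstein inequality being stated with $K=\max_s\|Z_s\|_{\psi_1}$ rather than $b$. For $Z_s\sim\mathrm{Lap}(b)$ one computes $\mathbb{E}\exp(|Z_s|/K)=K/(K-b)$, which is at most $2$ iff $K\ge 2b$, so $\|Z_s\|_{\psi_1}=2b$. Substituting $K=2b$ gives $E_L=\frac{2s_p\Delta f}{\epsilon}\max(\dots)$. Imposing $\frac{6\,\mathrm{SS}^\beta}{\epsilon}\tan\!\big(\tfrac{\pi}{2}(1-\alpha)\big)<E_L$ then yields exactly the stated condition.

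You cannot absorb the discrepancy by renaming $c$. The two arguments of the $\max$ depend on $c$ differently, as $c^{-1/2}$ and $c^{-1}$, so no single rescaling turns your $\frac{1}{6}\max(\cdot,\cdot)$ into the stated $\frac{1}{3}\max(\cdot,\cdot)$. You need the $\psi_1$-norm computation.
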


\subsubsection*{Utility Analysis - Approximate DP}

We now take the \textit{exact} same approach in the approximate DP case, namely Gaussian DP-SGD versus AGS with Laplace noise calibrated to the smooth sensitivity. Proof is deferred to Appendix \ref{app:proof_ags_approx}.


\begin{figure*}[t]
  \centering
  
  \begin{subfigure}[b]{\textwidth}
    \centering
    \includegraphics[width=\textwidth]{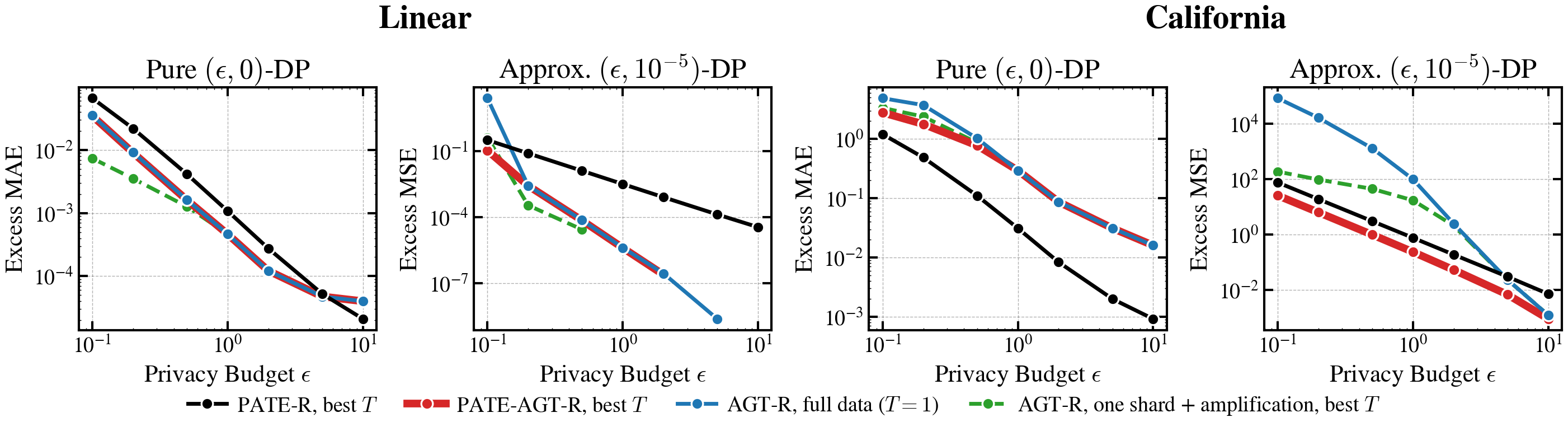}
    \caption{PATE comparison. Error above the non-private one for PATE-R and PATE-AGT-R at their best $T$, AGT-R on the full data, and AGT-R on one shard with the amplified budget.}
    \label{fig:agtr_pate}
  \end{subfigure}
  \begin{subfigure}[b]{\textwidth}
    \centering
    \includegraphics[width=\textwidth]{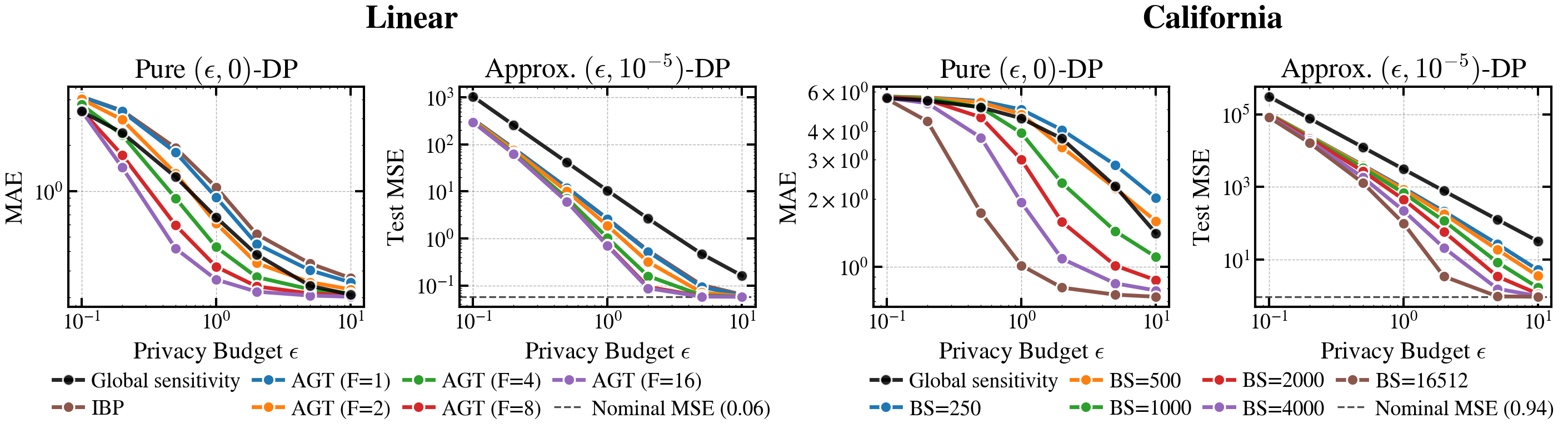}
    \caption{AGT-R ablations. Linear: the concretisation frequency $F$ of AGT against IBP; California: the batch size. Pure DP with the tighter Cauchy release at $\beta = \epsilon/2$ (MAE), approximate DP with Laplace at $\delta = 10^{-5}$ (test MSE), global sensitivity in black. Mean over 200 noise trials.}
    \label{fig:agtr_ablation}
  \end{subfigure}
  \caption{AGT-R on Linear and California under pure (left of each pair) and approximate (right) DP.}
  \label{fig:agtr_real}
\end{figure*}

\begin{theorem} \label{thm:1_step_ags_gauss}
    Consider the regressor $f$, dataset $D$, confidence level $1 - \alpha$, an absolute constant $c > 0$ from the subgaussian Hoeffding inequality, and a fixed failure probability $\delta \in (0,1)$ of approximate-DP. Assume the loss $\mathcal{L}$ is $\mu$-strongly convex and $L$-smooth with a locally constant Hessian across the $s_p$-step window, and that the step size satisfies $\eta \leq 1/L$. Then a single addition of Laplace noise with scale $b = 2\,\mathrm{SS}^\beta(f)/\epsilon$ and $\beta < \epsilon/(2\ln(2/\delta))$ (AGS) provides tighter error bounds than Gaussian DP-SGD with parameter $\sigma = s_p\Delta_2 f \sqrt{2\ln(1.25s_p/\delta)}/\epsilon$ (composing to $(\epsilon, \delta)$-DP across the window) in the sense that $E_L < E_G$, whenever:
    \begin{align*}
    \mathrm{SS}^\beta(f_{\theta},D) < \frac{1}{2\ln\left(\frac{1}{\alpha}\right)}   \sqrt{\frac{16\,\eta\,s_p^2(\Delta_2 f)^2 \ln\left(\frac{1.25 s_p}{\delta}\right)\ln\left(\frac{2}{\alpha}\right)}{3c\,\mu(2-\eta\mu)}}\!,
\end{align*}
    where $\mathrm{SS}^\beta(f,x)$ is the $\beta$-smooth sensitivity and $\delta$ is the allowed privacy leakage in approximate DP.
\end{theorem}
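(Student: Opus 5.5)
We compare two high-probability error bounds in parameter space at confidence $1-\alpha$: the deviation $E_L$ that a single Laplace release calibrated to the smooth sensitivity adds to the window's endpoint, and the deviation $E_G$ that Gaussian DP-SGD accumulates over the $s_p$ steps of the same window. We then show that the stated inequality is exactly the condition $E_L<E_G$.

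\textbf{AGS side.} The AGS release adds $Z\sim\mathrm{Lap}(b)$ with $b=2\,\mathrm{SS}^\beta(f)/\epsilon$. Its privacy at level $(\epsilon,\delta)$ follows from Definition~\ref{def:laplace_approx_dp_mech} together with Corollary~\ref{thm:1_step_ags}. The tail $\mathbb{P}(|Z|>t)=e^{-t/b}$ gives, with probability $1-\alpha$,
\[
E_L = b\ln(1/\alpha) = 2\,\mathrm{SS}^\beta\ln(1/\alpha)/\epsilon .
\]
This produces the prefactor $1/(2\ln(1/\alpha))$ once we solve for $\mathrm{SS}^\beta$.

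\textbf{DP-SGD side.} Under the locally constant Hessian $H$ with $\mu\le H\le L$ and $\eta\le 1/L$, the noisy recursion is linear:
\[
\theta_{s+1}-\theta^\star=(1-\eta H)(\theta_s-\theta^\star)-\eta Z_s .
\]
Unrolling it, the noise contribution at the end of the window is $\sum_{s=0}^{s_p-1}\eta(1-\eta H)^{s}Z_{s}$. This is a weighted sum of independent Gaussians with $\sigma=s_p\Delta_2 f\sqrt{2\ln(1.25 s_p/\delta)}/\epsilon$. Per step the budget is $(\epsilon/s_p,\delta/s_p)$, and the window composes to $(\epsilon,\delta)$ by Theorem~\ref{thm:AGS-comp}.

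We bound the squared weights by the infinite geometric series:
\[
\sum_s\eta^2(1-\eta\mu)^{2s}\le \frac{\eta^2}{1-(1-\eta\mu)^2}=\frac{\eta}{\mu(2-\eta\mu)} .
\]
This explains the factor $\mu(2-\eta\mu)$. Applying the subgaussian Hoeffding inequality with absolute constant $c$ gives the two-sided tail
\[
\mathbb{P}(|\cdot|>t)\le 2\exp\!\Big(-\tfrac{c\,t^2}{\sigma^2\eta/(\mu(2-\eta\mu))}\Big).
\]
Setting the right-hand side to $\alpha$ yields a radius of the form $\sqrt{\sigma^2\eta\ln(2/\alpha)/(c\mu(2-\eta\mu))}$. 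Substituting $\sigma^2=2s_p^2(\Delta_2 f)^2\ln(1.25s_p/\delta)/\epsilon^2$ and tracking constants should produce the displayed $16/3$ factor, so that $E_G$ equals $1/\epsilon$ times the square root in the statement.

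\textbf{Combining.} The inequality $E_L<E_G$ becomes $2\,\mathrm{SS}^\beta\ln(1/\alpha)/\epsilon<E_G$. The $1/\epsilon$ factors cancel, and rearranging gives the claimed threshold.

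\textbf{Main obstacle.} The hard part is making the comparison honest. Strictly, $E_G$ is a deviation bound around the noiseless trajectory, and the constant $c$ (and hence the $16/3$) depends on which form of Hoeffding is used. We would therefore fix the convention $\|X\|_{\psi_2}^2\lesssim\sigma^2$, absorb all numerical slack into $c$, and check that the variance bound holds uniformly over $H\in[\mu,L]$ by monotonicity in $H$. We would also state explicitly that both errors are measured as deviations of the final window parameter from its noiseless counterpart, since the AGS noise is added once at the endpoint while the DP-SGD noise is contracted by the dynamics.
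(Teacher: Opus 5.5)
Your proposal follows the paper's proof essentially step for step: the linearized error recursion under a constant Hessian, the geometric-series bound $\|a\|_2^2\le \eta/(\mu(2-\eta\mu))$, the subgaussian Hoeffding inequality for $E_G$, the Laplace quantile $E_L=b\ln(1/\alpha)$, and then solving $E_L<E_G$ for $\mathrm{SS}^\beta$. The one constant you leave open is fixed in the paper by computing $\|\mathcal{N}(0,\sigma^2)\|_{\psi_2}^2=\tfrac{8}{3}\sigma^2$ exactly (from $\mathbb{E}\exp(X^2/t^2)\le 2$), so the Hoeffding denominator should be $\tfrac{8}{3}\sigma^2\|a\|_2^2$ rather than the $\sigma^2\|a\|_2^2$ you wrote. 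This factor gives $\tfrac{8}{3}\cdot 2=\tfrac{16}{3}$; as you wrote it you would get $2$, and absorbing the $\tfrac{8}{3}$ into $c$ would change what $c$ means in the displayed bound.
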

These theorem and subsequent proofs begin to demonstrate the core of the interplay between AGS and classical DP-SGD dynamics. Once we have access to the per-parameter error bounds, one can bound the total error for the whole set of parameters, albeit rather loosely, using the triangle inequality, or extend our approach using the Matrix Bernstein inequality in Thm. 5.4.1. of \cite{vershynin2025high}. Even having access to only the per-parameter utility error bounds, the performance degradation can be easily empirically measured by simply employing bound propagation techniques, such as IBP \cite{gowal2018effectiveness} or even AGT \cite{sosnin2025abstract} itself.

\subsection{Limitations and Future Work} 

AGS is one of the few algorithms to approach private learning from the lens of formal methods, and the first to employ parameter envelopes and frame learning as regression to satisfy differential privacy. Although performant in multiple scenarios, as we will demonstrate in \S\ref{sec:exp}, it comes with some limitations, which we discuss below.

\textbf{Computationally}, it is more expensive to obtain guarantees due to the fact that AGT costs roughly $4\times$ as much compared to a nominal training run, and the smooth sensitivity release requires one such pass per radius of the sensitivity staircase. Although in practice we require $< 10$ values for the staircase, this amounts to an order of magnitude increase in computational complexity. For tighter parameter envelopes, large batches significantly help, and so does more exact optimization (LP, MILP, MIQCP), which additionally increase the cost. \textbf{Scope-wise}, the noise of a single release grows with the number of released parameters, since the sensitivities of the individual coordinates add up in the certificate, so AGS is best suited to compact models, or to the trainable part of a larger pre-trained one, rather than to end-to-end training of deep networks. The parameter envelopes that the certificate is built on also loosen as they are propagated through non-linear layers, which makes deep models harder to certify tightly. \textbf{Regarding the privacy analysis}, the smooth sensitivity mechanism draws less benefit from the failure probability $\delta$ than mechanisms with data-independent noise such as the Gaussian mechanism, whose guarantees improve under composition; narrowing this gap would strengthen AGS under approximate differential privacy. Finally, each release along the training trajectory is currently accounted for as a separate mechanism and paid for from the same budget. A line of recent work shows that when only the final model is published, and the intermediate models are never observed, the privacy loss of an iterative algorithm can be far smaller than this composition suggests; bringing that view to the certified trajectory of AGS is the direction we find most promising.

\section{Experiments} \label{sec:exp}

In this section, we systematically  validate the strength of AGT-R and AGS in a variety of toy and real-world settings. 

\subsection{AGT-R} \label{ssec:agtr_exp}

\subsubsection{Condition Simulations}

We start by presenting a brief analysis of AGT-R from the lens of the conditions derived in Theorems \ref{thm:cauchy_lap_dom} and \ref{thm:laplace_gauss_dom}. We run simulations to assess at which ratios between the smooth and global sensitivity (i.e., $\rho = \text{SS}^\beta/\Delta_{p}f,\, p \in \{1,2\}$), for increasing confidence levels $1 - \alpha$, and (user-chosen) utility improvement constant $c$, the conditions hold. Figure \ref{fig:cond_table} shows the simulation: pure $(\epsilon, 0)$-DP in the left column and approximate $(\epsilon, \delta)$-DP in the right. 

The graph on the left hand side reveals that, in the best case, private predictions with Cauchy noise calibrated to the smooth sensitivity will yield smaller errors than Laplace-calibrated private predictions at confidence $1 - \alpha \approx 0.4$ and sensitivity ratio $\rho < 0.117$. This essentially means that, in order to preserve utility, our AGT-based upper bounds on $\text{SS}^\beta$ must be at least ten times smaller than the global sensitivity. However, this always holds for unbounded output spaces.

Turning to the approximate $(\epsilon, \delta)$-DP setting, we now restrict the plot to the high-confidence regime $1 - \alpha > 0.8$. The feasibility region is markedly more relaxed: since the reference is $\Delta_2 f \leq \Delta_1 f$ and both mechanisms are light-tailed, the admissible ratios exceed $1$, peaking at $\rho \approx 1.9$ for $c=1$. A ratio $\rho > 1$ means our bound need not even improve on the global sensitivity to win, so at $c=1$ the Laplace-calibrated smooth sensitivity is preferable across essentially the whole range, dropping below unity only past $1 - \alpha \approx 0.995$. A detailed analysis, including the constraints on $\beta$ that make these curves optimistic ceilings, is deferred to Appendix~\ref{app:feasibility}.

\begin{figure*}[t]
    \centering
    \includegraphics[width=\textwidth]{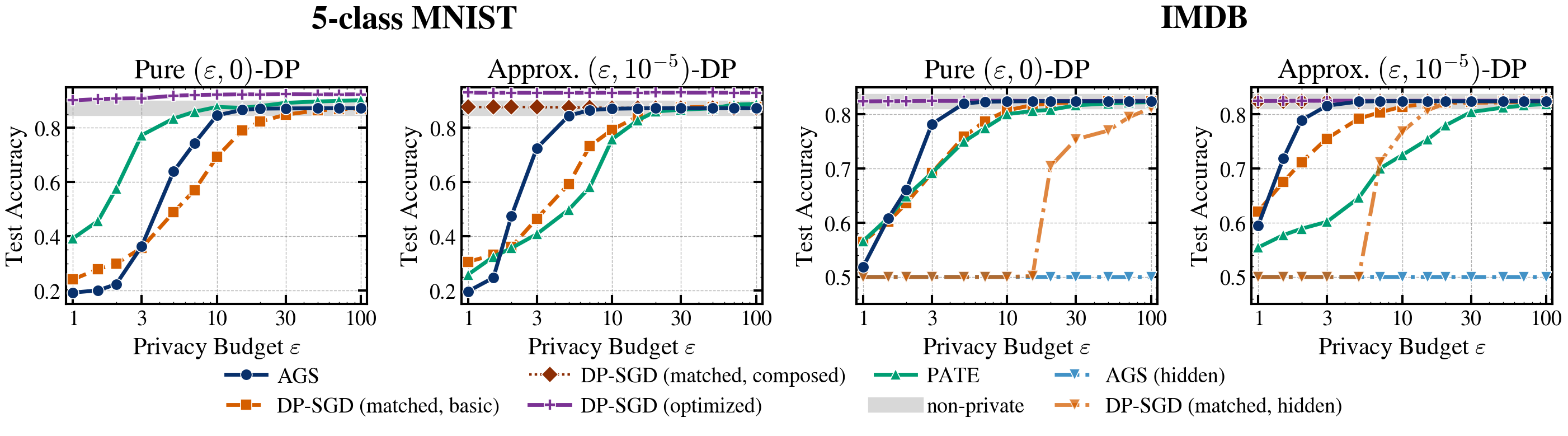}
    \caption{ Privacy--utility trade-offs of AGS for 5-class MNIST (left pair) and IMDB (right pair). AGS releases the trained parameters once (Cauchy mechanism under pure DP, Laplace under approximate DP, $\delta=10^{-5}$). Matched DP-SGD is accounted by basic composition or, under approximate DP, by exact composition of the Gaussian mechanism; optimized DP-SGD is tuned on a public selection set. On IMDB, the hidden-layer arms replace the logistic head with one hidden layer.}
    \label{fig:combined_mnist_imdb}
\end{figure*}

\subsubsection{Real-World Datasets}

For AGT-R, we consider two regression settings.
The first is a \emph{synthetic linear regression} task ($y = 2x + 1 + \xi$, 40{,}000 training points) learned by a linear model. The second is the California Housing dataset~\cite{pace1997cali, li2015acali, california_housing_sklearn}, comprising 20{,}640 records with 8 features that predict median house value, learned by an 8-64-1 ReLU network. We compare our novel method in the pure and approximate DP settings with data-independent private prediction, namely the Laplace and Gaussian mechanisms with global sensitivity. We additionally use as a baseline PATE \cite{papernot2016pate} with mean aggregation for regression (which we term PATE-R) and lastly equip each PATE shard with AGT-R capabilities and coin this technique PATE-AGT-R. For the two latter private prediction algorithms, we denote as $T$ the number of shards and report each at its best $T$ per budget. Lastly, we run AGT-R on a single shard with the budget amplified by sampling without replacement. Figure~\ref{fig:agtr_real} reports, for both datasets, the mean absolute error (MAE) under pure DP and the MSE under approximate DP ($\delta = 10^{-5}$): the top row compares AGT-R with the baselines above, the bottom row ablates AGT-R itself. Dataset, mechanism and training details are given in Appendix~\ref{app:hyperparam_agtr}.

\paragraph{Private prediction baselines (Figure~\ref{fig:agtr_pate}).}
Figure~\ref{fig:agtr_pate} reports the \emph{excess} error over the non-private prediction, which keeps arms that reach the non-private floor apart on the log axis; the raw scale with global sensitivity is Figure~\ref{fig:agtr_gs} (Appendix~\ref{app:agtr_gs}). There, across both datasets and settings, private prediction with global sensitivity ranks last in MAE (pure DP) and MSE (approximate DP): at $\epsilon = 1$, the best arm's error is $57$ times (linear) and $6$ times (California) lower in pure DP, and five and three orders of magnitude lower in approximate DP. Because of the task's simplicity, in the linear dataset case, AGT-R's and PATE's variants are essentially indistinguishable from a performance point of view, with one exception. Indeed, at a low budget ($\epsilon = 0.1$) in approximate DP, AGT-R on the full data is two orders of magnitude worse in terms of error; however, through dataset subsampling and amplification, this gap is quickly closed.

On the California dataset, the differences are more noticeable. While AGT-R on the full data is the least performant of the four arms due to full-data training, it still consistently outperforms data-independent private prediction (Appendix~\ref{app:agtr_gs}). Notably, at $\epsilon=1$ its error is $4.5$ times lower in pure DP and over $30$ times lower in approximate DP. Interestingly, the most accurate technique in pure DP is PATE-R, while the winner in approximate DP is PATE-AGT-R. The explanation for this is simple: because the Cauchy distribution is heavy-tailed and each point is the mean over 200 noise trials, the AGT-based arms tend to sample far-from-mean points more often, which means the utility degradation is more acute. In contrast to that, the addition of Laplace noise (which has finite second moments) and the $1/T$ reduction in sensitivity from averaging over shards essentially make PATE-AGT-R state-of-the-art, with test MSE within $0.06$ of the non-private one ($0.94$) at $\epsilon \geq 2$.

\paragraph{Batch and concretization frequency (Figure~\ref{fig:agtr_ablation}).}
For the linear regression ablation we train a linear model with SGD for 16 steps (batch size 5), and for California Housing a ReLU MLP (input--$64$--$1$) for 330 SGD steps at every batch size, both with gradient clipping at $\gamma = 0.1$; the global sensitivity is the clipping ceiling of the trajectory ($0.64$) for linear regression and the output range ($10$ standard deviations) for California Housing.

The reachability problem solved by AGT \cite{sosnin2025abstract} is solved over windows (termed concretization frequencies). A key advantage of our approach to privacy is that it exposes formal method techniques as explicit tuning knobs. We show that increasing the concretization frequency from interval bound propagation to a MILP every $F = 16$ steps leads to substantially tighter smooth sensitivity estimates, from about $0.25$ to $0.015$, giving a four- to six-fold improvement in utility.

For the California Housing dataset, we ablate the batch size $\texttt{BS}$, which is known to influence the AGT algorithm, up to the full batch ($\texttt{BS}=16{,}512$). As increasing the batch size tightens the bounds from AGT it should in turn tighten the smooth sensitivity upper bounds, as well as downstream utility. Indeed we see that smooth sensitivity and batch size appear roughly inversely proportional: \texttt{BS=250} peaks near $6$, \texttt{BS=4000} near $0.4$, while the full batch is nearly flat close to zero (Appendix~\ref{app:agtr_certificates}).

The critical point we would like to emphasize is that when compared with global sensitivity, our method is dominant in approximate DP at every privacy budget, frequency and batch size, with up to two orders of magnitude lower MSE. In pure DP, it dominates once the certificate is tight enough ($F \geq 8$ from $\epsilon = 0.2$ on linear regression, $\texttt{BS} \geq 1000$ from $\epsilon = 1$ on California Housing).

\subsection{AGS} \label{sec:exp_ags}

We evaluate AGS's performance with three distinct datasets: blobs \cite{scikit-learn} (two Gaussian clusters in 8 dimensions), MNIST \cite{lecun1998mnist} and the IMDB binary movie sentiment analysis task \cite{imdb}. We further augment this evaluation, in Appendix~\ref{app:sampling_abl}, with an investigation of the behaviour of AGS upon varying the number of releases (i.e., ``sampling'' steps) at a fixed privacy budget. Finally, we benchmark a hybrid DP-SGD+AGS approach on the SST2 dataset \cite{sst2} setup of \cite{dpft} and \cite{dpllm} against DP-LoRA finetuning.

\subsubsection{Utility} \label{ssec:ags_utility_exp}

For the results on the blobs dataset (200 points, linear hinge-loss classifier), which can be seen in Figure \ref{fig:blobs_ags}, the parameter envelopes are computed by solving an exact optimization problem, at different concretization frequencies $F$ (the number of training steps after which the MILP is solved). We compare AGS with PATE \cite{papernot2016pate} and matched (i.e., with the exact same parameters) DP-SGD. Looking at the approximate DP arm of the figure, there is strong evidence that AGS is the stronger mechanism: from $\epsilon = 5$ it leads both baselines, by more than $0.2$ in accuracy at $\epsilon = 20$, and approaches the non-private accuracy of $0.90$ at $\epsilon = 100$ whatever the concretization frequency. In pure DP, the frequency matters: solving the MILP every 10 steps instead of every step raises the accuracy at $\epsilon = 30$ from $0.59$ to $0.86$.

Figure \ref{fig:combined_mnist_imdb} shows how our approach fares in more difficult tasks from a performance perspective (i.e. task accuracy) compared to the same baselines as in blobs, with the addition of optimized DP-SGD, where we use amplification and composition to the greatest extent in trying to optimize for accuracy. We restrict MNIST to its first five digits (28{,}596 private training images), a multi-class problem on which the parameter envelope has to bound 45 parameters instead of the 9 of blobs, and on IMDB we train on 20{,}000 private reviews embedded by the frozen all-mpnet-base-v2 sentence encoder \cite{song2020mpnet, reimers2019sbert}, the family of pretrained text encoders that DP text pipelines build on \cite{xie2024augpe}, showing that AGS scales to foundation-model backbones by certifying only the model trained on top of them. In both cases this model is a linear head on 8 PCA components of the inputs (of the 768-dimensional sentence embeddings on IMDB). Without fail, this version wins across all our setups, showing performance on-par with the non-private benchmark even at $\epsilon=1$. Matched DP-SGD with accounting does so as well, in the approximate DP cases. As expected, due to the heavy-tailedness of the Cauchy distribution, AGS saturates later in the case of pure DP, reaching accuracies on-par with the non-private benchmark at $\epsilon = 15$ on 5-class MNIST and $\epsilon = 5$ on IMDB. With the exception of pure DP on MNIST, AGS consistently outperforms PATE from $\epsilon = 2$, notably showing $>0.1$ gains in accuracy for $1.5 \le \epsilon \le 7$ in approximate DP on IMDB. A similar behaviour is observed when comparing AGS with basic matched DP-SGD, whereby our method also wins in pure DP on MNIST from $\epsilon = 5$.

\begin{figure}[!tp]
 \centering
 \includegraphics[width=0.48\textwidth]{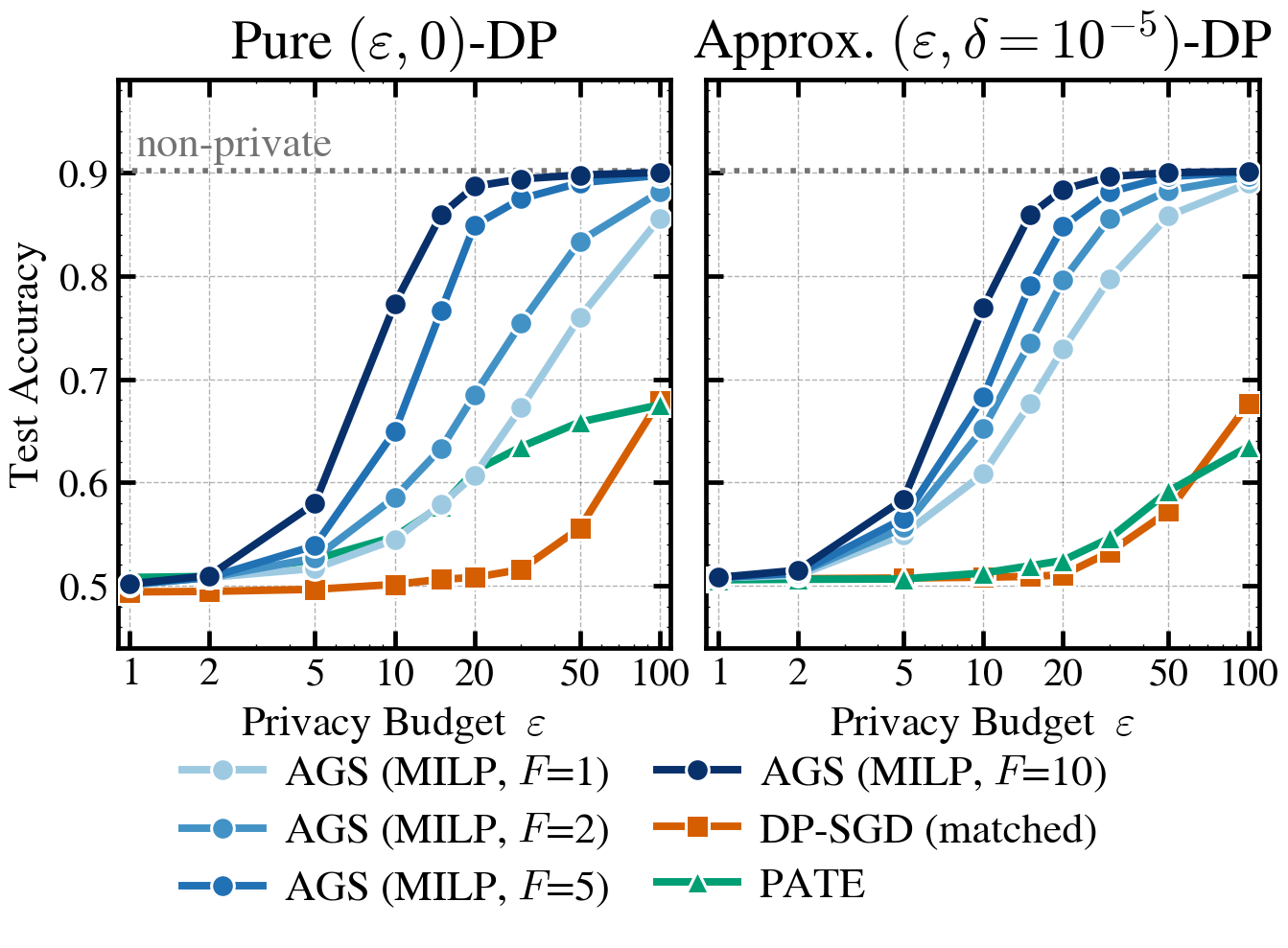}
 \caption{Privacy--utility trade-offs of AGS on blobs. AGS certifies the parameter envelope with an exact MILP solved every $F$ training steps and releases the trained parameters once (Cauchy mechanism under pure DP, Laplace under approximate DP, $\delta=10^{-5}$). DP-SGD (matched) uses the same hyperparameters with basic composition. The dotted line marks the non-private accuracy. Medians over 100 noise draws.}
 \label{fig:blobs_ags}
\end{figure}
\begin{figure}[!tp]
 \centering
 \includegraphics[width=0.48\textwidth]{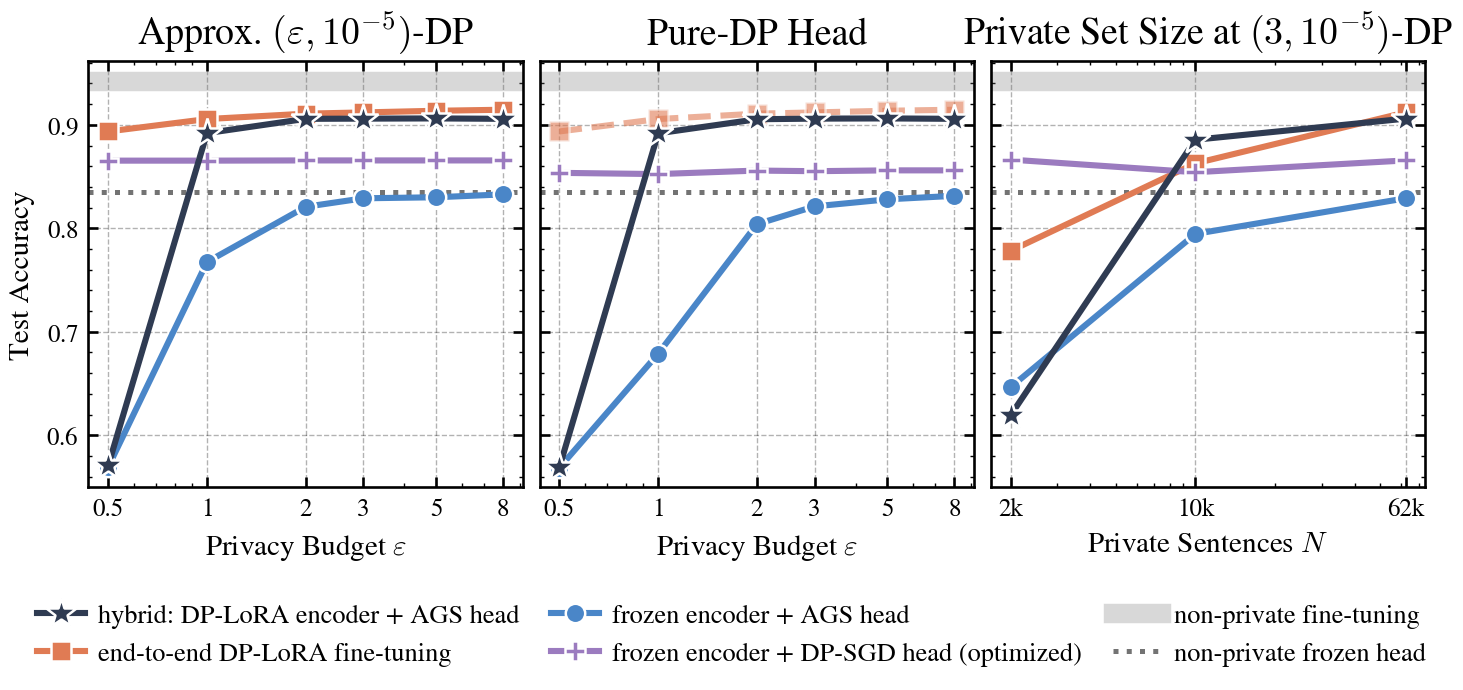}
 \caption{Hybrid approach on SST-2: a DP-LoRA fine-tuned encoder with an AGS head against end-to-end DP-LoRA fine-tuning and a frozen encoder with an AGS or a DP-SGD head. Test accuracy against the total budget $\epsilon = \epsilon_1 + \epsilon_2$ under $(\epsilon, 10^{-5})$-DP (left), a pure-DP head (middle), and the private set size $N$ at $\epsilon = 3$ effect on accuracy (right). }
 \label{fig:hybrid_ags_dpsgd}
\end{figure}

\subsection{Hybrid Approach}

We employ the SST-2 dataset \cite{sst2} to show that AGS can work well in conjunction with DP-SGD and reaches performance within 4 points of non-private fine-tuning ($94.3\%$), which is also on-par with the best full DP-SGD method. As in \S\ref{ssec:ags_utility_exp}, we use the all-mpnet-base-v2 sentence encoder on the binary sentiment analysis task induced by our dataset, whose 67,349 training sentences we split into 5,000 held-out test sentences and 62,349 private training instances, with the 872-sentence validation split as the public set.

The encoder is fine-tuned with DP-LoRA \cite{dpft} (rank 16) using DP-SGD in Opacus \cite{yousefpour2021opacus} for 3 epochs at batch size 1,024 to $(\epsilon_1, 5\cdot 10^{-6})$-DP, and the same model scored with its own head is the end-to-end DP-LoRA arm. The AGS head is a logistic head on the PCA-8 projection of the adapted features ($p = 18$), released once at $(\epsilon_2, 5\cdot 10^{-6})$-DP, so that the pipeline is $(\epsilon_1 + \epsilon_2, 10^{-5})$-DP; the split and the decision threshold are chosen on the public set (full details in Appendix \ref{app:hyperparam_ags}). While most settings tested we find that DP-LoRA is the best approach, in Figure~\ref{fig:hybrid_ags_dpsgd} we do observe that hybrid DP-LoRA+AGS can outperform both DP-LoRA and AGS alone when the number of sentences is at 10k.

\section{Conclusion}

In this work, we proposed two novel differentially private algorithms --- one for private prediction and one for private learning --- grounded in formal verification via Abstract Gradient Training. Firstly, we formally introduced AGT-R, a novel private prediction method that guarantees differential privacy in arbitrarily complex regression settings. We characterized AGT-R both theoretically and empirically and provided closed-form conditions for when our formal verification-based upper-bound on smooth sensitivity yields better utility-privacy trade-offs. Notably, we tackle private regression without making additional assumptions on the output space, namely unboundedness or discreteness. We further leveraged our insights gathered from AGT-R to pose learning as a private regression problem, and use the same bounds originating from AGT to devise a novel private learning algorithm, AGS. We demonstrate theoretically how AGS satisfies (pure or approximate) differential privacy by sampling at predefined user-chosen steps from an appropriate noise distribution. Once again, we provide a closed form expression for the case when AGS provides better utility than standard DP-SGD. Lastly, we verify all our claims through carefully-curated experiments that show both the inner mechanisms workings of our proposed approaches, as well as how they compare to prior established DP-ensuring techniques.

\bibliographystyle{IEEEtran}
\bibliography{IEEEabrv,reflist}
\clearpage

\appendices
\raggedbottom

\section{Theoretical Derivation of Parameter-Space Bounds} \label{app:agt_bounds}

To construct the parameter envelope $T_k = [\theta^k_L, \theta^k_U]$ required by AGT-R, we must bound the gradient updates at each optimization step. Consider a regression model trained with a loss function $\mathcal{L}(\cdot, \cdot)$ and a gradient clipping threshold $\gamma$. Let $\mathcal{M}$ represent our gradient-based training algorithm and $\theta_{\text{init}}$ be a fixed initialization. The valid parameter-space bound $T_k$ must satisfy:

\[ \mathcal{M}(f, \theta_{\text{init}}, D') \in T_k \quad \forall D' \text{ s.t. } d(D, D') \leq k. \]

At each training iteration on a nominal batch $B$ of size $b$, the set of possible descent directions under $k$ arbitrary dataset additions or removals is bounded using either bound propagation (IBP \cite{gowal2018effectiveness}) or encoding the problem as an optimization (LP/QP/MILP/MIQCP). We bound the perturbed update $\Delta \theta \in [\theta^k_L, \theta^k_U]$ element-wise as follows:

\begin{align}
    \Delta \theta_L &= \frac{1}{b} \left( \text{SEMin}_{b-k} \{ \delta_L^{(i)} \} - k \gamma \mathbf{1}_d \right), \\
    \Delta \theta_U &= \frac{1}{b} \left( \text{SEMax}_{b-k} \{ \delta_U^{(i)} \} + k \gamma \mathbf{1}_d \right),
\end{align}
where $\text{SEMin}$ and $\text{SEMax}$ represent the sum of the element-wise bottom and top $b-k$ gradient bounds. The terms $\delta_L^{(i)}$ and $\delta_U^{(i)}$ are bound propagation- or optimization-derived bounds on the gradient of the loss $\mathcal{L}$ for the $i$-th sample with respect to the reachable parameters $\tilde{\theta} \in T_{k-1}$. The $k\gamma \mathbf{1}_d$ term accounts for the worst-case scenario where the $k$ differing points produce gradients exactly at the clipping threshold $\gamma$ in the adversarial direction.

By iteratively applying these worst-case updates across the entire optimization trajectory, we obtain the final envelope $T_k$. To compute the upper bound on the smooth sensitivity for a regression output, we propagate the test query $x$ through the network over $T_k$. The maximum output variation provides our certified bound:
\[
A_k(f,x) = \max_{\tilde{\theta} \in T_k} \left\| f_{\tilde{\theta}}(x) - f_{\theta}(x) \right\|_1 \;\ge\; A_k(f,x).
\]

By substituting this upper bound into the AGT-R noise mechanism, we maintain $(\epsilon,\delta)$-differential privacy while accounting for worst-case local sensitivity.

\section{Proofs} \label{app:proofs}

\subsection{Proof of Theorem \ref{thm:cauchy_lap_dom}} \label{app:proof_agtr_cauchy_dom}

\begin{proof}
Let $Z_L \sim \text{Lap}(b)$ and $Z_C \sim \text{Cauchy}(\gamma)$ with $\gamma, b$ defined as above. The worst case error $E_L$ at an $\alpha$ quantile for the Laplace noise is $\mathbb{P}(|Z_L| > E_L) \leq \alpha$. Expanding using the CDF and remembering that the Laplace noise is symmetric about $\mu = 0$, then $\mathbb{P}(|Z_L| > E_L) = 2 \cdot \mathbb{P}(Z_L > E_L) = 2[1 - (1- (1/2)\exp(-E_L/b))] = \exp(-E_L/b) \leq \alpha$. Rearranging, this yields a worst-case $E_L = \frac{\Delta f}{\epsilon}\ln(\frac{1}{\alpha})$. We proceed analogously in the case of Cauchy noise:  
$\mathbb{P}(|Z_C| > E_C) = 2 \cdot \mathbb{P}(Z_C > E_C) = 1 - (2 / \pi) \arctan(E_C /\gamma) \leq \alpha$. We thus find an upper bound $E_C = \frac{6\, \text{SS}^\beta}{\epsilon} \tan\left(\frac{\pi}{2}(1 - \alpha)\right)$. Enforcing that $E_C < \frac{1}{c}E_L$ gives the desired result.
\end{proof}

\subsection{Proof of Theorem \ref{thm:laplace_gauss_dom}} \label{app:proof_agtr_lap_dom}

\begin{proof}
 Let $Z_G \sim \mathcal{N}(0, \sigma^2)$ with $\sigma$ defined as above, and $Z_L \sim \mathrm{Lap}(2\, \mathrm{SS}^\beta / \epsilon)$. The worst case error $E_G$ at $\alpha$ is $\mathbb{P}(|Z_G| > E_G) \leq \alpha$. Since the Gaussian is symmetric around the mean and by standardizing $Z_G$ (i.e., $Z_G / \sigma \sim \mathcal{N}(0,1)$), we have that $2\cdot\mathbb{P}\left((Z_G/\sigma) > (E_G/\sigma)\right) \leq \alpha$ and thus $1 - \Phi(E_G/\sigma) \leq \alpha/2$. Rearranging, we obtain a worst case $E_G = \sigma \Phi^{-1}(1 - \frac{\alpha}{2})$. For Laplace noise, we proceed analogously to Thm. \eqref{thm:cauchy_lap_dom}: $\mathbb{P}(|Z_L| > E_L) \leq \alpha$ and setting our condition to be $E_L < E_G$. By symmetry, we have that $\mathbb{P}(|Z_L| > E_L) = 2 \cdot \mathbb{P}(Z_L > E_L) = 2[1 - (1- (1/2)\exp(-E_L/b))] = \exp(-E_L/b) \leq \alpha$. Once again, rearranging, we obtain a worst case $E_L = b\ln (\frac{1}{\alpha}) = \frac{2 \, \mathrm{SS}^\beta}{\epsilon}\ln (\frac{1}{\alpha})$. We enforce $E_L < \frac{1}{c}E_G$ to obtain the final result. 
\end{proof}

\subsection{Proof of Corollary~\ref{thm:1_step_ags}}\label{app:admisibillity_proof}

Recall \cite[Def.~2.4]{nissim2007smooth}: a density $h$ on $\mathbb{R}^p$ is $(\alpha,\beta)$-admissible if, for all $\Delta$ with $\|\Delta\|_1 \leq \alpha$ (the norm in which \cite{nissim2007smooth} define local sensitivity), all $|\lambda| \leq \beta$ and all measurable $S$,
$\Pr[Z \in S] \leq e^{\epsilon/2}\Pr[Z \in S + \Delta] + \delta/2$ (sliding) and
$\Pr[Z \in S] \leq e^{\epsilon/2}\Pr[Z \in e^{\lambda} S] + \delta/2$ (dilation), where $Z \sim h$.

By \cite[Lemma~2.5]{nissim2007smooth}, $\theta^{n_s} + (u/\alpha)\,Z$ is then $(\epsilon,\delta)$-indistinguishable whenever $u$ is a $\beta$-smooth upper bound on the $\ell_1$ local sensitivity of $\theta^{n_s}$, which $u(\beta)$ is by construction. It therefore suffices to show that $h(z) = \prod_{j=1}^p h_1(z_j)$ is $(\epsilon/6,\, \epsilon/(6p))$-admissible with $\delta = 0$ for $h_1(z) = 1/(\pi(1+z^2))$, and $(\epsilon/2,\, \epsilon/(2p(1+2\ln(2p/\delta))))$-admissible for $h_1(z) = \tfrac12 e^{-|z|}$; the scales $u/\alpha$ are then $6u/\epsilon$ and $2u/\epsilon$. Both properties follow from a bound on the log-density ratio, which is a sum over coordinates because $h$ is a product: a ratio at most $e^{\epsilon/2}$ on a set $B$ with $\Pr[Z \notin B] \leq \delta/2$ integrates to the required inequality.

\emph{Sliding.} $|\tfrac{d}{dz}\ln h_1| \leq 1$ for both densities ($\tfrac{2|z|}{1+z^2} \leq 1$, resp.\ $1$), so $\ln h(z) - \ln h(z + \Delta) \leq \sum_j |\Delta_j| = \|\Delta\|_1 \leq \alpha \leq \epsilon/2$ for every $z$.

\emph{Dilation.} The density of $e^{-\lambda}Z$ is $e^{p\lambda}h(e^{\lambda}z)$, so the log ratio is $\sum_{j=1}^p \big[\ln h_1(z_j) - \ln h_1(e^{\lambda} z_j) - \lambda\big]$.
(C) The $j$-th term is $g_j(\lambda) - g_j(0) - \lambda$ with $g_j(\lambda) = \ln(1 + z_j^2 e^{2\lambda})$ and $g_j' \in [0,2)$, hence at most $3|\lambda| \leq 3\beta$ in absolute value; the sum is at most $3p\beta = \epsilon/2$ for every $z$, so $\delta = 0$.
(L) The $j$-th term is $|z_j|(e^{\lambda} - 1) - \lambda \leq \beta(1 + 2|z_j|)$, using $e^{\beta} - 1 \leq 2\beta$ for $\beta \leq 1$ (which holds for every $\epsilon \leq 2p$). Under $h$ the $|z_j|$ are i.i.d.\ $\mathrm{Exp}(1)$, so $B = \{\max_j |z_j| \leq \ln(2p/\delta)\}$ has $\Pr[Z \notin B] \leq p \cdot \delta/(2p) = \delta/2$ by a union bound, and on $B$ the sum is at most $p\beta\,(1 + 2\ln(2p/\delta)) = \epsilon/2$. \qed

\subsection{Proof of Theorem \ref{thm:1_step_ags_cauchy}} \label{app:proof_ags_pure}
 We start by computing a bound on the error term in the case of Laplace DP-SGD (which we will refer to, alternatively, as noisy SGD). The clean and noisy updates are: 
\begin{align*}
    \theta_{t} &= \theta_{t-1} - \eta \, \partial_{\theta_{t-1}} \mathcal{L}(\bm{\theta}, D) \\
    \tilde{\theta}_{t} &= \tilde{\theta}_{t-1} - \eta \left(\partial_{\tilde{\theta}_{t-1}} \mathcal{L}(\bm{\tilde{\theta}}, D) + Z_t\right), \quad Z_t \sim \mathrm{Lap}\!\left(\frac{s_p \Delta f}{\epsilon}\right),
\end{align*}
yielding the error term $e_t = \tilde{\theta}_t - \theta_t$: 
\[
    e_t = (\tilde{\theta}_{t-1} - \theta_{t-1}) - \eta \left( \partial_{\tilde{\theta}_{t-1}} \mathcal{L}(\bm{\tilde{\theta}}, D) - \partial_{\theta_{t-1}} \mathcal{L}(\bm{\theta}, D) \right) - \eta Z_t.
\]
Noting that $e_{t-1} = \tilde{\theta}_{t-1} - \theta_{t-1}$, we expand the gradient of the loss function with respect to $\theta_{t-1}$ at $\tilde{\theta}_{t-1}$ using Taylor's theorem with the Lagrange remainder to obtain: 
\begin{align*}
    \partial_{\theta_{t-1}} \mathcal{L}(\bm{\theta}, D) 
    &= \partial_{\tilde{\theta}_{t-1}} \mathcal{L}(\bm{\tilde{\theta}}, D) \\
    &\quad + \partial^2_{\tilde{\theta}_{t-1}} \mathcal{L}(\bm{\tilde{\theta}}, D) (\theta_{t-1} - \tilde{\theta}_{t-1}) \\
    &\quad + \mathcal{O}\left( \|\theta_{t-1} - \tilde{\theta}_{t-1}\|^2 \right).
\end{align*}
Rearranging, dropping the higher-order terms and noting that $\tilde{\theta}_{t-1} - \theta_{t-1} = e_{t-1}$, we obtain 
\begin{align*}
    \partial_{\tilde{\theta}_{t-1}} \mathcal{L}(\bm{\tilde{\theta}}, D) - \partial_{\theta_{t-1}} \mathcal{L}(\bm{\theta}, D) = \partial^2_{\tilde{\theta}_{t-1}} \mathcal{L}(\bm{\tilde{\theta}}, D) e_{t-1}.
\end{align*}
Substituting this back, the recursive error term is 
\begin{align*}
    e_t & \approx e_{t-1} - \eta \partial^2_{\tilde{\theta}_{t-1}} \mathcal{L}(\bm{\tilde{\theta}}, D) e_{t-1} - \eta Z_t \\ 
    & = \left(1 - \eta \partial^2_{\tilde{\theta}_{t-1}} \mathcal{L}(\bm{\tilde{\theta}}, D)\right) e_{t-1} - \eta Z_t.
\end{align*}
Iteratively expanding the provided expression, we obtain the $s_p$-window error term 
\[
e_{s_p} = -\eta \left(\sum_{s=1}^{s_p} Z_s \cdot\prod_{r = s+ 1}^{s_p} \left(1 - \eta \, h_r\right)\right),
\]
whereby we denoted by $h_r$ (in the vein of a `'Hessian'`) the second derivative of the loss function $\partial^2_{\tilde{\theta}_{r-1}} \mathcal{L}(\bm{\tilde{\theta}}, D)$ with respect to the noisy parameter. We take $e_0 = 0$, so that the first injected noise is $Z_1$. Taking the (Euclidean) $1$-norm of the error term (i.e. absolute value, since we are in $1$ dimension) we have that 
\[
|e_{s_p}| = \left|\sum_{s=1}^{s_p} Z_s \cdot (- \eta) \prod_{r = s+ 1}^{s_p} \left(1 - \eta \, h_r\right)\right|.
\]
Let us, for now, denote $a_s = (-\eta)\prod_{r = s+ 1}^{s_p} \left(1 - \eta \, h_r\right)$, $a = (a_1,\dots,a_{s_p})$ and analyze $\left|\sum_{s=1} ^{s_p} a_s Z_s\right|$ in isolation. In particular, notice that the Laplace distribution is a \textit{subexponential} distribution and note the subexponential version of the Bernstein inequality in  Theorem 2.9.1. of \cite{vershynin2025high}. We assume the curvature is (locally) constant across the window, i.e.\ $h_r \equiv h$ for $r \in \{1,\dots,s_p\}$, consistent with the first-order Taylor approximation employed above. Consequently the coefficients $a_s$ are deterministic and independent of the noise variables $\{Z_s\}$, which is precisely the condition required to apply the weighted subexponential Bernstein inequality (Corollary~2.9.2 of \cite{vershynin2025high}) to $\sum_s a_s Z_s$. Using Corollary 2.9.2. of the same source \cite{vershynin2025high}, namely the weighted version of the subexponential Bernstein inequality, we obtain: 

\begin{align*}
    \mathbb{P} \left\{|e_{s_p}| \geq E_L \right\} &= \mathbb{P}\left\{\left|\sum_{s=1} ^{s_p} a_s Z_s\right| \geq E_L\right\} \\
    &\leq 2 \exp\left[-c \min \left(
    \frac{E_L^2}{K^2\|a\|_2^2}, \frac{E_L}{K\|a\|_{\infty}}
    \right)\right],
\end{align*}
where $c>0$ is an absolute constant and $K = \max_s\|Z_s\|_{\psi_1}$, with $\|\cdot\|_{\psi_1}$ the subexponential norm (i.e. $\|Z\|_{\psi_1} = \inf \{K >0: \mathbb{E}[\exp(|Z| / K)] \leq 2\}$). Since $Z_s$ are all i.i.d. Laplace with mean $0$ and scale parameter $b = (s_p \Delta f) / \epsilon$, then $\forall s: K = \|Z_s\|_{\psi_1}$, which can easily be computed as follows:

\begin{align*}
\mathbb{E}[\exp(|Z_s| / K)] &= \int_{\mathbb{R}} \exp\left(\frac{|z|}{k}\right) \frac{1}{2b}\exp\left(\frac{-|z|}{b}\right) dz \\
&= \frac{1}{b} \int_0^\infty \exp \left(-z\frac{k - b}{bk}\right)dz  \\
&= \frac{k}{b - k}\exp \left(-z\frac{k - b}{bk}\right)\Bigg |_0^\infty \\ 
&= \frac{k}{k - b},
\end{align*}
where we have enforced in the second equality that $k > b$ and flipped the sign of $z$, otherwise the integral would not converge. The expression is bounded by $2$ when $\|Z_s\|_{\psi_1} = 2b = 2(s_p\Delta f) /\epsilon$.

We now compute $\|a\|_2^2$ and $\|a\|_\infty$. Firstly, we remind the reader that we have assumed $\mu$-strong convexity and $L$-smoothness, thus $\mu \leq h_r \leq L$, and we take the step size $\eta \leq 1/L$ so that $0 \leq 1 - \eta h_r \leq 1 - \eta\mu$. Therefore, $|a_s| \leq \eta(1 - \eta \mu)^{s_p - s}$ and $\|a\|_\infty = \max_s|a_s| \leq  \eta(1-\eta\mu)^0 = \eta$ (i.e. maximization happens when $s = s_p$).

We bound the $2$-norm in a similar way: $\|a\|_2^2 = \sum_s a_s^2 \leq \eta^2 \sum_{s=1}^{s_p} (1 -\eta\mu)^{2(s_p - s)}$. The latter is a geometric progression with ratio $r = (1 - \eta \mu)^2$, thus $\|a\|_2^2 \leq \eta^2\frac{1 - (1 -\eta \mu)^{2s_p}}{1 - (1 -\eta\mu)^2 } \leq \frac{\eta^2}{1 - (1-\eta\mu)^2} = \frac{\eta}{\mu(2 -\eta\mu)}$. The last inequality follows since $(1-\eta\mu)^{2s_p} \geq 0$, so dropping it only increases the numerator, and the final equality uses $1 - (1-\eta\mu)^2 = \eta\mu(2 - \eta\mu)$.

Substituting $K=2(s_p\Delta f) / \epsilon$ alongside our computed norms into the subexponential Bernstein inequality
\begin{align*}
 \mathbb{P}\!\left\{|e_{s_p}| \geq E_L \right\} & \leq 2\exp\!\Big[-\!c
 \;\cdot \\
 & \cdot \min\!\left(
 \frac{\epsilon^2 E_L^2\mu(2-\eta\mu)}{4\eta (s_p\Delta f)^2},\;
 \frac{\epsilon E_L}{2\eta s_p\Delta f}\right)\Big].
\end{align*}
To follow the same analysis as in Thm. \ref{thm:cauchy_lap_dom}, we express the error condition $E_L$ in terms of a user-defined confidence $\alpha$. Let us name the first argument in the minimum as $A$ and the second as $B$; then the worst case error at an $\alpha$ quantile is: $2 \exp(-c \min(A,B)) \leq \alpha$. This results in $\min(A, B) \geq \frac{-\ln(\alpha/2)}{c}$, so $A \geq \frac{-\ln(\alpha/2)}{c}$ and $B \geq \frac{-\ln(\alpha/2)}{c}$ simultaneously. We now solve for the \textit{smallest} $E_L$ satisfying each argument of the
minimizer, naming $E_{L,1}$ the solution w.r.t.\ $A$ and $E_{L,2}$ the solution
w.r.t.\ $B$, and obtain:
\[
E_{L,1} = \sqrt{\frac{4(s_p\Delta f)^2\eta}{\epsilon^2c\mu(\eta\mu - 2)}\ln{\frac{\alpha}{2}}}\;\; \text{and} \;\;
E_{L,2} = \frac{-2\eta s_p\Delta f \ln{\frac{\alpha}{2}}}{\epsilon c}.
\]
Since both constraints must hold simultaneously, the worst-case error is the
larger of the two, i.e.\ $E_L = \max(E_{L,1}, E_{L,2})$.
The error term in the Cauchy case is exactly as in the proof of Thm \ref{thm:cauchy_lap_dom}:
\[
E_C = \frac{6\,\mathrm{SS}^\beta}{\epsilon}\tan\left(\frac{\pi}{2}(1-\alpha)\right).
\]
Enforcing $E_C < E_L$ yields the required conditions under which AGS dominates
Laplace DP-SGD.
\qed

\subsection{Proof of Theorem \ref{thm:1_step_ags_gauss}} \label{app:proof_ags_approx}

We proceed exactly as in the proof of Thm.~\ref{thm:1_step_ags_cauchy}, the only difference being that we now perturb the clean update with Gaussian noise and compare against a one-step addition of Laplace noise calibrated to smooth sensitivity (i.e., we now find ourselves in the Approximate DP case). The clean and noisy updates are:
\begin{align*}
    \theta_{t} &= \theta_{t-1} - \eta \, \partial_{\theta_{t-1}} \mathcal{L}(\bm{\theta}, D) \\
    \tilde{\theta}_{t} &= \tilde{\theta}_{t-1} - \eta \left(\partial_{\tilde{\theta}_{t-1}} \mathcal{L}(\bm{\tilde{\theta}}, D) + Z_t\right), \quad Z_t \sim \mathcal{N}(0, \sigma^2),
\end{align*}
where $\sigma = \frac{s_p \Delta_2 f}{\epsilon}\sqrt{2\ln(1.25 s_p/\delta)}$ is the noise scale of the Gaussian mechanism run at the per-step budget $(\epsilon/s_p,\, \delta/s_p)$, so that by basic composition the $s_p$-step window costs $(\epsilon,\delta)$ in total, matching the single AGS release. Since the Taylor expansion of the gradient and the subsequent unrolling of the recursion are \textit{identical} to the Laplace case, we omit them here and pick up directly at the $s_p$-window error term:
\[
    e_{s_p} = -\eta \left(\sum_{s=1}^{s_p} Z_s \cdot \prod_{r = s+1}^{s_p} \left(1 - \eta \, h_r\right)\right),
\]
where, as before, $h_r$ denotes the `'Hessian'` $\partial^2_{\tilde{\theta}_{r-1}} \mathcal{L}(\bm{\tilde{\theta}}, D)$. Reusing the notation $a_s = (-\eta)\prod_{r = s+1}^{s_p} \left(1 - \eta \, h_r\right)$ and $a = (a_1, \dots, a_{s_p})$, we once again analyze $\left|\sum_{s=1}^{s_p} a_s Z_s\right|$ in isolation. As in the Laplace case, we assume the curvature is locally constant across the window (i.e. $h_r \equiv h$), so that the weights $a_s$ are deterministic and independent of the noise $\{Z_s\}$, as required by the concentration inequality we are about to invoke.

In contrast to the Laplace case, however, the Gaussian distribution is \textit{subgaussian}, so we appeal to the general Hoeffding inequality for subgaussian random variables in Theorem 2.7.3 of \cite{vershynin2025high}. Before doing so, we remind the reader that a constant $a$ multiplied with a Gaussian is again Gaussian, namely $a \cdot \mathcal{N}(m, \sigma^2) \sim \mathcal{N}(am, a^2\sigma^2)$ (this follows immediately by inspecting the moment generating function). Hence each weighted term is itself a centered Gaussian,
\[
    Z'_s := a_s Z_s \sim \mathcal{N}\!\left(0, \, \eta^2 \prod_{r=s+1}^{s_p}(1 - \eta h_r)^2 \, \sigma^2\right),
\]
and the $Z'_s$ are independent, centered and subgaussian, exactly the regime in which the inequality applies. We obtain:
\begin{align*}
    \mathbb{P} \left\{|e_{s_p}| \geq E_G \right\} &= \mathbb{P}\left\{\left|\sum_{s=1}^{s_p} a_s Z_s\right| \geq E_G\right\} \\
    &\leq 2 \exp\left(-\frac{c \, E_G^2}{\sum_{s=1}^{s_p} \|Z'_s\|_{\psi_2}^2}\right),
\end{align*}
where $c > 0$ is an absolute constant and $\|\cdot\|_{\psi_2}$ denotes the subgaussian norm (i.e. $\|X\|_{\psi_2} = \inf\{t > 0 : \mathbb{E}[\exp(X^2/t^2)] \leq 2\}$). The subgaussian norm $\|\cdot\|_{\psi_2}$ of a centered Gaussian can be computed exactly as in the previous proof (mirroring the $\psi_1$-norm derivation for Laplace); by \cite{vershynin2025high}, for $X \sim \mathcal{N}(0, \sigma^2)$ it equals $\|X\|_{\psi_2} = \sigma\sqrt{8/3}$. By the absolute homogeneity of the norm, it follows that $\|Z'_s\|_{\psi_2} = |a_s| \, \sigma\sqrt{8/3}$, and therefore $\sum_{s=1}^{s_p} \|Z'_s\|_{\psi_2}^2 = \tfrac{8}{3}\sigma^2 \|a\|_2^2$. 

We now note that $\|a\|_2^2$ is precisely the quantity we bounded in the proof of Thm.~\ref{thm:1_step_ags_cauchy}, namely $\|a\|_2^2 \leq \frac{\eta}{\mu(2 - \eta\mu)}$ (which holds under $\mu$-strong convexity, $L$-smoothness and the step-size condition $\eta \leq 1/L$). Substituting our computed norm alongside 
$\sigma^2 = \left[2s_p^2 (\Delta_2 f)^2 \ln(1.25 s_p /\delta)\right] / \epsilon^2$ into the inequality yields:
\[
    \mathbb{P} \left\{|e_{s_p}| \geq E_G \right\} \leq 2\exp\left(-\frac{3 c \, E_G^2 \, \mu(2 - \eta\mu) \, \epsilon^2}{16 \, \eta \, s_p^2 (\Delta_2 f)^2 \ln(1.25 s_p/\delta)}\right).
\]
To follow the same analysis as before, we express the error $E_G$ in terms of a user-defined confidence $\alpha$. Setting the probability bound to $\alpha$ and solving for $E_G$ gives the (single-regime) worst-case error
\[
    E_G = \sqrt{\frac{16 \, \eta \, s_p^2 (\Delta_2 f)^2 \ln\!\left(\frac{1.25 s_p}{\delta}\right) \ln\!\left(\frac{2}{\alpha}\right)}{3 c \, \mu(2 - \eta\mu) \, \epsilon^2}}.
\]

The error of AGS in this setting stems from a \textit{single} addition of Laplace noise calibrated to the $\beta$-smooth sensitivity (with $\beta < \epsilon/(2\ln(2/\delta))$ ensuring approximate $(\epsilon, \delta)$-DP), exactly as in the proof of Thm.~\ref{thm:laplace_gauss_dom}:
\[
    E_L = \frac{2\,\mathrm{SS}^\beta}{\epsilon}\ln\left(\frac{1}{\alpha}\right).
\]
Enforcing $E_L < E_G$ and solving for the smooth sensitivity yields the required conditions under which Laplace AGS dominates Gaussian DP-SGD:
\begin{align*}
    \mathrm{SS}^\beta(f_{\theta},D) < \frac{1}{2\ln\left(\frac{1}{\alpha}\right)}   \sqrt{\frac{16\,\eta\,s_p^2(\Delta_2 f)^2 \ln\left(\frac{1.25 s_p}{\delta}\right)\ln\left(\frac{2}{\alpha}\right)}{3c\,\mu(2-\eta\mu)}}\!,
\end{align*}
\qed

\section{AGT-R Ablations}

\begin{figure*}[t]
    \centering
    \includegraphics[width=\textwidth]{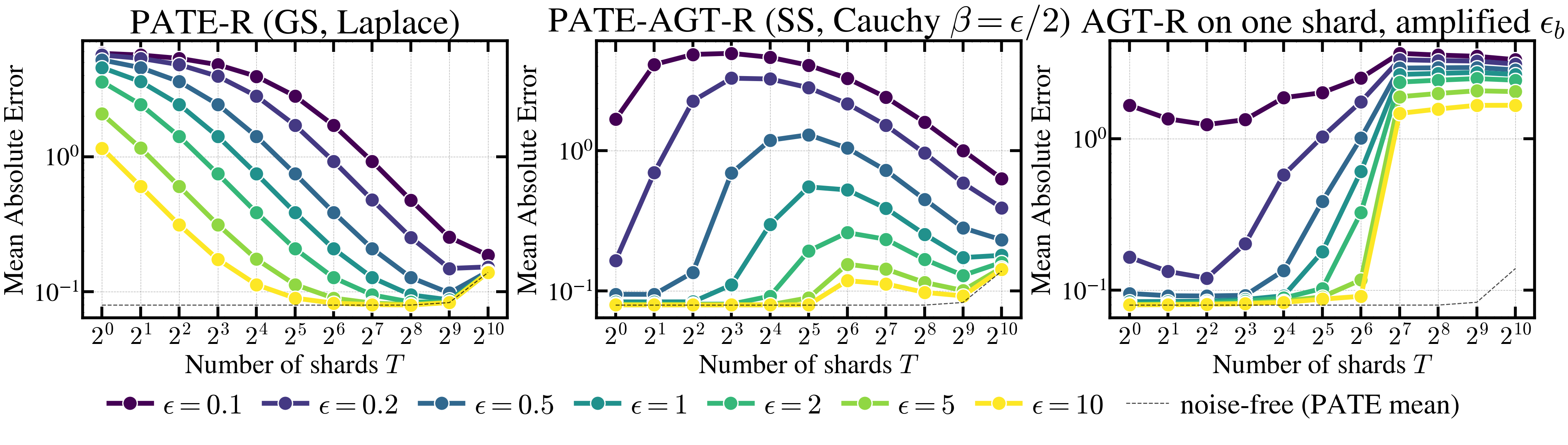}
    \caption{Shard ablation on the linear regression task with $4\!\times\!10^3$ training points, under pure DP. MAE against the number of shards $T$ at fixed dataset size, one line per budget $\epsilon \in [0.1, 10]$, for PATE-R (Laplace noise on global sensitivity), PATE-AGT-R (Cauchy noise on the shard-wise smooth sensitivity, $\beta = \epsilon/2$) and AGT-R on a single shard of $N/T$ points with the amplified budget $\epsilon_b$; dashed: the noise-free PATE mean.}
    \label{fig:pate_shard_ablation_linear}
\end{figure*}

In this section we validate the AGT-R certificate on its own. We first extend the analysis of the feasibility conditions of Figure~\ref{fig:cond_table}, then show the certificates behind the two ablations of the main text, the concretization frequency on linear regression and the batch size on California Housing. We then present a study that ablates the number of shards in PATE-R and PATE-AGT-R and reveals its effect on performance, and lastly repeat the comparison of Figure~\ref{fig:agtr_pate} on the raw error scale together with the global-sensitivity baselines, thus tying directly into our main experimental results.

\subsection{Condition Simulations: Further Analysis}\label{app:feasibility}

We complement the discussion of Figure~\ref{fig:cond_table} in the main text with a finer-grained analysis of the simulated conditions, starting with the pure $(\epsilon, 0)$-DP column. Additionally, we can observe that with high confidence (i.e. $> 90\%$), having a ratio $\rho \approx 0.01$, we can consistently achieve utility multipliers greater than $1$. A trend that additionally arises is that as $c$ increases, the gaps between consecutive $c$ decrease, meaning a smaller ratio consistently achieves $c \gg 1$ better utility.

We now turn to the finer details of the approximate $(\epsilon, \delta)$-DP column, where the admissible ratios peak at $\rho \approx 1.9$ for $c=1$. These largest values, however, are attained at $1 - \alpha \approx 0.8$, an operating point of little practical interest; at the more desirable $1 - \alpha = 0.98$ the admissible ratio falls to $1.12$--$1.44$, still favourable but far less generous.
A further important observation is that our analysis imposes conditions on $\beta$ and this constraint is stricter than in the pure-DP case: whereas Cauchy requires only $\beta < \epsilon/6$, the Laplace mechanism enforces $\beta \leq \frac{\epsilon}{2\ln(2/\delta)}$, roughly four times tighter at $\delta = 10^{-5}$. Additionally, as detailed in \S\ref{sec:agtr}, AGT-R produces strict upper-bounds on local sensitivity and is strictly over-approximate. Both these details force a practically larger $\mathrm{SS}^\beta$, or equivalently a larger $\epsilon$ to remain valid, so the displayed curves are optimistic ceilings only attainable theoretically. Lastly, the diminishing returns in $c$ reappear: $c=2$ and $c=3$ lie below unity across the entire window, peaking at $\rho \approx 0.96$ and $0.64$.

\subsection{Certificates of the AGT-R Ablations}\label{app:agtr_certificates}

Figure~\ref{fig:agtr_certificates} visualizes the $SS^\beta$ bounding algorithm behind the ablations of Figure~\ref{fig:agtr_ablation}, averaged over test inputs. The dashed line (left y-axis) represents the certified bound on the local sensitivity $\bar{A}(x,k)$ as the number of substitutions $k$ increases, while the solid line (right y-axis) tracks the induced upper bound on the smooth sensitivity, $\bar{A}(x,k)\,e^{-\beta k}$ at $\beta = 1/2$ (the $\beta$ of the pure-DP release at $\epsilon = 1$), with stars marking its maximum $SS^\beta$.
\begin{figure}[H]
  \centering
  \includegraphics[width=\columnwidth]{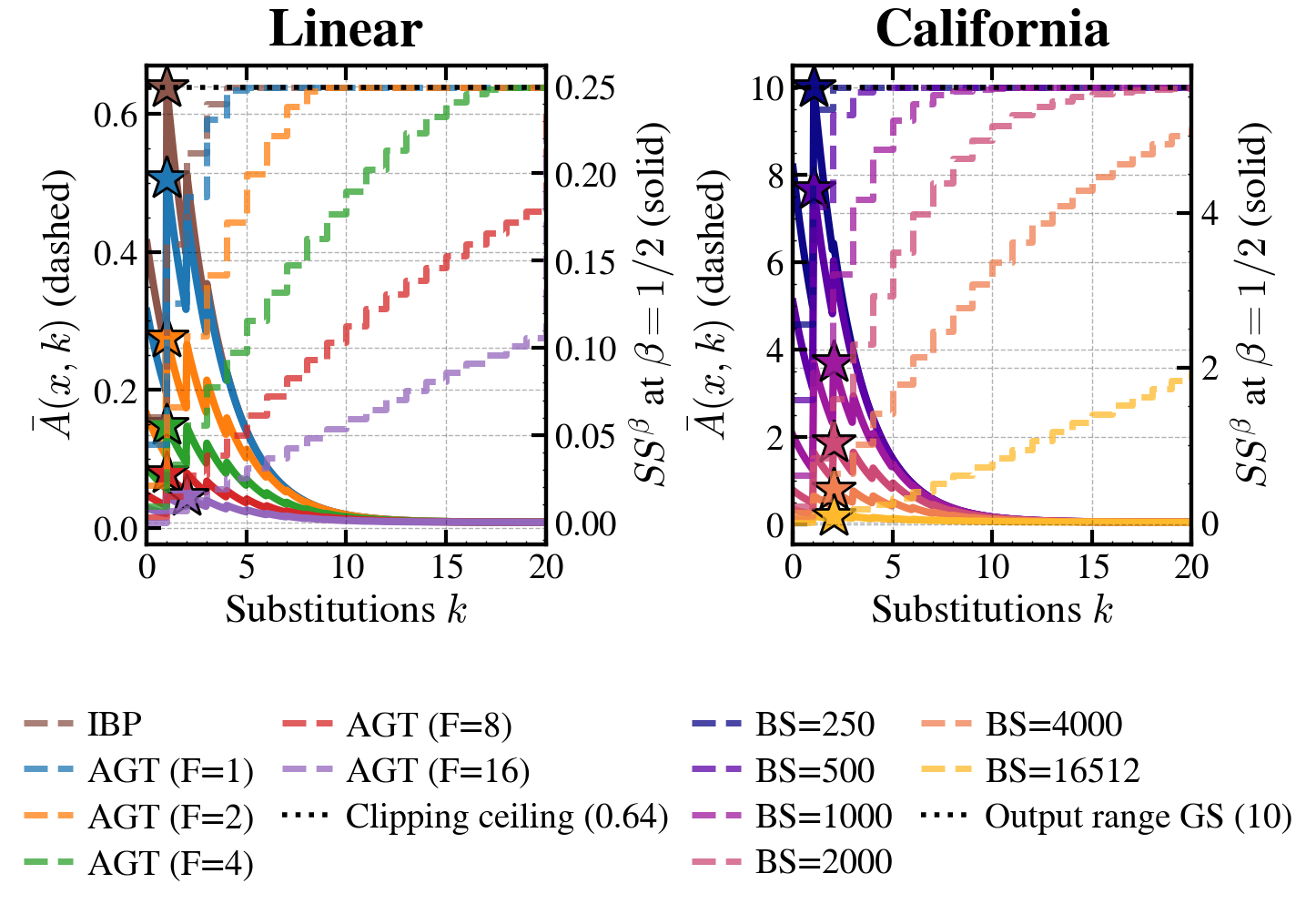}
  \caption{Cross-radius certificates of the AGT-R ablations (left: concretization frequency on linear regression; right: batch size on California Housing). Dashed, left axis: the certified local-sensitivity bound $\bar{A}(x,k)$ averaged over test inputs; solid, right axis: the smooth-sensitivity bound at $\beta = 1/2$, with its maximum starred; dotted: the global-sensitivity ceiling.}
  \label{fig:agtr_certificates}
\end{figure}
Beyond the largest computed $k$ the certificate falls back to the global sensitivity (dotted), which is the clipping ceiling of the trajectory ($0.64$) for linear regression and the output range ($10$ standard deviations) for California Housing; the released prediction is clipped to this range. The staircases show what the trade-off panels only show indirectly: a looser certificate (IBP, small $F$; small \texttt{BS}) saturates at the ceiling within a few substitutions, so the exponential discount has little to act on and $SS^\beta$ peaks at $k=1$ close to the ceiling, whereas a tight certificate (large $F$; full batch) stays well below the ceiling for all $k \leq 20$ and $SS^\beta$ peaks at $k = 2$ an order of magnitude lower. This is the mechanism behind the $0.25 \to 0.015$ and the $6 \to 0.08$ reductions in $SS^\beta$ reported in the main text.

\subsection{Number of shards ($T$)}

This ablation uses the Linear dataset and \textit{exactly} the same model and hyperparameters as in \S\ref{ssec:agtr_exp}, with the exception of the number of training points, which is now $4\!\times\!10^3$ instead of $4\!\times\!10^4$. Figure \ref{fig:pate_shard_ablation_linear} shows the MAE of PATE-R, PATE-AGT-R and AGT-R, the latter being restricted to one shard for $\epsilon$ ranging from $0.1$ to $10$, across an increasing number of shards, all while the dataset size is fixed. 
\begin{figure}[H]
  \centering
  \includegraphics[width=\columnwidth]{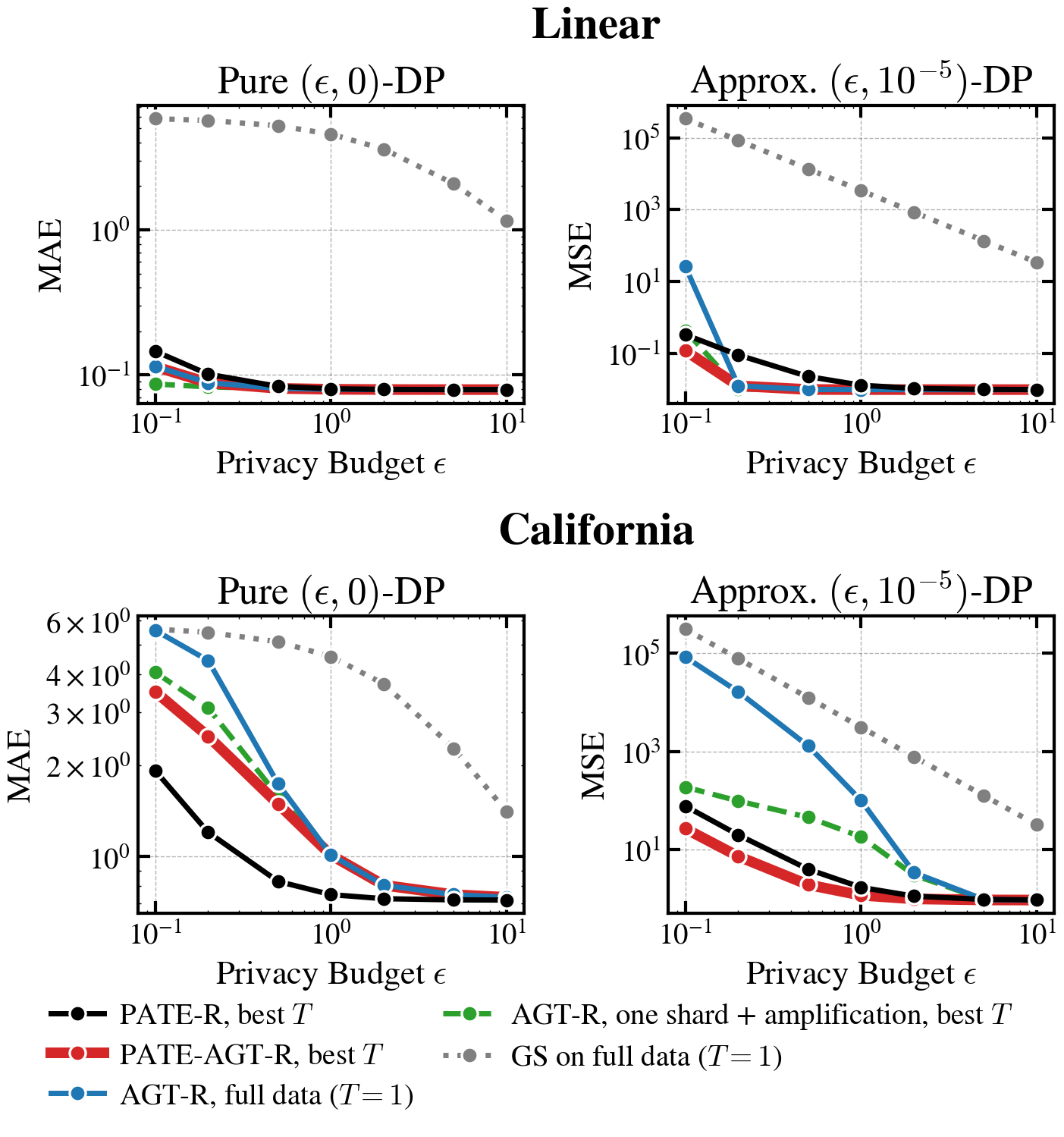}
  \caption{The comparison of Figure~\ref{fig:agtr_pate} on the raw scale with global sensitivity (grey, dotted). MAE under pure DP and test MSE under approximate DP ($\delta = 10^{-5}$), each point the mean over 200 noise draws.}
  \label{fig:agtr_gs}
\end{figure}

\subsection{Private Prediction Baselines with Global Sensitivity}\label{app:agtr_gs}
Figure~\ref{fig:agtr_gs} repeats Figure~\ref{fig:agtr_pate} on the raw error scale and with the global-sensitivity arm (Laplace in pure DP, Gaussian in approximate DP, both on the full data). On the excess scale of the main text this arm would sit one to five orders of magnitude above every other, which is why it is shown here. Accordingly, the y-axes report the full error of the private release, MAE under pure DP and MSE under approximate DP, rather than the excess over the non-private prediction; the non-private floor ($0.079$ MAE and $0.0098$ MSE on linear regression, $0.72$ MAE and $0.94$ MSE on California Housing) is where the AGT-based arms flatten out at large $\epsilon$, and subtracting it recovers Figure~\ref{fig:agtr_pate}.
\section{AGS Ablations}
\subsection{Sampling Ablation} \label{app:sampling_abl}
Intuitively, in standard PATE, MAE decreases as long as a single shard holds enough data for significant overlap without vote dilution, which also improves privacy by amplification. This is confirmed experimentally when looking at the leftmost subfigure and happens at $T=512$, i.e. $\approx 8$ points per shard, after which point the MAE starts increasing. The story changes somewhat in the case of PATE-AGT-R: the error starts to get larger earlier and the threshold at which utility starts decreasing is highly dependant on $\epsilon$. For example, MAE peaks at $T=8$ for $\epsilon=0.2$; in contrast the same peak is observed at $T=64$ when $\epsilon=2$. This indicates that under a certain specified privacy budget, noise calibrated to the parameter envelopes starts significantly affecting predictions. Whereas this budget is quickly reached as a function of the number of shards when the total privacy budget is already low, when this budget increases, the behaviour is observed at larger $T$. In the case of the first, the utility degradation is significant, while in the case of the latter it is not as visible due to the effect of mean aggregation. The rightmost panel of the figure can easily be interpreted: at $128$ shards, meaning $\approx 31$ datapoints per shard, AGT-R doesn't have enough data to compute tight parameter envelopes, which is why the test MAE skyrockets, regardless of total privacy budgets. Lower total privacy budgets start, expectedly, at a higher MAE due to noise causing utility degradation.

\begin{figure}[H]
 \centering
 \includegraphics[width=0.48\textwidth]{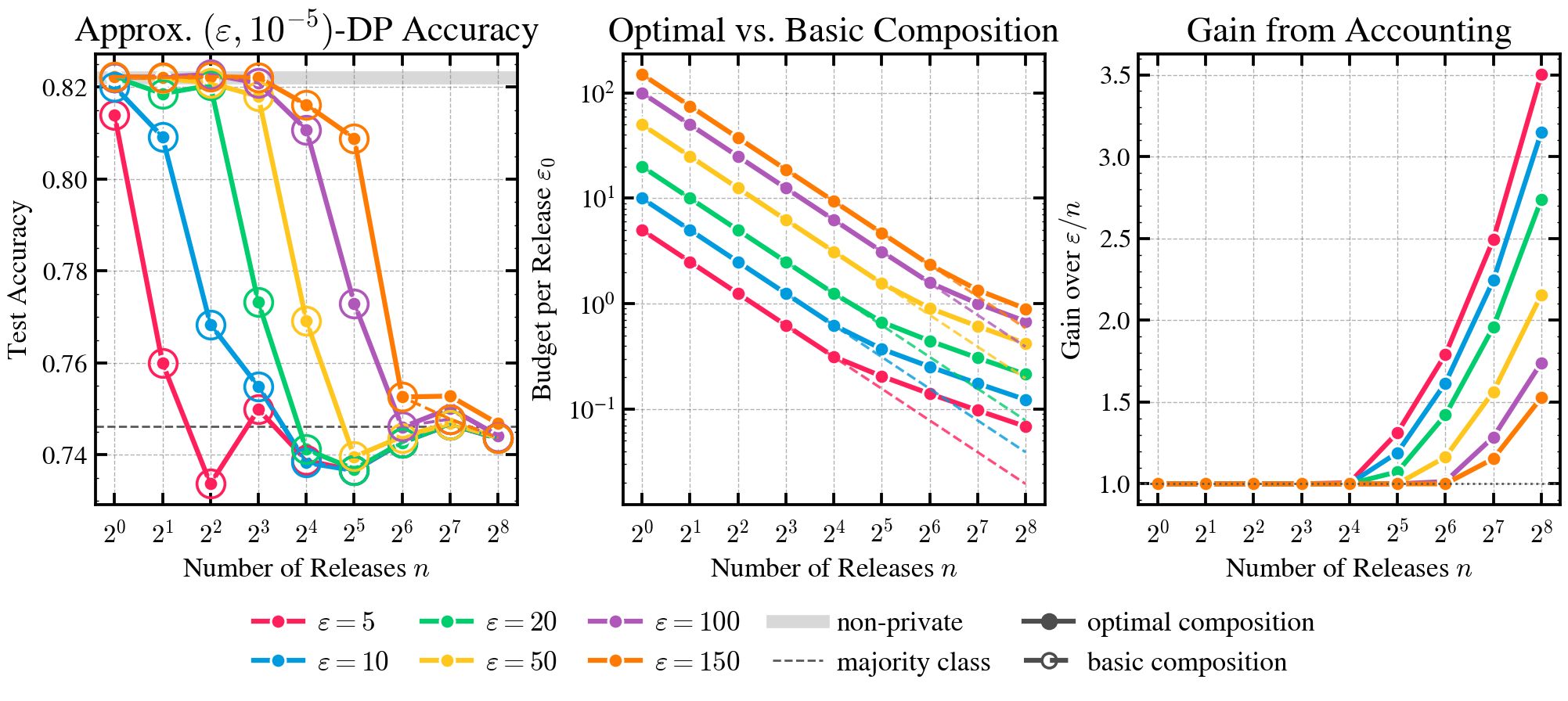}
 \caption{Ablation of the number of releases (i.e., sampling steps) at different, fixed total privacy budgets on the UCI Census Income Dataset \cite{adult1996}. \textbf{Left panel:} test accuracy against the number of releases $n$ under $(\epsilon, 10^{-5})$-DP, with the per-release budget set by optimal composition (filled) or basic composition (hollow), against the non-private accuracy (band) and the majority-class rate (dashed). \textbf{Middle:} the budget $\epsilon_0$ each release receives under optimal composition (solid) against the basic split $\epsilon/n$ (dashed). \textbf{Right:} the ratio of the two, i.e. the gain from accounting.}

 \label{fig:samplig_abl_adult}
\end{figure}

As previously mentioned in the beginning of \S\ref{sec:exp_ags}, we perform an ablation on the number of releases or `'sampling steps'` (i.e., the addition of noise and collapse of the parameter envelope) to observe the effect of composition on utility and privacy. We employ the UCI Census Income Dataset \cite{adult1996}, which has been used extensively to validate algorithmic techniques in DP literature \cite{chaudhuri2011differentially,dp_convex,dp_obj_pert_2023}.

The dataset consists of 30{,}222 private records. We train a logistic head on the first $d = 16$ PCA components with 300 steps of full-batch gradient descent split evenly into $n \in \{2^0, \dots, 2^8\}$ releases, each certified from the previous noisy release. We perform each ablation for a fixed total (approximate DP) privacy budget $\epsilon \in \{5,10,20,50,100,150\}$, $\delta=10^{-5}$, split across releases either evenly ($\epsilon/n$ each) or by optimal composition \cite{kairouz2015composition}.

Figure \ref{fig:samplig_abl_adult} shows the results. In the left panel, as the privacy budget increases, it takes more sampling steps for the test accuracy to decrease from $\approx 82\%$ to below $\approx 78\%$, which happens at $4$ releases for $\epsilon = 10$, at $16$ releases for $\epsilon = 50$ and at $64$ releases for $\epsilon = 150$, where $32$ releases still reach $80.9\%$. A larger total budget thus buys more free releases: the number of releases within one point of the non-private accuracy doubles from $8$ at $\epsilon = 50$ to $16$ at $\epsilon = 150$. By optimal composition, we only start gaining a larger $\epsilon$ per step (i.e., less noise injected) at the same cost $\delta$ when $n \geq 32$ (number of releases) for $\epsilon \leq 20$ and when $n \geq 2^7$ for $\epsilon \geq 100$, in which case the gains (up to $3.5\times$ at $\epsilon = 5$, $n = 2^8$) can be observed in the rightmost panel and the corresponding optimal $\epsilon$ in the middle panel. At $\epsilon \in \{100, 150\}$, this first shows in the accuracy ($75.0\%$ vs. $74.8\%$ and $75.3\%$ vs. $74.8\%$ at $n = 2^7$), although both remain near the majority-class rate ($74.6\%$).

\section{Hyperparameters}

\subsection{AGT-R} \label{app:hyperparam_agtr}

\begin{table}[H]
\centering
\scriptsize
\setlength{\tabcolsep}{3pt}
\renewcommand{\arraystretch}{1.08}
\begin{tabular}{@{}p{0.31\columnwidth}p{0.30\columnwidth}p{0.33\columnwidth}@{}}
\toprule
 & \textbf{Linear} & \textbf{California} \\
\midrule
\multicolumn{3}{@{}l}{\emph{Data}} \\
Train / test points & 40{,}000 / 2{,}000 & 16{,}512 / 4{,}128 \\
Output range (GS) & $[-6, 6]$ & $10$ std.\ units \\
\midrule
\multicolumn{3}{@{}l}{\emph{Baselines (Fig.~\ref{fig:agtr_pate})}} \\
Model & $y = wx + b$ & 8-64-1 ReLU \\
Steps / lr / $\gamma$ & 20 / 0.3 / 1 & 330 / 0.01 / 0.1 \\
Shards $T$ & $2^0, \dots, 2^{10}$ & $2^0, \dots, 2^6$ \\
Certificate & exact per step & AGT \cite{sosnin2025abstract} \\
Substitutions $k$ & $\le \min(1024, |\text{shard}|)$ & $\le 28$ \\
\midrule
\multicolumn{3}{@{}l}{\emph{Ablations (Fig.~\ref{fig:agtr_ablation})}} \\
Steps / lr / $\gamma$ & 16 (batch 5) / 0.1 / 0.1 & 330 / 0.01 / 0.1 \\
Ablated & $F \in \{\mathrm{IBP}, 1, \dots, 16\}$ & $\texttt{BS} \in \{250, \dots, 16{,}512\}$ \\
\midrule
\multicolumn{3}{@{}l}{\emph{Release (both datasets)}} \\
\multicolumn{3}{@{}l}{Pure DP: Cauchy$(\mathrm{SS}^\beta/(\epsilon-\beta))$, $\beta = \epsilon/2$; MAE of the clipped release} \\
\multicolumn{3}{@{}l}{Approx.\ DP: Laplace$(2\,\mathrm{SS}^\beta/\epsilon)$, $\beta = \epsilon/(2\ln(2/\delta))$, $\delta = 10^{-5}$; MSE} \\
\multicolumn{3}{@{}l}{$\epsilon \in \{0.1, 0.2, 0.5, 1, 2, 5, 10\}$; 200 noise draws per point} \\
\multicolumn{3}{@{}l}{WOR arm: $\epsilon_b = \ln(1 + T(e^{\epsilon} - 1))$, $\delta_b = T\delta$ on a shard of $N/T$} \\
\bottomrule
\end{tabular}
\caption{AGT-R settings for Figure~\ref{fig:agtr_real}. Training is full-batch (full-shard) clipped gradient descent. PATE-R adds Laplace$(\mathrm{GS}/(T\epsilon))$ (Gaussian in approximate DP) to the shard mean; PATE-AGT-R calibrates to $\frac{1}{T}\max_i \mathrm{SS}^\beta_i$.}
\label{tab:agtr_hyper}
\end{table}
\hfill

\subsection{AGS} \label{app:hyperparam_ags}
\begin{table}[H]
\centering
\scriptsize
\setlength{\tabcolsep}{3pt}
\renewcommand{\arraystretch}{1.08}
\begin{tabular}{@{}p{0.27\columnwidth}p{0.22\columnwidth}p{0.24\columnwidth}p{0.21\columnwidth}@{}}
\toprule
 & \textbf{Blobs} & \textbf{MNIST 0--4} & \textbf{IMDB} \\
\midrule
\multicolumn{4}{@{}l}{\emph{Data}} \\
Private / public / test & 200 / -- / 4k & 28.6k / 2k / 5.1k & 20k / 5k / 25k \\
Features & $\mathbb{R}^8$, 2 Gaussians & PCA-8 (pixels) & PCA-8 (mpnet) \\
\midrule
\multicolumn{4}{@{}l}{\emph{Model and training}} \\
Head ($P$) & linear (9) & logistic (45) & logistic (18) \\
Training & clipped SGD & \multicolumn{2}{l}{contractive clipped GD, full batch} \\
Steps / lr & 200 (b 20) / 0.5 & 1{,}000 / 2 & 1{,}000 / 2 \\
Clip $\gamma$ / contr.\ $\lambda$ & 0.05 / -- & 0.02 / 0.6 & 0.01 / 0.6 \\
\midrule
\multicolumn{4}{@{}l}{\emph{Certificate}} \\
Bound propagation & MILP, $F \in \{1, 2, 5, 10\}$ & CROWN & CROWN \\
Substitutions $k$ & $\{1, 2, 4, 8\}$ & $1, \dots, 4096, N$ & $1, \dots, 512, N$ \\
\midrule
\multicolumn{4}{@{}l}{\emph{Release (Corollary~\ref{thm:1_step_ags})}} \\
\multicolumn{4}{@{}l}{Pure DP: independent Cauchy$(2u/\epsilon)$, $\beta = \epsilon/(2P)$} \\
\multicolumn{4}{@{}l}{Approx.\ DP: independent Laplace$(2u/\epsilon)$, Gamma-tail $\beta$, $\delta = 10^{-5}$} \\
Budgets $\epsilon$ / draws & $1$--$100$ / 101 & $1$--$100$ / 41 & $1$--$100$ / 41 \\
Allocation & fixed ($\ell_1$) & \multicolumn{2}{l}{$\ell_1$ or anisotropic, chosen on public set} \\
\bottomrule
\end{tabular}
\caption{AGS settings for Figures~\ref{fig:blobs_ags} and~\ref{fig:combined_mnist_imdb}. $P$ is the number of released parameters and $u$ the smooth sensitivity of the $\ell_1$ cross-radius certificate. Public sets: MNIST 1{,}000 PATE pool and 1{,}000 selection; IMDB 2{,}500 and 2{,}500. Baselines are tuned on the same public sets.}
\label{tab:ags_hyper}
\end{table}

\begin{table}[H]
\centering
\scriptsize
\setlength{\tabcolsep}{3pt}
\renewcommand{\arraystretch}{1.08}
\begin{tabular}{@{}p{0.34\columnwidth}p{0.62\columnwidth}@{}}
\toprule
 & \textbf{SST-2 (hybrid)} \\
\midrule
\multicolumn{2}{@{}l}{\emph{Data}} \\
Private / public / test & 62{,}349 / 872 (GLUE validation) / 5{,}000 \\
Features & PCA-8 of the DP-LoRA all-mpnet-base-v2 embedding \\
\midrule
\multicolumn{2}{@{}l}{\emph{Encoder (DP-LoRA, $(\epsilon_1, 5\cdot 10^{-6})$-DP)}} \\
Adapter & rank 16 on the q, v projections of every layer \\
DP-SGD (Opacus) & batch 1{,}024, 3 epochs, clip 1, lr $10^{-3}$, PRV accountant \\
\midrule
\multicolumn{2}{@{}l}{\emph{Head (AGS, $(\epsilon_2, 5\cdot 10^{-6})$-DP)}} \\
Head ($P$) / training & logistic (18) / contractive clipped GD, full batch \\
Steps / lr / $\gamma$ / $\lambda$ & 150 / 2 / 0.05 / 0.3 \\
Certificate & CROWN; $k = 1, \dots, 512, N$ \\
Release & as Table~\ref{tab:ags_hyper}; 41 draws \\
\midrule
\multicolumn{2}{@{}l}{\emph{Pipeline}} \\
Total $\epsilon = \epsilon_1 + \epsilon_2$ & $0.5, 1, 2, 3, 5, 8$ at $\delta = 10^{-5}$ \\
Chosen on public set & the split $(\epsilon_1, \epsilon_2)$, the allocation, the threshold \\
\bottomrule
\end{tabular}
\caption{Settings of the SST-2 hybrid (Figure~\ref{fig:hybrid_ags_dpsgd}). The end-to-end arm is the DP-LoRA model scored with its own head; the frozen-encoder arms use the same head on the un-adapted embedding.}
\label{tab:sst2_hyper}
\end{table}

\end{document}